%% file: main.tex
\documentclass{article}

\usepackage{microtype}
\usepackage{graphicx}
\usepackage{subcaption}
\usepackage{booktabs} 

\usepackage{hyperref}

\usepackage{algpseudocode}
\usepackage[preprint]{icml2026}

\usepackage{amsmath}
\usepackage{amssymb}
\usepackage{mathtools}
\usepackage{amsthm}

\usepackage[capitalize,noabbrev]{cleveref}

\theoremstyle{plain}
\newtheorem{theorem}{Theorem}[section]

\newtheorem{lemma}[theorem]{Lemma}

\theoremstyle{definition}
\newtheorem{definition}[theorem]{Definition}
\newtheorem{assumption}[theorem]{Assumption}
\theoremstyle{remark}
\newtheorem{remark}[theorem]{Remark}

\usepackage[textsize=tiny]{todonotes}

\icmltitlerunning{Scalable Batch Correction for Cell Painting via  Batch-Dependent Kernels and Adaptive Sampling}

\begin{document}

\twocolumn[
  \icmltitle{Scalable Batch Correction for Cell Painting via 
  \\ Batch-Dependent Kernels and Adaptive Sampling}

  \icmlsetsymbol{equal}{*}

  \begin{icmlauthorlist}
    \icmlauthor{Aditya Narayan Ravi}{uiuc}
    \icmlauthor{Snehal Vadvalkar}{abbvie}
    \icmlauthor{Abhishek Pandey}{abbvie}
    \icmlauthor{Ilan Shomorony}{uiuc}
  \end{icmlauthorlist}

  \icmlaffiliation{uiuc}{Department of Electrical and Computer Engineering, University of Illinois Urbana-Champaign}
  \icmlaffiliation{abbvie}{AbbVie, USA}

  \icmlcorrespondingauthor{Aditya Narayan Ravi}{anravi2@illinois.edu}

  \icmlkeywords{Machine Learning, ICML}

  \vskip 0.3in
]



\printAffiliationsAndNotice{}  



\begin{abstract}
Cell Painting is a microscopy-based, high-content imaging assay that produces rich morphological profiles of cells and can support drug discovery by quantifying cellular responses to chemical perturbations. At scale, however, Cell Painting data is strongly affected by batch effects arising from differences in laboratories, instruments, and protocols, which can obscure biological signal. We present BALANS (Batch Alignment via Local Affinities and Subsampling), a scalable batch-correction method that aligns samples across batches by constructing a smoothed affinity matrix from pairwise distances. Given $n$ data points, BALANS builds a sparse affinity matrix $A \in \mathbb{R}^{n \times n}$ using two ideas. (i) For points $i$ and $j$, it sets a local scale using the distance from $i$ to its $k$-th nearest neighbor within the batch of $j$, then computes $A_{ij}$ via a Gaussian kernel calibrated by these batch-aware local scales. (ii) Rather than forming all $n^2$ entries, BALANS uses an adaptive sampling procedure that prioritizes rows with low cumulative neighbor coverage and retains only the strongest affinities per row, yielding a sparse but informative approximation of $A$. We prove that this sampling strategy is order-optimal in sample complexity and provides an approximation guarantee, and we show that BALANS runs in nearly linear time in $n$. Experiments on diverse real-world Cell Painting datasets and controlled large-scale synthetic benchmarks demonstrate that BALANS scales to large collections while improving runtime over native implementations of widely used batch-correction methods, without sacrificing correction quality.
\end{abstract}

\section{Introduction}\label{sec:introduction}

\input{intro}

\section{Key Concepts}
\label{sec:method}

\input{method}

\section{The BALANS Algorithm}
\label{sec:balances}
\input{balances}

\section{Experiments}
\label{sec:experiments}
\input{experiments}

\section{Concluding Remarks}
\label{sec:conclusion}
\input{conclusions}

\section{Impact Statement}
This paper presents work whose goal is to advance the field of machine learning. There are many potential societal consequences of our work, none of which we feel must be specifically highlighted here.

\medskip

\newpage

\bibliographystyle{unsrtnat}
\bibliography{references}
\clearpage
\onecolumn
\appendix
\section{Proof of Theorems}
\label{app:theory}
\input{theory}
\newpage
\section{Algorithms}
\label{app:algorithms}
\input{appendix_algo}
\newpage
\section{Run Time}
\label{app:run_times}
\input{run_time}
\newpage
\section{Ablations}
\label{app:ablations}
\input{ablations}
\newpage
\section{Experiment Details}
\label{app:dataset}

\input{dataset}
\newpage
\section{Evaluation Methods}
\label{app:methods}
\input{evaluation_method}

\newpage
\section{Metrics}
\label{app:metrics}

\input{metrics}

\newpage
\section{Real World Experiment Details and Full Results}
\label{app:realworld}

\input{realworld}
\newpage
\section{Synthetic Experiments and Results}
\label{app:synthetic}

\input{synthetic}
\section{Related Works}
\label{app:works}
\input{works}

\end{document}

%% file: intro.tex
Image-based profiling has emerged as a powerful tool for studying how cells respond to different treatments~\citep{regev2017human}.
Using high-throughput microscopy, researchers can detect changes in the shape and structure of cells 
caused by chemical or genetic perturbations~\citep{rohban2017systematic}. 
This allows the identification of promising compounds based on the cellular effects they produce, 
offering a way to accelerate drug discovery.
A widely used image-based profiling method is the Cell Painting assay \citep{bray2016cell, caicedo2017data}. 
Originally developed by \citet{bray2016cell}, Cell Painting uses six fluorescent dyes to label eight distinct cellular components, which are imaged across four or five channels. 
This approach provides high-content, single-cell resolution data 
that is cost-effective~\citep{cimini2023optimizing} and provides complementary information to \citep{cutiongco2020predicting, wawer2014toward} and protein-based assays \citep{dagher2023nelisa}.
Cell Painting has been successfully integrated with feature extraction techniques to generate numerical morphological profiles. 
A widely used tool is CellProfiler \citep{carpenter2006cellprofiler}, which extracts profiles based on engineered features such as texture, intensity, and shape descriptors from segmented cellular components. 
More recently, deep learning-based methods like DeepProfiler \citep{moshkov2024learning} have been introduced, using architectures such as EfficientNet \citep{tan2019efficientnet} to extract profiles. 

\textbf{Batch effects obscure true biological signal:}
Large-scale experiments often span multiple sources, (labs, protocols, microscopes) introducing batch-dependent artifacts that obscure true biological signals. 
The signal of interest is the morphological effect of a perturbation, independent of technical noise (e.g., plate or staining artifacts) or unrelated biological variation (e.g., cell density or nonspecific concentration effects). 
These confounding factors persist through standard pre-processing of the data.
A comprehensive analysis of batch effects in Cell Painting data and the performance of existing batch correction techniques was provided by~\citet{arevalo2024evaluating}.



\begin{figure*}[t]
\centering
\includegraphics[width=\textwidth]{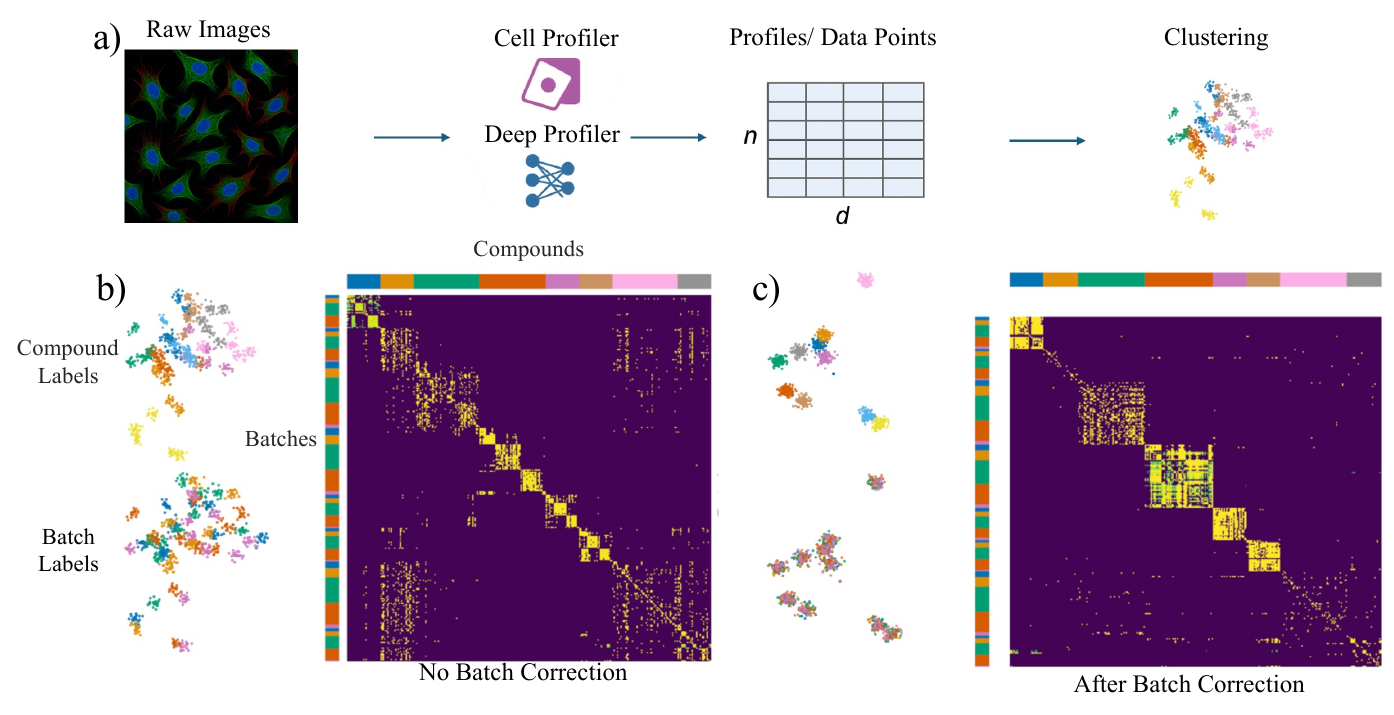}
\caption{
\textbf{(a)} 
Standard Cell Painting analysis pipeline based on extracting features for each of the $n$ cells.
\textbf{(b)} 
Before correction: when UMAP plot is colored by compound labels (top), we see multiple clusters corresponding to same compound.
When colored by batch, we see that the different clusters for the same compound correspond to different batches.
The affinity heatmap reveals block patterns aligned with batches. 
\textbf{(c)} After \textsc{BALANS}: Clusters corresponding to different compounds sharpen, batches mix, and affinities align with compound labels. 
Heatmap rows are ordered by batch (left color bar), columns by label (top color bar).
}
\label{fig:batch_correction_pipeline}
\end{figure*}


To illustrate batch effects, we analyze real Cell Painting data using a curated set of positive control compounds from the JUMP-CP consortium \citep{chandrasekaran2023jump}, which spans four datasets, 13 sources, and three microscopes. 
These compounds induce diverse morphological phenotypes and are thus ideal for visualizing batch effects. 
We used profiles extracted via CellProfiler 
The resulting affinity matrix is shown as a heatmap in Figure~\ref{fig:batch_correction_pipeline}(b), prior to any batch correction, revealing a block structure aligned with batch labels (colors on the left of heatmap).
The goal of batch correction is to perform a transformation on the cell feature vectors such that the resulting affinity matrix exhibits block structure that aligns with the compound labels, as 
illustrated in Figure~\ref{fig:batch_correction_pipeline}(c).


\textbf{Affinity-Based Denoising:} 
Denoising methods are widely used to enhance biological signal by smoothing over similar data points~\citep{van2018recovering, huang2018saver, li2018accurate}. 
They have shown success in mRNA and protein profiling \citep{tran2020benchmark}. 
For instance, \textsc{MAGIC} \citep{van2018recovering} constructs a 
carefully designed
affinity matrix to average each point with its neighbors. 
We take inspiration from this denoising technique and seek to build \emph{batch-aware} affinity matrices that can be used to remove batch effects.
One challenge that arises is that computing all pairwise affinities can be computationally expensive when we have many data points, and 
Cell Painting experiments 
often scale
to hundreds of thousands or even millions of single-cell profiles \citep{arevalo2024evaluating} across many different batches. 
Hence, we seek to design a \emph{batch-aware} affinity matrix that can be used for batch correction in a \emph{scalable} way,
allowing the efficient batch correction of large datasets.

\textbf{BALANS:} In this work, we introduce BALANS---Batch Alignment via Local Affinities and Subsampling---a novel, scalable 
batch correction method. 
Unlike traditional affinity-based approaches, BALANS incorporates batch label information into the construction of the smoothing affinity matrix. 
Scalability is attained by only computing a
\textit{sparse} version of the full affinity matrix, which allows each data point to be smoothed based on a small number of neighbors.

The first key idea behind BALANS is to make the notion of ``closeness'' depend on the batch. 
For any pair of data points $i$ and $j$ corresponding to unique profiles, BALANS adjusts their similarity by computing a local scale that depends on the batch of $j$. 
More precisely, if $b_j$ is the batch label associated with $j$, it computes the distance from $i$ to its $k$-th nearest neighbor within the batch of $j$, denoted $\sigma_{i,j}:= d_{i, (k)}^{b_j} $, and uses this as a scaling factor in a Gaussian kernel. 
The affinity between $i$ and $j$ is then defined as
\(
A_{ij} = \exp\left(-\|x_i - x_j\|^2/{\sigma_{i,j}^2}\right).
\)
This scaling has two effects. 
It increases the relative influence of compatible points from distant or less tightly clustered batches and suppresses misleading affinities from nearer or more tightly clustered batches. 
In doing so, BALANS selectively identifies cross-batch neighbors that are truly similar despite batch-induced distortions.

The second idea in BALANS is to avoid computing the full $n \times n$ affinity matrix, which is infeasible at scale. 
Instead, BALANS selects a subset $S \subseteq [n]$ with $|S| \ll n$ data points and computes their affinity rows. 
Rather than sampling uniformly, BALANS uses a coverage-based adaptive strategy: at each step $t$, the next point $i_t$ is selected with probability proportional to the inverse of its cumulative affinity to the previously sampled points. 
This prioritizes data points whose neighborhoods have not yet been well represented in earlier samples.
Each computed affinity row is sparsified using an elbow detection method, retaining only the most informative entries. 
The resulting matrix $A_S \in \mathbb{R}^{|S| \times n}$ captures a partial view of the full affinity matrix $A \in \mathbb{R}^{n \times n}$. 
The remaining entries are inferred via a low-rank approximation based on $A_S$ \citep{williams2000using}.
BALANS is summarized in Algorithm~\ref{alg:balances}.

We theoretically prove that this sampling strategy guarantees a similar number of rows from each cluster with high probability, using only $|S| = O(K \log K)$ samples. 
This ensures coverage of the underlying structure, enabling a low rank approximation of $A$. 
 We show that the reconstruction error $\|\widehat{A} - A\|_{\rm op}$ is bounded with high probability, where $\widehat{A}$ is constructed using the sampled rows.
BALANS runs in \emph{almost-linear time} with respect to the number of data points, making it scalable to large datasets. 
Importantly, the full affinity matrix $A$ is never explicitly constructed. 
Instead, BALANS applies a sparse, smoothing operator defined based on $A_S$ to the data matrix $X$.

\textbf{Experiments on Large Datasets:} 
We evaluate BALANS on real-world datasets from the JUMP Cell Painting Consortium \citep{chandrasekaran2023jump} and BBBC \citep{ljosa2012annotated}, with different scenarios using either CellProfiler or DeepProfiler features (Table~\ref{tab:table2}). 
Across eight benchmark scenarios, BALANS performs on par with state-of-the-art batch correction methods, and in the largest-scale, highest-diversity setting (Table~\ref{tab:table2}), it achieves over a 30\% improvement in average evaluation scores relative to the uncorrected baseline. 
We also report runtime comparisons, showing that BALANS is fast and competitive with existing methods.
In addition to accuracy, BALANS is highly efficient. Despite being implemented in Python, it runs faster than many optimized native implementations. 
On synthetic benchmarks designed to mimic real batch structure, while maintaining high correction quality, it scales to datasets with up to 5 million points in under an hour.
These results highlight BALANS as a scalable and effective solution for batch correction in high-content imaging.

\textbf{Related Works:} 
Cell Painting has been extensively explored using deep learning-based feature representations beyond CellProfiler and DeepProfiler \citep{doron2023unbiased, wong2023deep, kraus2024masked}. 
Batch correction has been extensively studied across a range of biological data modalities \citep{ando2017improving, vcuklina2021diagnostics, yu2023correcting, tran2020benchmark}. 
Local bandwidth estimation \citep{zelnik2004self, herrmann1997local, knutsson1994local} and low-rank approximation \citep{williams2000using, drineas2005nystrom, kumar2012sampling,musco2017recursive} also have well-established foundations. 
A comprehensive review of related works is provided in Appendix~\ref{app:works}.

\textbf{Comparison to BBKNN:}
 BBKNN \citep{Polanski2020BBKNN} is a tool that constructs a batch-aware KNN graph, and is related to BALANS. But its functionality differs fundamentally from BALANS. 
 BBKNN produces only a graph structure and does not generate corrected profiles or embeddings. 
 Although it defines an affinity matrix, BBKNN does not provide a way to efficiently apply this matrix to the data. 
 BALANS, in contrast, is explicitly designed to output corrected profiles without ever forming the full matrix.  Because BBKNN cannot produce corrected profiles, comparisons are limited to graph-based metrics.
A more detailed discussion, along with quantitative results, is provided in the Appendix~\ref{app:realworld}.

%% file: method.tex
In Cell Painting assays, cells are subjected to various perturbations (e.g., chemical or genetic), imaged using fluorescence microscopy, and then processed through a feature extraction pipeline to obtain profiles representing cellular morphology. 
Consider $n$ data points $x_1,\dots,x_n\in\mathbb{R}^d$ obtained in this way.  
Each point is annotated with a \emph{known}  batch label
$b_i\in\{1,\dots,B\}, \; i=1,\dots,n$
reflecting differences such as laboratory, protocols, or microscopes used. 
In addition, every point belongs to an \emph{unknown} biological cluster. 
A true biological cluster is a group of points that share a common underlying phenotype or similar phenotypes, regardless of their treatment labels or batch assignments. 
Each point thus has a (unknown) cluster label  $c_i\in\{\mathcal C_1, \mathcal C_2\dots,\mathcal C_K\}, \; i=1,\dots,n$.
Our goal is to identify, for each data point, a small set of neighbors that reflect true biological similarity with the datapoint. 
Using its neighbors we are to construct a smoothing operator that corrects it.

To identify neighbors for smoothing, one relies on a similarity measure between samples. 
These similarities are commonly encoded in an affinity matrix, which can be constructed in various ways (kernel functions \citep{hofmann2008kernel}, nearest-neighbor graphs \citep{abbasifard2014survey}, or learned distance metrics \citep{kulis2013metric}). 
However, when affinities are computed directly from the data without correction, they typically reflect both cluster membership and external sources of variation, including batch dependent effects and measurement noise.
To account for batch-dependent distortions in pairwise similarities, we define a \textbf{batch-dependent local scale} for each point, capturing how distant each batch appears from its perspective. 
These scales are then used to compute affinities via a local-scale Gaussian Kernel \citep{zelnik2004self}, where the similarity between two points depends on this scale.


\subsection{Batch-Dependent Local Scales}\label{ssec:bclb}

Let $d_{ij} = \lVert x_i - x_j \rVert_2$ denote the Euclidean distance between data points $x_i$ and $x_j$. 
A standard method for defining affinities from pairwise distances is using the Gaussian kernel $A_{ij} = \exp\left(-d_{ij}^2/\sigma^2\right)$,
where $\sigma > 0$ is a global scale controlling the scale of locality. 
This kernel assigns higher affinity to closer points and is widely used in spectral clustering \citep{zelnik2004self}. 
However, the assumption of a fixed global scale $\sigma$ is problematic when different batches may exhibit distinct density patterns, noise levels or distances. 
Thus, this can lead to poor affinity estimates.

This issue can be observed directly on real data. 
Consider Figure~\ref{fig:batch_correction_pipeline}(b), which shows a correlation matrix constructed as above, between data points that correspond to positive control compounds.
Although these compounds are known to induce strong and diverse phenotypes, the resulting structure exhibits strong clustering within batches, which is undesirable.
Therefore to address this issue, we define a batch-dependent local scale and the affinity associated with it.

\begin{definition}[Batch-dependent local scale; Affinity]\label{def:bclb}
Let $d_{ij} = \lVert x_i - x_j \rVert_2$ denote the Euclidean distance, and let $b_j$ be the batch of $x_j$. 
Let 
$
\sigma_{ij}^2 := \left(d_{i, (k)}^{b_j}\right)^2,
$
where $ d_{i, (k)}^{b_j} $ is the distance from $x_i$ to its $k$-th nearest neighbor within batch $b_j$. 
Then the batch-dependent affinity is 
$
A_{ij} := \exp\left(-d_{ij}^2/\sigma_{ij}^2\right).
$
\end{definition}

To illustrate its effect on real data, we apply the batch-dependent affinity to data points corresponding to positive controls and construct a correlation matrix. 
As shown in Figure~\ref{fig:batch_correction_pipeline}(c), the resulting heatmap reveals improved structure, with clearer cross-batch alignment at a compound level.
Computing batch-dependent affinities by adapting local scales across batches recovers true neighbors under batch effects, but is computationally expensive. It requires $O(n^2)$ distances and repeated $k$-NN searches to compute $\sigma_{ij}$ for all pairs.  
To ensure scalability, we estimate the affinity matrix by sampling a subset of rows, each of which provides access to a full neighborhood. 
From each row we obtain the batch-dependent local scale for that point.  
This form of row-based sampling is also known as landmark sampling \citep{kumar2012sampling}. 
Unlike randomly sampling entries of $A$, it preserves the information needed to compute batch-dependent scales.  
In the next section, we show that accurate approximation of $A$ is possible using only a small subset of rows.


\subsection{Low Rank Matrix Approximation via Adaptive Landmark Sampling}\label{ssec:lowrank}

Before we proceed, we first require a suitable model on the structure of the affinity matrix.
Going back to our experiment on positive controls, we note that applying the batch-dependent scale reveals a block structure in the affinity matrix (Figure~\ref{fig:batch_correction_pipeline}(c)), reflecting a block diagonal matrix overlaid with noise. 
This motivates a natural decomposition into a block diagonal matrix component plus noise.
To formalize this, we model the observed affinity matrix $\widetilde A \in \mathbb{R}^{n \times n}$ as
\(
\widetilde A = A_{0} + E,
\)
where $A_{0}$ is a block diagonal matrix encoding clustering structure, and $E$ denotes a residual noise independent of $A_0$. 
We assume \(A_{0}\) is symmetric and low-rank, with the following structure.

\begin{assumption}[Block-structured affinities]\label{assump:block}
There exists a partition of the index set $[n]$ into true biological clusters $\mathcal{C}_1, \dots, \mathcal{C}_K$, where each cluster $\mathcal{C}_k$ has size $n_k$. 
The low-rank affinity matrix $A_{0} \in \mathbb{R}^{n \times n}$ then takes the block form
$
A_{0} := \mathrm{diag}(A^{(1)}, A^{(2)}, \dots, A^{(K)}),
$
where each diagonal block $A^{(k)} \in \mathbb{R}^{n_k \times n_k}$ represents the affinities within biological cluster $k$ and is given by
$A^{(k)} = p_k \cdot \mathbf{1}_{n_k \times n_k},$
with $p_k > 0$ and $ \mathbf{1}_{n_k \times n_k} \in \mathbb R^{n_k\times n_k}$ is the matrix of all ones. 
All off-diagonal blocks of $A_{0}$ are identically zero. The overall matrix $A_{0}$ is symmetric, positive semidefinite, and has rank $K$.
\end{assumption}
Note that while the off-diagonal blocks of $A_0$ are zero, this does not imply that there is no affinity between points in different clusters. 
Rather, the block-diagonal structure is a modeling choice for the purpose of selecting close data points with high similarities to the given data point.

We are still required to model the noise matrix $E$.
A standard way to model noise is using
Gaussian Orthogonal Ensembles (GOE) \cite{mehta2004random}, which are symmetric random matrices with independent Gaussian entries (up to symmetry) and are widely used to model noise. 
Importantly, their singular values concentrate around $O(\sqrt{n})$ with high probability, making them a useful model. 
However, in our setting, we are constrained to matrices with strictly positive entries, as $\tilde A$ is still a affinity matrix. 
We therefore adopt a randomized analogue with positive entries that retains the spectral scaling of GOEs. 

\begin{assumption}[Exponential Noise Matrix]\label{assump:eoe}
Let $E \in \mathbb{R}^{n \times n}$ be a symmetric random matrix with $E_{ij} = E_{ji} \sim \mathrm{Exp}(\lambda)$ independently for all $i < j$, and $E_{ii} \sim \mathrm{Exp}(\lambda/2)$, for some $\lambda > 0$. 
\end{assumption}

This ensures that $E$ is symmetric, and exhibits similar singular values as GOEs, while respecting entrywise positivity.
Our goal is to recover $A_0$ using a small subset of rows from $\widetilde{A}$.
To approximate $A_0$, we sample a subset $S \subseteq [n]$ of size $s$ and form the Nyström estimator \citep{kumar2012sampling}  
$\widehat{A} := \widetilde{A}_S^\top \widetilde{A}_{S,S}^+ \widetilde{A}_S$,  
where $\widetilde{A}_S \in \mathbb{R}^{s \times n}$ contains the sampled rows and $\widetilde{A}_{S,S} \in \mathbb{R}^{s \times s}$ is their submatrix.  
Here, $A^+$ denotes the Moore–Penrose pseudoinverse \citep{courrieu2008fast}, the unique matrix satisfying $AA^+A = A$ and $A^+AA^+ = A^+$.  
This yields a low-rank approximation of $A_0$ using the subspace defined by $S$.

How do we assess the quality of this estimator? 
The quality of reconstruction depends on how well the sampled rows capture connectivity across the data. 
Since each row of $\widetilde{A}$ represents affinities between a point and the rest of the dataset, a good sampling algorithm will sample informative rows across different regions of the data.
To formalize this, we define a notion of \emph{coverage}. Let $\text{cov}(i)$ denote the cumulative affinity received by point $i$ from previously sampled rows, given by
$
\text{cov}(i) := \sum_{j \in S_{\text{past}}} \widetilde{A}_{ji},
$
where $S_{\text{past}}$ denotes the set of indices sampled so far. Intuitively, points with low cumulative coverage have not yet been well represented in the approximation.
Sampling is performed in \emph{blocks of size $K$}.
At the start of each block, a temporary coverage vector is initialized, and the sampling distribution is uniform. 
At each step $t$ within the block, an index $i_t$ is drawn from a distribution $q^{(t)}$ that favors under-covered points, with $q^{(t)}_i \propto 1 / \text{cov}^{(t)}(i)$, where $\text{cov}^{(t)}(i)$ is the accumulated coverage over the current block. After each block, the coverage vector is reset to $0$ and sampling distribution is reset to uniform.
While Assumption~\ref{assump:eoe} ensures that the affinity matrix $\widetilde{A}$ has strictly positive entries (so $\text{cov}(i) > 0$ for all $i$), practical implementations involve sparsifying $\widetilde{A}$ (see Section~\ref{sec:balances}), which can introduce zero coverage. In such cases, we modify the sampling rule as described in Algorithm~\ref{alg:adaptive-sampling}.
This alternating scheme ensures that sampling explores new regions of the dataset and targets under-covered areas. 
We analyze this novel sampling scheme and show that the estimator $\widehat{A}$ obtained from a small number of rows approximates $A_0$ with spectral guarantees.

\subsection{Theoretical results}

We now formalize the guarantees provided by coverage-based sampling for approximating the underlying affinity matrix $A_0$.
Recall that the dataset is partitioned into $K$ biological clusters $\mathcal{C}_1, \dots, \mathcal{C}_K$ with sizes $n_1, \dots, n_K$, and total size $n = \sum_{k=1}^K n_k$. 
Let $m$ be the number of rows sampled (We will discuss how $m$ is chosen through a stopping criteria in Section~\ref{sec:balances}). 
\begin{theorem}[Cluster Coverage Guarantee]\label{thm:coverage}
Let $T_k$ denote the number of sampled rows from $\mathcal{C}_k$. 
 Then given a constant $\delta > 0$, there exists another constant $C:=C(\delta)$ such that if $m \geq Ct K \log K$, then
\(
\mathbb{P} \left( T_k \geq t, \;\forall\; k \right) \geq 1 - \delta.
\)
\end{theorem}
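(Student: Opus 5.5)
The plan is to exploit the block structure of the sampling scheme. Sampling proceeds in independent blocks of size $K$: coverage is reset at the start of each block, so the blocks are i.i.d. (or at least each block, conditioned on the past, has the same law). It therefore suffices to show that within a single block every cluster is hit with probability bounded below by a constant multiple of $1/K$. Then a coupon-collector style argument over $m/K$ blocks gives $T_k\ge t$ for all $k$ once $m \ge C t K\log K$.

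\textbf{Step 1: one step of the coverage rule under Assumptions~\ref{assump:block} and~\ref{assump:eoe}.} Suppose that inside a block the rows $i_1,\dots,i_{s}$ have been drawn. For a point $i\in\mathcal C_k$, the expected coverage is
\[
\mathbb E\,\mathrm{cov}(i)= p_k\cdot\#\{r\le s: i_r\in\mathcal C_k\} + s/\lambda .
\]
A point in an already-hit cluster thus has coverage inflated by $p_k$ per hit, while a point in an unhit cluster has coverage of order $s/\lambda$ only. Since $q_i\propto 1/\mathrm{cov}(i)$, unhit clusters receive relatively more mass. I would show that, conditionally on the past, the next draw lands in a given not-yet-hit cluster $\mathcal C_k$ with probability at least $c\,n_k/n$. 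The natural argument is that every $1/\mathrm{cov}(i)$ lies within constant factors of its mean, because the noise terms are sums of $s$ independent exponentials. The sampling is therefore never worse than uniform up to a constant factor. The first draw in a block is uniform, which gives exactly $n_k/n$.

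\textbf{Step 2: balance.} To get the $1/K$ per-block hit rate, I need $n_k/n \gtrsim 1/K$, that is, roughly balanced clusters. I would state this as an implicit assumption, with $n_{\min}\ge c_0 n/K$. Alternatively, I would try to use the adaptive rule to obtain it: once the large clusters have been hit, their coverage grows like $p_k$ times the number of hits, which pushes mass toward small clusters. I expect this to be the main obstacle. Making the adaptive boost rigorous requires lower bounding $p_k$ against $1/\lambda$ and controlling the exponential sums uniformly over $i$, via a union bound with Bernstein for sub-exponential variables. My first attempt is the crude route: show that each step hits cluster $k$ with probability at least $c/K$ regardless of history, using only the constant-factor comparison with uniform from Step 1 together with balance.

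\textbf{Step 3: concentration and union bound.} With a per-step probability of at least $c/K$ of hitting $\mathcal C_k$, conditionally on the past, $T_k$ stochastically dominates a $\mathrm{Bin}(m,c/K)$ variable. A standard supermartingale or coupling argument makes this domination valid under adaptive dependence. By a Chernoff bound,
\[
\mathbb P(T_k<t)\le \exp\!\big(-\tfrac{1}{8}\,cm/K\big)
\]
whenever $cm/K\ge 2t$. A union bound over the $K$ clusters gives failure probability at most $K\exp(-cm/(8K))$. This is at most $\delta$ once $m\ge (8/c)\,K(\log K+\log(1/\delta)) + 2tK/c$, and both terms are at most $C(\delta)\,tK\log K$ for $t\ge1$ and $K\ge2$, which proves the claim.
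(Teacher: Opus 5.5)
Your crude route has a genuine gap. Steps 1--3 rest on a per-step bound: every draw lands in $\mathcal C_k$ with probability at least $c\,n_k/n$, regardless of history. That bound is false for this sampler, because the inverse-coverage rule is designed to \emph{suppress} clusters already hit in the current block. In the noiseless case the rule samples uniformly among points of zero coverage, so once $\mathcal C_k$ is hit in a block it is drawn again in that block with probability exactly $0$. In the low-noise regime the paper works in ($p_{\min}\lambda>\delta n$), a point of an already-hit cluster has weight at most $1/p_{\min}$. A point of an unhit cluster has weight of order $\lambda/s$, so the re-hit probability is $o(n_k/n)$. Your justification, that each $1/\mathrm{cov}(i)$ is within constant factors of its mean, does not give a comparison with uniform. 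Those means differ between hit and unhit clusters by a factor of order $p_k\lambda/s$, and for $s=1$ the reciprocal of an exponential has infinite mean. So the $\mathrm{Bin}(m,c/K)$ domination in Step 3 has no basis.

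There are two further problems. The balance hypothesis $n_{\min}\ge c_0 n/K$ is not in the statement and is strictly stronger than anything the paper assumes; its growing-cluster regime explicitly allows vanishing cluster proportions. Under balance your argument would prove the bound equally well for plain uniform sampling. The point of the theorem, per the paper's remark contrasting with uniform sampling, is that $C$ depends only on $\delta$, irrespective of cluster sizes. Your opening reduction is also off by a factor of $K$: a per-block hit probability of order $1/K$ over $m/K$ blocks gives only about $m/K^2$ expected hits.

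The missing idea is the one you mention in Step 2 and then set aside. Work per block, and use the suppression of hit clusters to show that each block hits every cluster with probability bounded below by a \emph{constant}, not by $c/K$. The paper does exactly this:
\begin{itemize}
\item In the noiseless case, every block hits each cluster exactly once.
\item In the noisy case, it bounds the conditional probability $r_k$ that a still-missing $\mathcal C_k$ is missed at the last step of a block. The inverse-coverage mass outside $\mathcal C_k$ is at most $n/p_{\min}$. Inverse-gamma concentration gives $\sum_{i\in\mathcal C_k}S_i^{-1}\ge n_k\lambda/(2(K-2))$ with $S_i\sim\mathrm{Gamma}(K-1,\lambda)$. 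Hence $r_k\le\delta$ under the low-noise condition.
\item The probability that $\mathcal C_k$ is missed in all of $C\log K$ blocks is then at most $\delta^{C\log K}=K^{-\gamma}$. A union bound handles all $K$ clusters.
\item A coupon-collector bound inside each cluster (Lemma~\ref{lem:partial-coupon}) then yields $t$ distinct rows from about $K\cdot C(1+\gamma)\,t\log K$ draws.
\end{itemize}
In that argument the $\log K$ comes from the block-level union bound. Cluster sizes enter only through the noise condition used to get $r_k\le\delta$, not through the number of samples.
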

The proof (Appendix~\ref{app:theory}) builds on a refined coupon collector argument \citep{boneh1997coupon}. 
Unlike random sampling, the adaptive strategy achieves $\mathcal{O}(K \log K)$ coverage irrespective of cluster sizes, making it order-optimal.
We also show in Appendix~\ref{app:theory} that this property guarantees that our estimator $\widehat{A}$ closely approximates $A_0$ in operator norm. Let $\|\cdot\|_{\rm op}$ denote the operator norm.
\begin{theorem}[Spectral Approximation of $A_0$]\label{thm:spectral}
Let $\widehat{A} =  \widetilde{A}_S^\top \widetilde{A}_{S,S}^+ \widetilde{A}_S$ denote the estimator formed by low-rank reconstruction from  $m \geq C(\delta) t K \log K$ adaptively sampled rows of $\widetilde{A}$. Then,
\begin{align}
\| \widehat{A} - A_0 \|_{\rm op} \leq \frac{D n}{\sqrt{t} \cdot \min_k p_k},
\end{align}
with high probability, where $p_k$ is the cluster affinity for cluster $\mathcal{C}_k$, and $D := D(\lambda, K, \vec{p}) > 0$ depends on the parameter $\lambda$ of the noise matrix, the number of clusters $K$, and the affinities vector $\vec{p}$.
\end{theorem}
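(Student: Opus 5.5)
The plan is to condition on the event of Theorem~\ref{thm:coverage}, namely $T_k \ge t$ for every $k$, which holds with probability at least $1-\delta$ once $m \ge C(\delta)tK\log K$. We then split the error as
\[
\widehat A - A_0 = \bigl(\widehat A - \widetilde A\bigr) + E ,
\qquad\text{so}\qquad
\|\widehat A - A_0\|_{\rm op} \le \|\widehat A - \widetilde A\|_{\rm op} + \|E\|_{\rm op}.
\]
An alternative is to compare $\widehat A$ with the noiseless Nyström estimator $A_{0,S}^\top (A_0)_{S,S}^+ A_{0,S}$. That estimator reproduces $A_0$ exactly whenever every cluster contributes at least one row, because the sampled rows span the $K$-dimensional row space of the block matrix. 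The proof then reduces to a perturbation bound for the Nyström map $M \mapsto M_S^\top M_{S,S}^+ M_S$ at $M=A_0$ under the perturbation $E$. I would take this second route.

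For the noise, write $E = \frac1\lambda(\mathbf 1\mathbf 1^\top) + (E - \mathbb E E)$, adjusting the diagonal, which differs only by an $O(1/\lambda)$ diagonal term. The centered part has independent sub-exponential entries up to symmetry. A standard random-matrix bound (matrix Bernstein, or truncation followed by Bandeira–van Handel) gives $\|E-\mathbb E E\|_{\rm op} = O(\sqrt n\,\mathrm{polylog}\, n/\lambda)$ with high probability. The mean part $\frac1\lambda\mathbf 1\mathbf 1^\top$ has norm $n/\lambda$. This rank-one term is what produces the linear-in-$n$ scaling in the stated bound, and it is absorbed into $D(\lambda,K,\vec p)$.

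The core step is the perturbation analysis. Let $S_k = S\cap\mathcal C_k$, so $|S_k| = T_k \ge t$. The block $(A_0)_{S,S}$ is block diagonal with blocks $p_k\mathbf 1_{T_k\times T_k}$, whose nonzero eigenvalues are $p_kT_k \ge p_k t$. I would work with the rank-$K$ truncated pseudoinverse, or equivalently assume the noise does not create small spurious eigenvalues. Weyl's inequality plus the perturbation bound for pseudoinverses (Wedin) then give
\[
\bigl\|\widetilde A_{S,S}^{+} - (A_0)_{S,S}^{+}\bigr\|_{\rm op} \lesssim \frac{\|E_{S,S}\|_{\rm op}}{(\min_k p_k\, t)^2},
\]
valid once $\|E_{S,S}\|_{\rm op} \le \tfrac12 \min_k p_k t$. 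Expanding $\widetilde A_S^\top \widetilde A_{S,S}^+ \widetilde A_S$ with $\widetilde A_S = A_{0,S} + E_S$ gives cross terms such as $E_S^\top (A_0)_{S,S}^+ A_{0,S}$. For these I use $\|A_{0,S}\|_{\rm op} \le \max_k p_k\sqrt{T_k n_k}$ and $\|E_S\|_{\rm op} \lesssim \sqrt{mn}/\lambda + \sqrt{mn}$ (from the mean part). Combining these, each cross term carries one factor $1/(p_k\sqrt{T_k})$ from $\|(A_0)_{S,S}^{+1/2}\|$, which yields the $1/(\sqrt t\min_k p_k)$ dependence. The remaining factors of $n$ and of $K$, $\lambda$, $\vec p$ go into $D$.

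I expect the main obstacle to be controlling $\widetilde A_{S,S}^+$. The noise perturbs the small sampled block by a rank-one mean of size about $m/\lambda$, which need not be small relative to $p_k t$, and the exact pseudoinverse is discontinuous in rank. My first approach is to treat the mean $\frac1\lambda\mathbf 1\mathbf 1^\top$ as part of the signal. The signal $A_0 + \frac1\lambda\mathbf 1\mathbf 1^\top$ is still PSD of rank at most $K+1$ with smallest nonzero eigenvalue on $S$ at least of order $\min_k p_k t$, so only the centered noise needs to be treated perturbatively. Its $\|\cdot\|_{\rm op}$ on $S\times S$ is $O(\sqrt m)$, which is dominated by $p_k t$ for $t$ large.
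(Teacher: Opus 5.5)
Your skeleton is the paper's: condition on the coverage event, use the fact that the noiseless Nystr\"om map returns $A_0$ once every cluster contributes a row, and expand $\widehat A-A_0$ into a pseudoinverse-perturbation term, two cross terms and a quadratic noise term.

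\textbf{The noise mean.} You are right that $\mathbb{E}E\approx\frac1\lambda\mathbf 1\mathbf 1^\top$, but this term cannot be ``absorbed into $D(\lambda,K,\vec p)$'', because it carries no factor $t^{-1/2}$. In the paper the factor $n$ comes from $\|A_S\|_{\rm op}\|E_S\|_{\rm op}\approx\sqrt{tn}\cdot\sqrt n$ with centered noise, not from the mean. One has $A_{0,S}^\top(A_0)_{S,S}^{+}=\sum_k T_k^{-1}\mathbf 1_{\mathcal C_k}\mathbf 1_{S_k}^\top$, so the cross term with the mean part of $E_S$ is exactly
\[
A_{0,S}^\top(A_0)_{S,S}^{+}\,\tfrac{1}{\lambda}\mathbf 1_m\mathbf 1_n^\top=\tfrac{1}{\lambda}\mathbf 1_n\mathbf 1_n^\top ,
\]
whose norm is $n/\lambda$ for every $t$. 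Equivalently, your bound $\|E_S\|_{\rm op}\approx\sqrt{mn}/\lambda$ grows like $\sqrt m\asymp\sqrt{tK\log K}$ and cancels the $1/\sqrt t$.

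Nothing cancels this term later. The matrix $A_0+\frac1\lambda\mathbf 1\mathbf 1^\top$ has rank exactly $K$, since $\mathbf 1$ lies in the span of the cluster indicators. So once you move the mean into the signal, as in your last paragraph, the Nystr\"om identity reproduces $A_0+\frac1\lambda\mathbf 1\mathbf 1^\top$. Your perturbation bounds then give $\|\widehat A-A_0-\frac1\lambda\mathbf 1\mathbf 1^\top\|_{\rm op}=O(n/\sqrt t)$, hence $\|\widehat A-A_0\|_{\rm op}\ge n/\lambda-O(n/\sqrt t)$. This exceeds $Dn/(\sqrt t\min_k p_k)$ for any fixed $D$ once $t$ is large.

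So your route yields the stated rate only for centered noise, or for the target $A_0+\frac1\lambda\mathbf 1\mathbf 1^\top$. The paper gets its $t^{-1/2}$ precisely by treating the entries of $E_S$ and $E_{S,S}$ as centered in its matrix-Bernstein step ($\|E_S\|_{\rm op}\le C\sqrt n/\lambda$). The $\mathrm{Exp}(\lambda)$ assumption does not give this. You need to make that centering an explicit hypothesis rather than claim absorption.

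\textbf{The pseudoinverse.} Passing to the rank-$K$ truncation is not ``equivalent'' to a mild assumption on the noise; it changes the estimator. The signal part of $\widetilde A_{S,S}$ has rank $K$, so Weyl's inequality bounds every singular value beyond the $K$-th by $\frac1\lambda+\|E_{S,S}-\mathbb{E}E_{S,S}\|_{\rm op}$, which is $O(\sqrt m/\lambda)$ up to logarithms. Whenever $\operatorname{rank}\widetilde A_{S,S}>K$ (the generic case for continuous noise and $m>K$), this forces $\|\widetilde A_{S,S}^+\|_{\rm op}\gtrsim\lambda/\sqrt m$. With $m\asymp tK\log K$, that is far above the $O(1/(t\min_k p_k))$ your cross and quadratic terms need once $t\gg K\log K/(\lambda\min_k p_k)^2$, which is essentially the regime the paper itself works in. Weyl and Wedin control only the truncated inverse.

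The paper's proof has the same hole. It is hidden in the Neumann series around $A_{S,S}^{-1}$ with ``$\sigma_{\min}(A_{S,S})=tp_{\min}$'', although $A_{S,S}$ is a $tK\times tK$ matrix of rank $K$ and hence singular for $t\ge 2$. What your argument (and the paper's) genuinely supports is a bound for the rank-$K$ truncated Nystr\"om estimator under centered noise. Both changes should be stated as hypotheses, not treated as harmless reformulations.
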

Theorem~\ref{thm:spectral} provides a guarantee on the spectral reconstruction error when enough rows are sampled. 
Specifically, when the number of samples $m = O(t K \log K)$  , i.e. satisifying the condition in Theorem~\ref{thm:coverage},  using these samples suffices to achieve small spectral error that decays at the rate $O(1/\sqrt t)$ while guaranteeing representation from all clusters.

%% file: balances.tex
Algorithm~\ref{alg:balances}  presents the full implementation of \textsc{BALANS}. 
Given a data matrix $X \in \mathbb{R}^{n \times d}$ and batch labels $\vec{b} \in \{1, \dots, B\}^n$, the algorithm also takes as input a nearest-neighbor parameter $k$, a stopping threshold $\tau$, and an adaptive sampling parameter $J$. 
Optionally, the data may be projected onto a lower-dimensional PCA subspace. 
The algorithm maintains three variables: (i) a \emph{cumulative coverage vector} $c \in \mathbb{R}^n$ tracking the cumulative affinity of each point (ii) a \emph{temporary coverage vector} $c_K \in \mathbb{R}^n$ used to compute adaptive sampling probabilities over $J$ steps and (iii) a \emph{set of sampled indices} $S$, initialized as empty. 
The main loop terminates when no new points are covered over $\tau$ consecutive iterations.

At each iteration, an index $i_t$ is sampled using the coverage-based adaptive strategy (Section~\ref{ssec:lowrank}, Alg.\ref{alg:adaptive-sampling}).
For the selected point $x_{i_t}$, a sparse affinity row $A_{i_t,:}$ is computed using the batch-dependent local scale kernel (Section\ref{ssec:bclb}, Alg.~\ref{alg:bandwidth}).
To reduce noise and memory usage, affinities are sparsified via an elbow detection heuristic \citep{truong2020selective}. Entries are first sorted in decreasing order, and only values above a sharp drop threshold are retained.
After sparsification, $\Delta$ counts how many new indices receive nonzero affinity for the first time.
If $\Delta = 0$, a stagnation counter is incremented; otherwise, it is reset.
The loop stops after $\tau$ consecutive rounds with no new coverage, serving as a convergence heuristic.
The collected affinity rows form a sparse matrix $A_S \in \mathbb{R}^{m \times n}$, which is compact and efficient to store.

In theory, the corrected data is computed via the low-rank estimator 
$\widehat{X} := (A_S^\top A_{S,S}^{+} A_S) X$, where $A_S \in \mathbb{R}^{m \times n}$ contains sampled, row-normalized affinity rows, and $A_{S,S} \in \mathbb{R}^{m \times m}$ is the submatrix over sampled indices. 
This formulation enables efficient spectral reconstruction, as $A_S$ is sparse and $A_{S,S}^{+}$ is a small dense matrix.
In practice, we omit $A_{S,S}^{+}$ to avoid its $O(m^3)$ cost, approximating the estimator as $\widehat{X} \approx (A_S^\top A_S) X$. 
Here, $A_S$ smooths $X$ , and $A_S^\top$ propagates the result to all points. 
Both matrices are row-normalized for stability, and we avoid forming the dense matrix $A_S^\top A_S$ by computing $A_S X$ followed by $A_S^\top (A_S X)$. 
Since $A_S$ is sparse, all operations remain efficient.

\begin{algorithm}[h]
\caption{BALANS Batch Correction}
\label{alg:balances}
\begin{algorithmic}[1]
\State \textbf{Input:} Data $X \in \mathbb{R}^{n \times d}$, batch labels $\vec b \in \{1,\dots,B\}^n$, parameters $k$, $\tau$, $J$
\State \textbf{Output:} Corrected data $\widehat{X} \in \mathbb{R}^{n \times d}$
\State Compute PCA on $X$ (optional); set $c, c_K \gets \mathbf{0}$; $S \gets \emptyset$; $\texttt{no\_change} \gets 0$
\While{$\texttt{no\_change} < \tau$}
    \For{$j = 1$ to $J$}
        \State Sample $i_t$ via Adaptive Sampling (Alg.~\ref{alg:adaptive-sampling}); compute $A_{i_t,:}$ via Local Scale Kernel (Alg.~\ref{alg:bandwidth})
        \State Append $A_{i_t,:}$ to $A_S$; update $S \gets S \cup \{i_t\}$
        \State $\Delta \gets \#\{j : A_{i_t,j} > 0 \text{ and } c_J[j] = 0\}$
        \If{$\Delta = 0$}
            \State $\texttt{no\_change\_count} \gets \texttt{no\_change\_count} + 1$
        \Else
            \State $\texttt{no\_change\_count} \gets 0$
        \EndIf
        \State $c \gets c + A_{i_t,:}$; \quad $c_J \gets c_J + A_{i_t,:}$
    \EndFor
    \State Reset $c_J \gets \mathbf{0}$
\EndWhile
\State Compute $\widehat{X}$ via Low-Rank Completion (Alg.~\ref{alg:completion}); \Return $\widehat{X}$
\end{algorithmic}
\end{algorithm}

Importantly, \textsc{BALANS} is robust to hyperparameters. 
Ablation studies are presented in Appendix~\ref{app:ablations}.
We now state the overall computational complexity of \textsc{BALANS} shown in Appendix~\ref{app:run_times}.

\begin{theorem}[Computational Complexity of \textsc{BALANS}]
Let $n$ be the number of data points, $d$ the feature dimension, and $m = |S|$ the number of adaptively sampled rows after BALANS converges. 
Then the total computational complexity of \textsc{BALANS} is
\(
\mathcal{O}(n m (d + \log n)),
\)
where the first term accounts for sparse affinity construction and matrix multiplications, and the second term covers nearest neighbor search. 
When $m \ll n$, this is approximately linear in $n$.
\end{theorem}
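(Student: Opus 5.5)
We bound the cost of each stage of Algorithm~\ref{alg:balances} separately and sum the bounds. The stages are: the optional PCA, the main sampling loop (which runs $m$ iterations by definition of $m$), and the final low-rank completion of Algorithm~\ref{alg:completion}. The PCA step, if used, costs $O(nd\min(n,d))$ in full. A randomized or truncated PCA to a fixed target dimension costs $O(nd)$. We either treat it as a preprocessing step or absorb it into the $O(nmd)$ term when the target dimension is $O(m)$. After PCA, $d$ denotes the working dimension.

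\textbf{Per-iteration cost.} Each iteration has four parts.
\begin{enumerate}
\item \emph{Adaptive sampling} (Alg.~\ref{alg:adaptive-sampling}). This forms $q^{(t)}_i \propto 1/\mathrm{cov}^{(t)}(i)$ over all $n$ points and draws one index, which costs $O(n)$.
\item \emph{Distances.} Computing the row $A_{i_t,:}$ via Alg.~\ref{alg:bandwidth} first requires all distances $d_{i_t j}$ for $j\in[n]$, which costs $O(nd)$.
\item \emph{Local scales.} For each batch $b$ we need the $k$-th nearest-neighbour distance $d^{b}_{i_t,(k)}$ among points of batch $b$. We obtain these by sorting the $n$ distances, grouped by batch, in $O(n\log n)$. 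A selection routine would give $O(n)$, but we keep the $\log n$ to match the statement, where it is attributed to the nearest-neighbour search. Evaluating $A_{i_t j}=\exp(-d_{i_t j}^2/\sigma_{i_t j}^2)$ then costs $O(n)$.
\item \emph{Sparsification and bookkeeping.} The elbow-detection step sorts the row in decreasing order, which costs $O(n\log n)$. Computing $\Delta$ and updating the vectors $c$ and $c_J$ costs $O(n)$.
\end{enumerate}
A single iteration therefore costs $O(n(d+\log n))$, and the $m$ iterations together cost $O(nm(d+\log n))$.

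\textbf{Completion step.} We use the practical estimator $\widehat X=A_S^\top(A_SX)$. Row-normalizing $A_S$ costs $O(\mathrm{nnz}(A_S))$. Since $\mathrm{nnz}(A_S)\le mn$, computing $A_SX$ costs $O(\mathrm{nnz}(A_S)\,d)\le O(mnd)$, and the product $A_S^\top(\cdot)$ costs the same. The $m\times n$ dense matrix $A_S^\top A_S$ is never formed. If the exact Nyström form with $A_{S,S}^+$ is used instead, it adds $O(m^3+m^2d)$. This extra cost is dominated by $O(nmd)$ whenever $m\le \sqrt{n}$ or $m\le n$ and $m^2\le nd$; in general we note it as an additive term. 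Summing the three stages gives $O(nm(d+\log n))$. When $m\ll n$, for example $m=O(tK\log K)$ as suggested by Theorem~\ref{thm:coverage}, this is linear in $n$ up to the $\log n$ factor.

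\textbf{Main obstacle.} The delicate point is the local-scale computation. Done naively, it looks like it requires $k$-NN searches for every pair $(i,j)$, which would be quadratic. The key observation is that $\sigma_{ij}$ depends only on $i$ and the batch $b_j$. Hence a single pass over one full distance row yields all $B$ scales needed for that row, with no cross-row reuse. We also need that the stopping rule leaves $m$ well defined, so that the loop runs exactly $m$ iterations. This holds because the loop halts after $\tau$ stagnant rounds and $m$ counts all iterations, including the stagnant ones.
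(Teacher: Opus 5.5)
Your proposal is correct and follows essentially the same route as the paper's proof. Both bound the per-iteration cost by $O(n)$ sampling, $O(nd)$ distances, $O(n\log n)$ sorting for the batch-wise $k$-th neighbour scales and linear-time kernel evaluation, sum this over $m$ iterations, and add an $O(mnd)$ worst-case bound for the sparse products $A_S^\top(A_S X)$, with PCA treated as an optional preprocessing cost. Your accounting is slightly more careful than the paper's: you charge the sort in the elbow step and note the extra $O(m^3+m^2d)$ if $A_{S,S}^+$ is kept. The only slip is that $A_S^\top A_S$ is $n\times n$, not $m\times n$.
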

By Theorem~\ref{thm:spectral}, sampling $m = \mathcal{O}(K \log K)$ rows suffices to reconstruct $A_0$ with high probability while covering a large fraction of points, allowing the algorithm to terminate. 
This results in an overall computational complexity of $\mathcal{O}(n K \log K (d + \log n))$, which is near-linear in the dataset size $n$.

%% file: experiments.tex
\begin{table*}[!t]
\centering
\caption{
Evaluation scores across three scenarios: \textbf{JUMP 2} (large, diverse batches), \textbf{DEEP 1} (low batch diversity, DeepProfiler), and \textbf{JUMP 1}. Since most methods are deterministic, the Standard Deviation (SD) are $0$;  we omit it here due to space. Tables with SDs are in the Appendix.
}

{\scriptsize
\setlength{\tabcolsep}{6pt}
\begin{tabular}{lcccccccccc}
\toprule
Method    & Conn. & LISI-batch & Silh-batch & LISI-label & ARI  & NMI  & Silh-label & Avg-batch & Avg-label & Avg-all \\
\midrule
BALANS    & 0.33 & \textbf{0.48} & \textbf{0.91} & \textbf{1.00} & 0.01 & \textbf{0.46} & 0.32 & \textbf{0.57} & \textbf{0.45} & \textbf{0.50} \\
Scanorama & \textbf{0.34} & 0.41 & 0.83 & \textbf{1.00} & 0.01 & 0.30 & 0.27 & 0.53 & 0.40 & 0.45 \\
SCVI      & 0.28 & 0.44 & 0.82 & 0.99 & 0.01 & 0.28 & 0.29 & 0.53 & 0.39 & 0.44 \\
fastMNN   & \textbf{0.34} & 0.45 & 0.79 & \textbf{1.00} & \textbf{0.02} & 0.26 & 0.22 & 0.53 & 0.38 & 0.44 \\
Harmony   & \textbf{0.34} & 0.25 & 0.84 & \textbf{1.00} & 0.01 & 0.27 & 0.32 & 0.48 & 0.40 & 0.43 \\
Sphering  & \textbf{0.34} & 0.00 & 0.84 & \textbf{1.00} & 0.00 & 0.22 & \textbf{0.38} & 0.39 & 0.40 & 0.40 \\
Combat    & \textbf{0.34} & 0.02 & 0.81 & \textbf{1.00} & 0.00 & 0.24 & 0.31 & 0.39 & 0.39 & 0.39 \\
Baseline  & \textbf{0.34} & 0.01 & 0.80 & \textbf{1.00} & 0.00 & 0.23 & 0.31 & 0.38 & 0.39 & 0.38 \\
DESC      & 0.29 & 0.40 & 0.80 & 0.99 & 0.00 & 0.25 & 0.28 & 0.50 & 0.39 & 0.42 \\
\bottomrule
\end{tabular}

\label{tab:table2}
}
{\scriptsize
\setlength{\tabcolsep}{5.7pt}
\begin{tabular}{lcccccccccc}
\toprule
Method    & Conn. & LISI-batch & Silh-batch & LISI-label & ARI  & NMI  & Silh-label & Avg-batch & Avg-label & Avg-all \\
\midrule
BALANS     & 0.20 & 0.61 & 0.89 & \textbf{1.00} & \textbf{0.00} & 0.32 & 0.39 & \textbf{0.57} & 0.43 & \textbf{0.49} \\
Sphering   & \textbf{0.30} & 0.43 & \textbf{0.90} & 0.99 & \textbf{0.00} & 0.30 & \textbf{0.46} & 0.54 & \textbf{0.44} & 0.48 \\
Seurat RPCA & 0.26 & \textbf{0.62} & 0.79 & 0.99 & \textbf{0.00} & 0.25 & 0.36 & 0.56 & 0.40 & 0.47 \\
Harmony     & 0.26 & \textbf{0.62} & 0.80 & 0.99 & \textbf{0.00} & 0.23 & 0.37 & 0.56 & 0.40 & 0.47 \\
Seurat CCA  & 0.27 & 0.60 & 0.79 & 0.99 & \textbf{0.00} & 0.27 & 0.36 & 0.55 & 0.40 & 0.47 \\
Baseline    & 0.27 & 0.57 & 0.80 & 0.99 & \textbf{0.00} & 0.23 & 0.37 & 0.55 & 0.40 & 0.46 \\
Combat      & 0.27 & 0.58 & 0.80 & 0.99 & \textbf{0.00} & 0.24 & 0.37 & 0.55 & 0.40 & 0.46 \\
fastMNN     & 0.26 & 0.60 & 0.78 & 0.99 & \textbf{0.00} & 0.24 & 0.35 & 0.55 & 0.40 & 0.46 \\
Scanorama   & 0.19 & 0.56 & 0.80 & \textbf{1.00} & \textbf{0.00} & \textbf{0.34} & 0.33 & 0.52 & 0.42 & 0.46 \\
SCVI        & 0.24 & 0.58 & 0.82 & 0.99 & \textbf{0.00} & 0.29 & 0.34 & 0.55 & 0.41 & 0.47 \\
DESC        & 0.23 & 0.55 & 0.79 & 0.99 & \textbf{0.00} & 0.26 & 0.33 & 0.53 & 0.40 & 0.46 \\
\bottomrule
\end{tabular}
}
{\scriptsize
\setlength{\tabcolsep}{5pt}
\begin{tabular}{lccccccccccc}
\toprule
Method      & Conn. & KBET & LISI-batch & Silh-batch & LISI$_l$ & ARI  & NMI  & Silh-label & Avg-batch & Avg-label & Avg-all \\
\midrule
Seurat CCA  & 0.59 & \textbf{0.61} & 0.50 & 0.88 & \textbf{0.98} & \textbf{0.05} & 0.40 & 0.47 & \textbf{0.65} & 0.48 & \textbf{0.56} \\
BALANS      & 0.54 & 0.45 & 0.44 & \textbf{0.89} & \textbf{0.98} & \textbf{0.05} & \textbf{0.46} & \textbf{0.53} & 0.58 & \textbf{0.51} & 0.54 \\
Seurat RPCA & 0.59 & 0.46 & 0.41 & 0.88 & \textbf{0.98} & 0.03 & 0.39 & 0.47 & 0.59 & 0.47 & 0.53 \\
fastMNN     & 0.53 & \textbf{0.61} & 0.46 & 0.83 & 0.97 & 0.03 & 0.35 & 0.43 & 0.61 & 0.45 & 0.53 \\
Harmony     & \textbf{0.60} & 0.41 & 0.43 & 0.88 & \textbf{0.98} & 0.03 & 0.39 & 0.47 & 0.58 & 0.47 & 0.52 \\
Scanorama   & 0.35 & 0.56 & \textbf{0.52} & 0.80 & \textbf{0.98} & 0.03 & 0.34 & 0.43 & 0.56 & 0.45 & 0.50 \\
SCVI        & 0.50 & 0.40 & 0.45 & 0.85 & 0.97 & 0.03 & 0.38 & 0.46 & 0.58 & 0.47 & 0.51 \\
Baseline    & 0.51 & 0.17 & 0.13 & 0.78 & \textbf{0.98} & 0.02 & 0.32 & 0.47 & 0.40 & 0.45 & 0.42 \\
Combat      & 0.54 & 0.08 & 0.11 & 0.80 & \textbf{0.98} & 0.02 & 0.33 & 0.47 & 0.38 & 0.45 & 0.42 \\
Sphering    & 0.48 & 0.10 & 0.06 & 0.79 & \textbf{0.98} & 0.01 & 0.28 & 0.47 & 0.36 & 0.44 & 0.40 \\
DESC        & 0.49 & 0.35 & 0.42 & 0.84 & 0.97 & 0.02 & 0.36 & 0.45 & 0.57 & 0.46 & 0.50 \\
\bottomrule
\end{tabular}
}
\end{table*}
\begin{figure*}[!t]
    \centering
    \includegraphics[width=\textwidth]{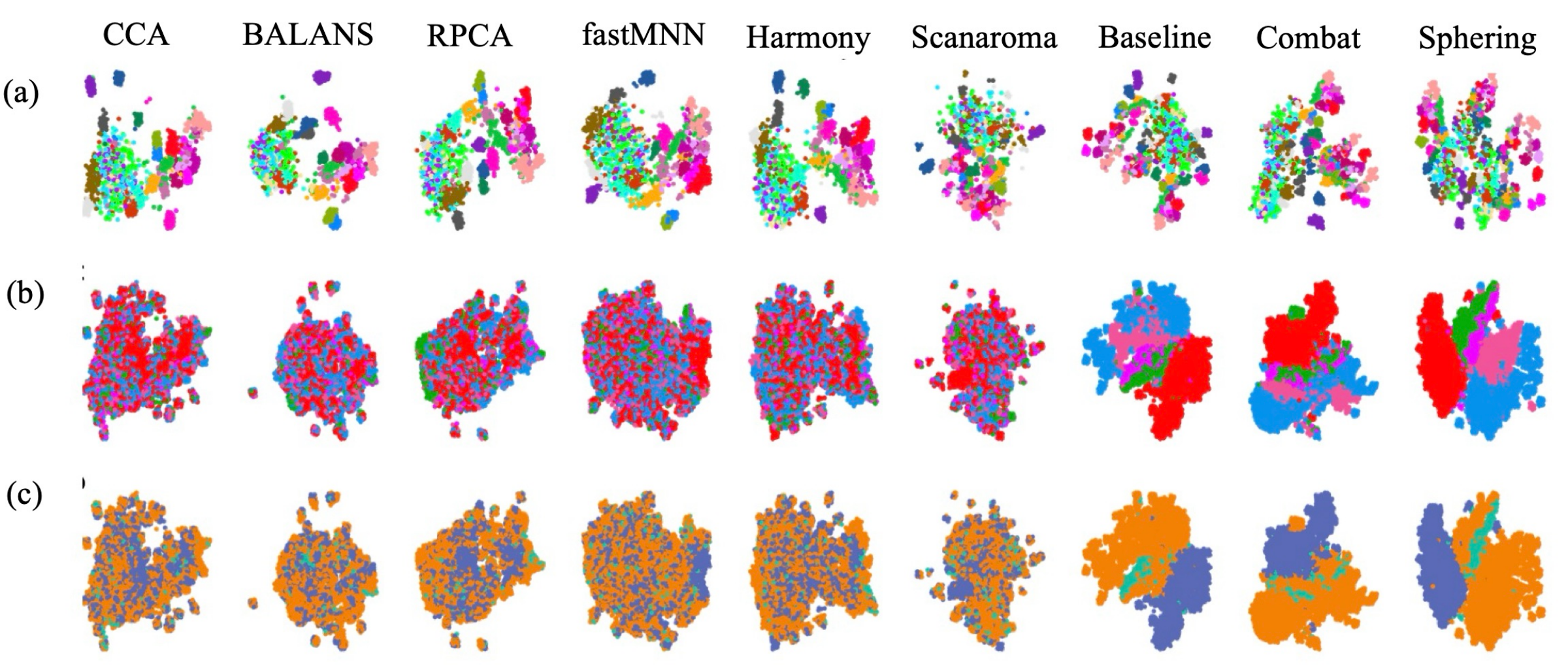}
    \caption{
       Figure 3: UMAP visualizations for JUMP~1. (a) Colored by biological cluster, illustrating separation across a few representative compounds. (b) Colored by source lab. (c) Colored by microscope. These panels demonstrate how batch effects manifest across different technical sources. Notably, BALANS yields a tight and coherent biological clusters while substantially reducing separation across labs.
    }
    \label{fig:umap_panel}
\end{figure*}
\begin{table*}[!tbp]
\centering
\caption{Wall-clock runtimes (in minutes) for best performing methods across real-world scenarios. 
}
\setlength{\tabcolsep}{4pt}
{\small
\begin{tabular}{lccc}
\toprule
\textbf{Method} & \textbf{Scenario JUMP 1} & \textbf{Scenario JUMP 2} & \textbf{Scenario DEEP 1} \\
\midrule
BALANS         &  \textbf{00:00:59}  & \textbf{00:55:30} & \textbf{00:00:44} \\
Harmony         & 00:12:35   & 01:44:53  & 00:22:43 \\
Scanorama       & 00:01:33  & 01:13:02          & 00:01:57  \\
Seurat CCA **     & 00:34:37 & 12:00:00+         & 01:03:24 \\
Seurat RPCA **    & 00:14:03 & 12:00:00+          & 00:21:29 \\
\bottomrule
\label{tab:wallclock}
\end{tabular}
}

\end{table*}

\begin{figure*}[!tbp]
    \centering
    \includegraphics[width=0.8\textwidth]{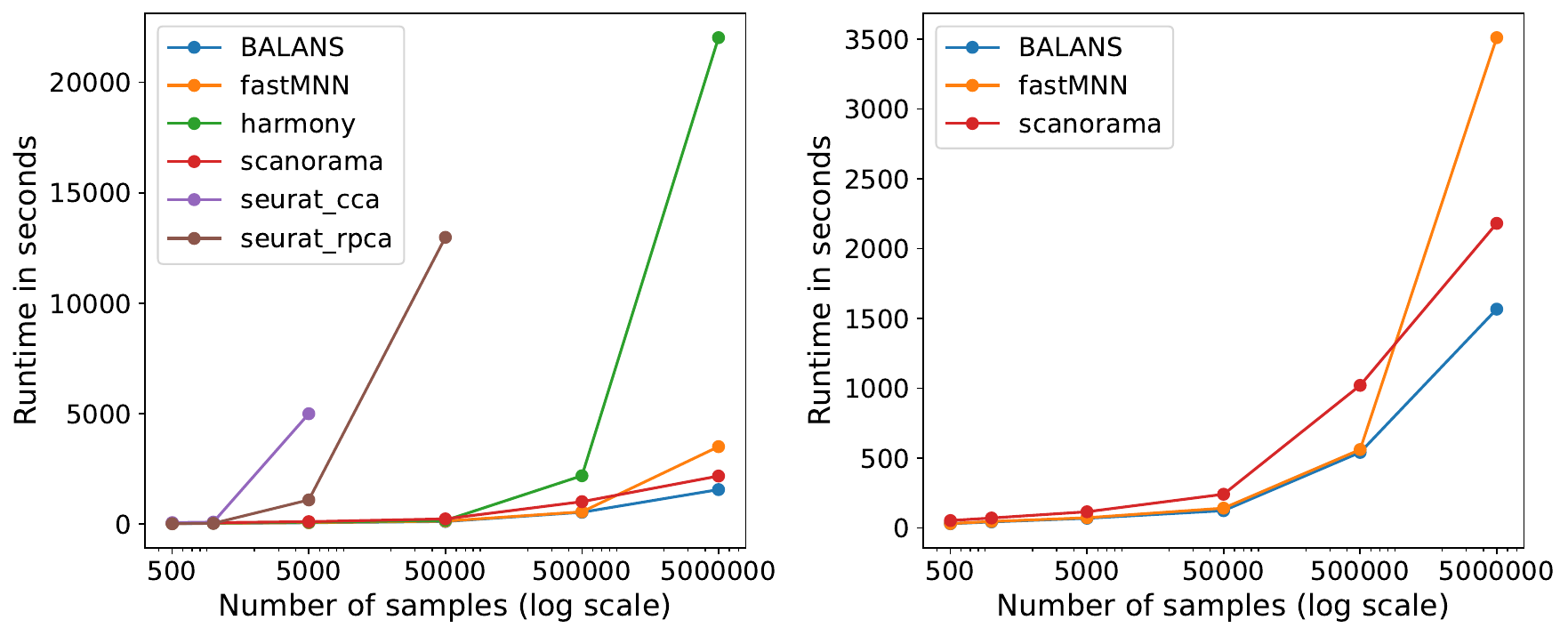}
    \caption{
Runtime comparison on synthetic data (10 compounds, 5 batches, 10 features). 
Left: all methods. Right: fastest methods only. 
X-axis: number of data points (log scale); Y-axis: runtime in seconds. 
    }
    \label{fig:runtime_comparison}
\end{figure*}

We evaluate \textsc{BALANS} on 8 real-world scenarios from 7 datasets, spanning both CellProfiler and DeepProfiler features. 
Five scenarios are from the JUMP Cell Painting Consortium (approximately 140{,}000 perturbations, 115~TB of images), and three use DeepProfiler embeddings from BBBC datasets (approximately 1000 perturbations with 50000 well averaged profiles). 
We also include 6 synthetic scenarios based on Gaussian mixtures ranging from 500 to 5 million data points. 
All methods are evaluated using metrics that jointly assess batch correction and biological signal preservation. 
Full details are provided in Appendix~\ref{app:dataset}.
We compare \textsc{BALANS} against a broad range of batch correction methods, including statistical approaches (Combat \citep{zhang2020combat}, Sphering \citep{kessy2018optimal}), linear projection methods (CCA, RPCA \citep{satija2015spatial}), and graph-based or KNN-based methods (FastMNN , Scanorama \citep{hie2019efficient}, Harmony \citep{korsunsky2019fast}). 
These methods differ in their underlying assumptions.
Table~\ref{tab:method_comparison} in Appendix~\ref{app:methods} summarizes these properties; full implementation details and references are provided in Appendix~\ref{app:methods}.

We evaluate performance using a suite of established metrics \citep{luecken2022benchmarking}. 
These include Graph Connectivity, kBET, LISI (batch and label), Silhouette scores (batch and label), and clustering-based measures such as Leiden ARI and Leiden NMI. 
Each metric is normalized to the range [0, 1], with higher values indicating better performance. 
We note that kBET could not be computed for large-scale settings (e.g., JUMP 2) due to resource constraints. 
Full metric definitions and implementation details are provided in Appendix~\ref{app:metrics}.
Table~\ref{tab:metrics_summary} in Appendix~\ref{app:metrics} provides a summary of the evaluation metrics.
We highlight three scenarios here in Table~\ref{tab:table2} (full details in Appendix~\ref{app:realworld}).
\textbf{JUMP 1}: medium-scale CellProfiler (25K points, 3 sources, 302 compounds). 
\textbf{JUMP 2}: large-scale, high-diversity (350K points, 5 microscopes, 80K+ compounds, low overlap). 
\textbf{DEEP 1}: DeepProfiler version of \texttt{BBBC036} with batch effects based on groups of plate rows. 
We also compare wall-clock runtimes (Table~\ref{tab:wallclock}) for top-performing methods. 
\textsc{BALANS} outperforms other competing methods across datasets.

\textbf{Explaination of metrics:} We distinguish between batch metrics and bio metrics to provide a balanced view of integration quality. 
Batch metrics (e.g., Graph Connectivity, LISI-batch) assess how well spurious technical variation is removed, but can be misleading in isolation. Methods that erase all structure (e.g., by adding noise) may score highly despite discarding biological signal.
Label metrics (e.g., ARI, NMI, Silhouette-label), in contrast, directly evaluate whether meaningful biological structure is preserved, and thus carry greater interpretive weight. 
Because no single metric captures all aspects, we emphasize comparative patterns and rely on Avg-all as a more robust summary across tradeoffs between local versus global structure. 
Ultimately, we prioritize improvements in label metrics, since recovering biologically coherent variation is the primary goal of batch correction.
Notably, BALANS is consistently performs the best across all datasets on the average of label-based bio metrics (Avg-label), underscoring its ability to preserve biologically coherent variation.
A detailed discussion on metrics is provided in Appendix~\ref{app:dataset}.

\textbf{Interpretation of metrics:} As an illustration of biological interpretability, BALANS more clearly recovers positive control compounds with known phenotypes. 
For example, the Aurora kinase inhibitor AMG900 \cite{} forms a distinct cluster with a high silhouette score (0.82 vs. 0.45 for Harmony). 
This highlights BALANS’s ability to preserve meaningful biological structure. Additional compound-level comparisons and heatmaps are provided in Appendix~\ref{app:poscomp}.

 BALANS performs the best on highly heterogeneous datasets, compared to baselines. 
In particular, the results on JUMP-1 and JUMP-2 illustrate this clearly. JUMP-2, which contains stronger batch effects and higher biological and technical heterogeneity, shows a clear advantage for BALANS across all metrics. On JUMP-1, where heterogeneity is more moderate, BALANS provides stable correction without substantially surpassing all baselines. 
Together, these results indicate that BALANS performs especially well as dataset heterogeneity increases, underscoring its suitability for large and diverse real-world settings.

To evaluate scalability, we introduce a synthetic benchmark based on a structured Gaussian mixture model. 
It supports datasets with up to 5 million points and varying batch complexity (details in Appendix~\ref{app:synthetic}). 
The main experiment uses 10 compound clusters across 5 batches in 10 dimensions. 
As shown in Figure~\ref{fig:runtime_comparison}, BALANS achieves the fastest runtimes even on 5 million data points without compromising correction quality.
Aggregate Scores for this experiment,
additional scenarios, runtime details (including GPU hardware), and their aggregate scores are in Appendix~\ref{app:synthetic}.

%% file: conclusions.tex
We introduce BALANS, a novel method that integrates local affinity structure with adaptive sampling to provide a scalable and theoretically grounded approach to batch correction in Cell Painting data. 

%% file: theory.tex
\subsection{Proof of Theorem~\ref{thm:coverage}}

\makeatletter
\setcounter{theorem}{0} 
\renewcommand{\thetheorem}{1}  
\begin{theorem}[Cluster Coverage Guarantee, restated]
Let $T_k$ denote the number of sampled rows from cluster $\mathcal{C}_k$. 
Then for any constant $\delta > 0$, there exists a constant $C := C(\delta)$ such that if $m \geq Ct K \log K$, then (under adaptive sampling~\ref{alg:adaptive-sampling})
\[
\mathbb{P} \left( T_k \geq t \;\text{for all}\; k \in [K] \right) \geq 1 - \delta.
\]
\end{theorem}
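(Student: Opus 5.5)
The plan is to reduce the adaptive scheme to a coupon-collector process over the $K$ clusters, analyzed one block of size $K$ at a time. The key quantity is the probability, at step $t$ within a block, of drawing a point from a cluster that has not yet been hit in that block.

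\textbf{Step 1: one block behaves like uniform-over-clusters collection.} Under Assumption~\ref{assump:block} and Assumption~\ref{assump:eoe}, sampling a row $i\in\mathcal C_k$ adds coverage $p_k+E_{ij}$ to every $j\in\mathcal C_k$ and only $E_{ij}$ to points outside $\mathcal C_k$. So after some clusters are hit within the current block, points in unhit clusters have coverage of order $\sum E$ terms, which is $O(1/\lambda)$ per sampled row. Points in hit clusters have coverage at least $p_k$ per hit. Since $q^{(t)}_i\propto 1/\mathrm{cov}^{(t)}(i)$, the mass on an unhit cluster $\mathcal C_\ell$ is proportional to $n_\ell/\mathrm{cov}$.

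I would first condition on a good event for $E$. Using exponential concentration and a union bound over the $O(mn)$ entries involved, all sampled noise entries are at most $c\log(mn)/\lambda$. Partial sums of them over at most $K$ rows are also controlled.

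On this event, I would show the following lower bound: whenever $r$ clusters remain unhit, the probability that the next draw lands in a new cluster is at least $\gamma\, r/K$. Here $\gamma=\gamma(\lambda,\vec p,K)>0$. This mimics the coupon collector with $K$ coupons, slowed by the constant factor $\gamma$.

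\textbf{Step 2: coupon-collector tail.} Coverage resets at block boundaries, and the uniform initialization makes each block a fresh start. Concatenating blocks therefore gives a process whose waiting time to collect all $K$ clusters is stochastically dominated by a sum of independent geometrics with success probabilities $\gamma r/K$, for $r=K,\dots,1$.

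The standard tail bound (as in \citep{boneh1997coupon}) gives that one full collection finishes within $(C_0/\gamma)K\log K$ draws with probability at least $1-K^{-c}$. It also gives exponential tails beyond that.

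To get $T_k\ge t$ for all $k$, I would split the $m$ draws into $t$ consecutive segments of length $(C/t)\cdot tK\log K=CK\log K$. Each segment independently completes a full collection with failure probability at most $\delta/t$, once $C$ is chosen large.

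Alternatively, to avoid a $\log t$ loss, I would use a Chernoff bound on the number of successful segments with slightly longer segments. Either way a union bound yields $\mathbb P(\exists k:T_k<t)\le\delta$, so $C$ depends on $\delta$ and, through $\gamma$, on the model constants.

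\textbf{Main obstacle.} The hard part is Step 1: the lower bound $\gamma r/K$ regardless of the cluster sizes $n_\ell$. A tiny cluster gets total mass $n_\ell/\mathrm{cov}$, and this is small under uniform sampling. The adaptive rule only helps once the large clusters are heavily covered.

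I would try to argue per cluster that each hit multiplies the coverage of its points by a factor at least $1+\lambda p_k/\log(mn)$. After $O(1)$ extra hits to a big cluster, its mass drops below that of any unhit cluster. Hence within a block of length $K$, repeated hits to covered clusters cost only a constant factor.

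If this fails for very unbalanced $n_\ell$, I would fall back to letting the constant absorb $\max_\ell n/n_\ell$, or to strengthening the block-based argument via the small-noise regime $\lambda$ large.
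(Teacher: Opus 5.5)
Your proposal has a genuine gap: neither Step 1 nor Step 2 holds as stated, and the statement needs assumptions you do not invoke.

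\textbf{Step 1 is false without a low-noise hypothesis.} The whole proof rests on the claim that, with $r$ clusters unhit, the next draw hits a new cluster with probability at least $\gamma r/K$, where $\gamma=\gamma(\lambda,\vec p,K)$ does not depend on the $n_\ell$. Suppose $1/\lambda$ is comparable to or larger than $p_{\min}$. Then inverse coverage is driven by the exponential noise, and a draw lands in $\mathcal C_\ell$ with probability roughly $n_\ell/n$. A cluster with $n_\ell=o(n)$ then needs on the order of $tn/n_\ell$ draws, so no $C(\delta)$ can work. Your multiplicative-coverage repair does not rescue this. It needs about $n_kK\log(mn)/(n_\ell\lambda p_k)$ extra hits on a big cluster before its mass falls below that of an unhit cluster, which is $O(1)$ only when $\lambda p_k\gtrsim (n_k/n_\ell)K\log(mn)$.

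\textbf{How the paper handles this.} The paper makes the needed regime explicit in Assumptions~\ref{assump:growing-clusters} and~\ref{assump:low-noise}: $n_{\min}\to\infty$ and $p_{\min}\lambda>\delta n$. Under these it gets something much stronger than $\gamma r/K$. All already-covered points together carry inverse-coverage mass at most $n/p_{\min}$. A single uncovered cluster carries at least $n_k\lambda/(2(K-2))$ by Lemma~\ref{lem:inv-gamma-concentration}. Hence the probability that a fixed cluster is missed in a block is at most $\delta$, uniformly in the cluster sizes. Each block then behaves essentially like a permutation of the $K$ clusters. Your final fallback (large $\lambda$) points in this direction, but without developing it the proposal does not prove the statement.

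\textbf{Step 2 does not go through either.} A block has only $K$ draws, so a collection run of length $(C_0/\gamma)K\log K$ straddles about $\log K/\gamma$ coverage resets. After each reset the within-block count $r$ returns to $K$, and the first draw is uniform over points. An aggregate lower bound on the mass of block-unhit clusters says nothing about the \emph{globally} uncollected ones. That mass can sit on large clusters already collected in earlier blocks. A per-cluster version (at least $\gamma/K$ for each unhit cluster) is false, because mass among unhit clusters is allocated roughly in proportion to $n_\ell$. So domination by independent geometrics with parameters $\gamma r/K$ is not justified.

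\textbf{The paper's route, and two further issues.} The paper works block by block. It uses a per-block miss probability of at most $\delta$ for each cluster and takes $v=C\log K$ blocks, so that $\Pr(\mathcal E_k)\le\delta^{C\log K}$. It then union-bounds over $k$ and applies Lemma~\ref{lem:partial-coupon} inside each cluster to obtain $t$ \emph{distinct} rows. Your argument counts hits with multiplicity and skips that distinctness step. Finally, letting $C$ depend on $K$ through $\gamma$, or on $\max_\ell n/n_\ell$ as in your other fallback, changes the statement, whose constant depends only on $\delta$.
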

\addtocounter{theorem}{-1}  
\renewcommand{\thetheorem}{\arabic{section}.\arabic{theorem}} 
\makeatother

Before proving this theorem we state a few mild assumptions.

\begin{assumption}[Growing–cluster regime]
\label{assump:growing-clusters}
Fix an integer $t \ge 1$ that is independent of $n$.  
Let $\{n_k\}_{k=1}^K$ be the cluster sizes and assume 
\[
\min_{1 \le k \le K} n_k \to \infty 
\quad \text{as the total sample size } n = \sum_{k=1}^K n_k \to \infty.
\]
\end{assumption}

\noindent
This is a very mild assumption: it allows highly imbalanced cluster sizes, including cases where one or more clusters are vanishingly small relative to $n$, as long as each cluster still grows without bound.

\vspace{1em}

\begin{assumption}[Low-noise regime]
\label{assump:low-noise}
Assume the noise level satisfies
\[
p_{\min} \cdot \lambda > \delta n
\]
for some constant $\delta > 0$ independent of $n$. 
\end{assumption}

\noindent
Intuitively, it ensures that true affinities are not overwhelmed by noise.

\begin{proof}

Recall that 
\[
\tilde A = A_{0} + E,
\]

is the noisy block affinity matrix that we sample rows from.
Let $m$ denote the total number of sampling steps, performed in blocks of size $K$. 
Consider one such block, and let $s \in [K]$ denote the $s^{\rm th}$ sampling step within the block. 
Define the coverage of point $x_i$ at step $s$ of this block as
\[
\text{cov}^{(s)}(i) := \sum_{j \in S_{\text{past}}} \widetilde{A}_{ji},
\]
where $S_{\text{past}} := \{ i_j \mid 1 \leq j < s \}$ denotes the set of indices sampled in the current block  prior to step $s$.
Let 
\[
\operatorname{cov}^{(s)} := \bigl( \operatorname{cov}^{(s)}(1), \operatorname{cov}^{(s)}(2), \dots, \operatorname{cov}^{(s)}(n) \bigr)
\]
be the coverage vector at step $s$.
The inverse-coverage sampling distribution is given by 
\[q^s := (q^{(s)}(1), q^{(s)}(2), \dots q^{(s)}(n))\]
at step $s$ is defined as
\[
q^{(s)}(i) := \frac{ \operatorname{cov}^{(s)}(i)^{-1} }{ \sum_{j=1}^n \operatorname{cov}^{(s)}(j)^{-1} },
\]
whenever all entries all positive $\operatorname{cov}^{(s)}(j) > 0$.
While this expression is well-defined in practice (since there is always a positive affinity between any two points in practice), theoretically there may exist cases e.g., $E = 0$, where one or more entries satisfy $\text{cov}^{(s)}(j) = 0$. 
In such cases, $\text{cov}^{(s)}(j)^{-1} = \infty$, making the expression undefined in the standard sense.
Suppose $\operatorname{cov}^{(s)}(i) = 0$ for a subset of indices and $\operatorname{cov}^{(s)}(i) > 0$ for the rest.
We define the sampling distribution $q^s$ to be
\[
q^{(s)}_i := 
\begin{cases}
\frac{1}{|\{ j : \text{cov}^{(s)}(j) = 0 \}|} & \text{if } \text{cov}^{(s)}(i) = 0, \\
0 & \text{otherwise}.
\end{cases}
\]

That is, we sample uniformly among the uncovered points. 
This corresponds to the limiting behavior of the inverse-coverage distribution as uncovered entries dominate. 
Now to build intuition we consider the noise-free case, when $E = 0$.

\paragraph{Noise-free case.}

Assume $E=0$. Therefore the observed affinity matrix $\widetilde A$ is given by
\begin{align}
\widetilde A = A_0.
\end{align}
Recall that the matrix $A_0\in\mathbb R^{n\times n}$ is block–diagonal with $K$ disjoint clusters  
$\mathcal C_1,\dots,\mathcal C_K$, i.e.
\[
A_{0} := \mathrm{diag}(A^{(1)}, A^{(2)}, \dots, A^{(K)}),
\]
where each diagonal block $A^{(k)} \in \mathbb{R}^{n_k \times n_k}$ represents the affinities within cluster $k$ and is given by
$A^{(k)} = p_k \cdot \mathbf{1}_{n_k \times n_k},$
with $p_k > 0$ and $ \mathbf{1}_{n_k \times n_k} \in \mathbb R^{n_k\times n_k}$ is the matrix of all ones. 
Therefore when $E = 0$, for $i\in[n]$ and $j\in[n]$
\begin{align}
\widetilde A_{i,j}>0 \quad\Longleftrightarrow\quad x_i,x_j\text{ belong to the same cluster}.
\end{align}

Consider a sampling block and the current step in that block $s\in[K]$.  Write the sequence of indices drawn in this block as  
$
i_1,\dots,i_K.
$

In the noiseless case, due to the block-diagonal matrix structure of $A_0$, only points within the same cluster as the previously sampled indices receive nonzero coverage. 
Therefore, at every step $s > 1$, the uncovered set $Z^{(s)}$ consists of points in clusters from which no indices have yet been sampled.

Hence, the sampling distribution is simply uniform over $\{ j : \text{cov}^{(s)}(j) = 0 \}$ given by
\[
q^{(s)}_i := 
\begin{cases}
\frac{1}{|\{ j : \text{cov}^{(s)}(j) = 0 \}|} & \text{if } \text{cov}^{(s)}(i) = 0, \\
0 & \text{otherwise}.
\end{cases}
\]

Let $\mathcal{U}^{(s)}$ denote the set of clusters that have not yet been sampled by step $s$, i.e.,
\[
\mathcal{U}^{(s)} := \left\{\, k \in [K] : \mathcal{C}_k \cap \{i_1, \dots, i_{s-1}\} = \varnothing \,\right\}.
\]
By construction, $\mathcal{U}^{(1)} = [K]$ and exactly one new cluster is removed from $\mathcal{U}^{(s)}$ at each step. Therefore,
\[
|\mathcal{U}^{(s)}| = K - (s - 1),
\]
and the sampling distribution at step $s$ is uniform over the union of these remaining $|\mathcal{U}^{(s)}|$ clusters. 
After $K$ steps, $\mathcal{U}^{(K+1)} = \varnothing$, and each cluster has contributed exactly one sample.

Each block samples one row from each of the $K$ clusters. 
Note that there are $m/K$ blocks. 
Therefore the question of sampling atleast $t$ unique samples from each cluster is equivalent to the question of how many samples we require to sample atleast $t$ unique blocks from each cluster when we over $m/K$ draws uniformly at random with replacement.
This is reminiscent of the classic coupon collector problem \cite{boneh1997coupon}.
More formally we state assumptions and prove the following results.

\begin{lemma}[Coupon collector with a partial target, \citep{boneh1997coupon}]
\label{lem:partial-coupon}
Let $N \ge t$ and sample i.i.d.\ coupons from the set $[N] = \{1,\dots,N\}$ with replacement.  
Let $S_t$ be the number of draws required to observe $t$ distinct coupons. 
Define the $N^{\rm th}$ Harmonic number as
\[
H_N = \sum_{i=1}^N 1/i
\]

Then,
\begin{align*}
\mathbb{E}\left[S_t\right] 
&= N \left( H_N - H_{N - t} \right)
= N \sum_{i = N - t + 1}^{N} \frac{1}{i}
= t + \mathcal{O}\left( \frac{t^2}{N} \right).
\end{align*}
Moreover, for every $\gamma > 0$,
\begin{align}
\Pr\left( S_t > (1 + \gamma) N \left( H_N - H_{N - t} \right) \right) 
\le \exp\left( -\frac{ \gamma^2 t }{ 2(1 + \gamma) } \right).
\end{align}
\end{lemma}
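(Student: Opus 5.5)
The plan is to represent $S_t$ as a sum of independent geometric waiting times, which gives the expectation exactly, and then to apply a Chernoff-type tail bound for such sums.

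\textbf{Decomposition and expectation.} For $i=1,\dots,t$, let $G_i$ be the number of draws after the $(i-1)$-th distinct coupon has appeared until the $i$-th distinct coupon appears. Once $i-1$ distinct coupons have been seen, each new draw is fresh with probability $p_i := (N-i+1)/N$, independently of the past. Hence the $G_i \sim \mathrm{Geom}(p_i)$ are independent and
\[
S_t = \sum_{i=1}^t G_i .
\]
Linearity gives
\[
\mathbb{E}[S_t] = \sum_{i=1}^t \frac{N}{N-i+1} = N\,(H_N - H_{N-t}).
\]
For the asymptotic form, write $\frac{N}{N-i+1} = 1 + \frac{i-1}{N-i+1}$ and sum over $i$. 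The excess over $t$ is at most $\frac{t(t-1)}{2(N-t+1)}$, which is $\mathcal{O}(t^2/N)$ whenever $t \le N/2$. This holds in our application, since $t$ is fixed and $N$ grows by Assumption~\ref{assump:growing-clusters}. I will state that caveat explicitly, because for $t$ close to $N$ the expression $t+\mathcal{O}(t^2/N)$ is not accurate.

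\textbf{Tail bound.} Set $\mu := \mathbb{E}[S_t]$ and $p_* := \min_i p_i = (N-t+1)/N$. I will invoke the standard Chernoff bound for sums of independent geometric variables (Janson's inequality):
\[
\Pr(S_t \ge \lambda \mu) \le \exp\!\bigl(-p_*\,\mu\,(\lambda - 1 - \ln\lambda)\bigr), \qquad \lambda \ge 1 .
\]
If that citation is not acceptable, I will rederive it: use the moment generating function $\mathbb{E}e^{\theta G_i} = \frac{p_i e^\theta}{1-(1-p_i)e^\theta}$, apply Markov's inequality, and choose $\theta$ optimally. Taking $\lambda = 1+\gamma$, the elementary inequality
\[
\gamma - \ln(1+\gamma) \ge \frac{\gamma^2}{2(1+\gamma)}
\]
follows by comparing derivatives at $\gamma = 0$. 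It reduces the claim to the lower bound $p_*\mu \gtrsim t$.

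\textbf{Main obstacle.} The difficulty is precisely this last step. We have
\[
p_*\mu = \sum_{i=1}^t \frac{N-t+1}{N-i+1},
\]
which is slightly \emph{less} than $t$ (by a factor $1-\mathcal{O}(t/N)$). So the inequality with the exact constant $\tfrac12$ does not follow verbatim. I would try two routes, in this order:
\begin{enumerate}
\item Avoid the crude $p_*$ by running the MGF computation term by term. The $i$-th summand contributes its own rate $p_i\mu_i(\dots)$, with $\mu_i = 1/p_i$ and $p_i\mu_i = 1$. This should recover an exponent of order $t$ rather than $p_*\mu$. The price is a less clean dependence on $\lambda$, which I will then re-bound by $\frac{\gamma^2}{2(1+\gamma)}$.
\item Failing that, prove the bound with exponent $\frac{\gamma^2 t}{2(1+\gamma)}\bigl(1-\mathcal{O}(t/N)\bigr)$. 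Then note that under Assumption~\ref{assump:growing-clusters} (fixed $t$, $N \to \infty$) the correction factor is $1-o(1)$, which suffices for the downstream use in Theorem~\ref{thm:coverage}.
\end{enumerate}
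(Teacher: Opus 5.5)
Your decomposition into independent geometric waiting times, the exact formula for $\mathbb{E}[S_t]$, and the plan to finish with a Chernoff bound for geometric sums all coincide with the paper's sketch. You are also right that the sketch's one-line appeal to ``a Bernstein or Chernoff bound'' does not produce the stated constant: Janson's inequality gives the exponent $p_*\mu\,\gamma^2/(2(1+\gamma))$, and $p_*\mu<t$. The genuine gap is Route~1, which cannot succeed because the tail inequality is false uniformly over $N\ge t$. Since $S_t\ge G_t$ and $\Pr(G_t>y)=(1-p_*)^{\lfloor y\rfloor}$, one always has $\Pr\bigl(S_t>(1+\gamma)\mu\bigr)\ge(1-p_*)^{(1+\gamma)\mu}$. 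Take $N=t=10$ and $\gamma=5$. Then $\mu=10H_{10}\approx 29.29$, so the left side is at least $0.9^{175}\approx e^{-18.4}$, while the claimed bound is $e^{-250/12}\approx e^{-20.8}$. More generally, for $t=N$ the tail is only polynomially small in $N$ (at least of order $N^{-(1+\gamma)}$), never $e^{-\Omega(N)}$. The heuristic behind Route~1 is also mistaken. The Chernoff parameter $\theta$ is shared by all summands and must satisfy $e^{\theta}(1-p_*)<1$, so the slowest waiting time caps the exponent, and no summand can ``contribute its own rate.''

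The defect you noticed in the expansion $t+\mathcal{O}(t^2/N)$ for $t$ near $N$ therefore affects the tail bound as well, and the lemma must be weakened rather than proved as written. Route~2 does this cleanly and completely. Apply Janson's bound with $\lambda=1+\gamma$ and use $p_*\mu=\sum_{i=1}^t\frac{N-t+1}{N-i+1}\ge t\cdot\frac{N-t+1}{N}$. This gives the stated inequality with $t$ replaced by $t(N-t+1)/N$. That is all the proof of Theorem~\ref{thm:coverage} needs, since there $t$ is fixed and $N=n_k\to\infty$. You should commit to that version, state $t\le N/2$ for the expectation asymptotics, and drop Route~1. The paper's proof has the same gap: it never checks the constant, and it could not, since the inequality fails as stated. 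One small point: $\gamma-\ln(1+\gamma)\ge\gamma^2/(2(1+\gamma))$ needs the derivative comparison on all of $[0,\infty)$, not only at $\gamma=0$. The difference has derivative $\gamma^2/(2(1+\gamma)^2)\ge0$, so this is easily fixed.
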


\begin{proof}
We sketch the proof here. 
For full proof refer to \citet{boneh1997coupon}.

We can write $S_t = \sum_{j = 0}^{t - 1} X_j$, where $X_j$ is the waiting time to observe a new coupon after $j$ distinct ones have been collected.  
Each $X_j$ is geometric with success probability $p_j = 1 - \frac{j}{N}$, so
\[
\mathbb{E}[X_j] = \frac{1}{p_j}, \quad 
\operatorname{Var}(X_j) = \frac{1 - p_j}{p_j^2} \le \frac{1}{p_j^2}.
\]
Then by linearity,
\[
\mathbb{E}[S_t] = \sum_{j = 0}^{t - 1} \frac{1}{1 - \frac{j}{N}} 
= N \sum_{i = N - t + 1}^{N} \frac{1}{i}.
\]
A Bernstein or Chernoff bound for geometric random variables yields the tail inequality.
\end{proof}

Within each block we draw one sample from every cluster,  
so each cluster $k$ is sampled $m_k = m / K$ times after $m$ total draws.
Fix $\gamma > 0$ and set
\begin{align}
m_k &= \left\lceil (1 + \gamma) n_k \left( H_{n_k} - H_{n_k - t} \right) \right\rceil, \\
m &= K \cdot m_k.
\end{align}
Then, under Assumption~\ref{assump:growing-clusters},
\begin{align}
\Pr\left( \text{any cluster has fewer than } t \text{ distinct samples} \right)
\le K \cdot \exp\left( -\frac{ \gamma^2 t }{ 2(1 + \gamma) } \right).
\end{align}
The failure probability becomes $O\left( \exp\left( -\gamma t \right) \right),$ for a constant $\gamma$.
Since $t$ is fixed and $n_k \to \infty$, we have
\begin{align}
n_k \left( H_{n_k} - H_{n_k - t} \right)
= t + \mathcal{O}\left( \frac{t^2}{n_k} \right)
= t \left( 1 + o(1) \right).
\end{align}
Therefore,
\begin{align}
m &= K (1 + \gamma) t (1 + o(1)) = O(K t),
\end{align}
samples suffice i.e., the total number of blocks growing linearly with $K$ (and $t$), independent of the cluster sizes, suffices to guarantee atleast $t$ samples from each cluster with probability $1 - \exp(-\gamma t)$.

\begin{remark}
This concentration is also tight as an upper bound. 
In particular, we can expect with high probability that the number of samples drawn from each cluster does not exceed $O(t)$.
This upper bound will be used in the proof of the spectral approximation guarantee (Theorem~\ref{thm:spectral}).
\end{remark}

\paragraph{Noisy case.}

We now consider the noisy case where $E \ne 0$.  
This setting is more challenging, as the sampling procedure may no longer guarantee exactly one sample from each cluster in every block.  
To address this issue, we show that despite the noise, there remains a constant probability (independent of $n$) that each block includes at least one sample from every cluster.  
This key property enables our analysis to proceed analogously to the noiseless case.

We now analyze the noisy case under Assumption~\ref{assump:eoe}, where the entries of the noise matrix satisfy $E_{ij} = E_{ji} \sim \operatorname{Exp}(\lambda)$ i.i.d.\ and are independent of the sampling procedure.
Consider $v$ consecutive sampling blocks, each of size $K$.  
Let $T_k^v$ denote the number of \emph{distinct} samples drawn from cluster $\mathcal{C}_k$ across these $v$ blocks.

Define the event $\mathcal{E}_k := \{T_k^v = 0\}$, i.e., cluster $\mathcal{C}_k$ is never selected in any of the $v$ blocks.  
Let $\mathcal{E} := \bigcup_{k=1}^K \mathcal{E}_k$ denote the failure event that \emph{some} cluster is missed entirely.

By the union bound,
\begin{align}
\Pr(\mathcal{E}) \le K \cdot \max_{1 \le k \le K} \Pr(\mathcal{E}_k).
\end{align}

To bound $\Pr(\mathcal{E}_k)$, note that
\begin{align}
T_k^v \ge \sum_{j=1}^{v} Z_j,
\end{align}
where $Z_j := \mathbf{1}\left\{ \text{cluster } \mathcal{C}_k \text{ is selected in block } j \right\}$ is an indicator variable.
This implies that  
\[
\Pr(\mathcal E_k) = \Pr\left(\sum_{j=1}^{v} Z_j = 0\right) = \Pr(Z_1 = 0)^v,
\]

since $Z_j$ is iid.
Let us now compute the $\Pr(Z_1 = 0)$.
Here 
\begin{align}
&\Pr(Z_1 = 0) = \Pr(\mathcal C_k \text{ is not selected in block }1) \nonumber \\
&= \Pr\left(\bigcap_{s=1}^K\middle\{\mathcal C_k \text{ is not selected in step }s\middle| \mathcal C_k \text{ is not selected in steps }1\text{ to }s-1\middle\}\right) \nonumber \\
&\leq \Pr\left(\middle\{\mathcal C_k \text{ is not selected in step }K\middle| \mathcal C_k \text{ is not selected in steps }1\text{ to }K-1\middle\}\right).
\end{align}
Define 
\[
\Pr\left(\middle\{\mathcal C_k \text{ is not selected in step }K\middle| \mathcal C_k \text{ is not selected in steps }1\text{ to }K-1\middle\}\right) : = r_k
\]

Fix cluster $\mathcal{C}_k$ and condition on the event that it was not selected in the first $K-1$ steps of a block. 
Let $\mathcal{S}_{\text{past}} = \{i_1, \dots, i_{K-1}\}$ be the sampled indices so far, with $\mathcal{S}_{\text{past}} \cap \mathcal{C}_k = \emptyset$. 
Recall the coverage at step $K$ as
\[
\operatorname{cov}^{(K)}(i) := \sum_{j \in \mathcal{S}_{\text{past}}} \widetilde{A}_{ji} = \sum_{s=1}^{K-1} (A_{0,\,i_s i} + E_{i_s i}).
\]
Since $A_0$ is block-diagonal and $\mathcal{C}_k$ was not selected in the past, we have:
\[
\operatorname{cov}^{(K)}(i) = 
\begin{cases}
\sum_{s=1}^{K-1} E_{i_s i} := S_i, & i \in \mathcal{C}_k, \\
\sum_{s=1}^{K-1} (p_{c(i)} + E_{i_s i}) = S_i + p_{c(i)}, & i \notin \mathcal{C}_k,
\end{cases}
\]
where $p_{c(i)} \in [p_{\min}, p_{\max}]$ and $S_i \sim \operatorname{Gamma}(K-1, \lambda)$ are independent across $i$.

\begin{remark}
In practice, the noise contributions from rows outside the current block should also be accounted for when computing the p,f. 
However, these terms can be incorporated straightforwardly and do not affect the core arguments. 
For clarity and brevity, we omit them from the main analysis.
\end{remark}

The inverse-coverage sampling probability at step $K$ is
\[
q^{(K)}(i) = \frac{\operatorname{cov}^{(K)}(i)^{-1}}{\sum_{j=1}^n \operatorname{cov}^{(K)}(j)^{-1}}.
\]
Therefore, the probability that $\mathcal{C}_k$ is not selected at step $K$ (i.e., the conditional miss probability) is given by
\begin{align}
r_k 
\leq \mathbb{E}\left[\frac{\sum_{i \notin \mathcal{C}_k} (S_i + p_{c(i)})^{-1}}{\sum_{i \in \mathcal{C}_k} S_i^{-1} + \sum_{i \notin \mathcal{C}_k} (S_i + p_{c(i)})^{-1}}\right].
\end{align}

Using $p_{c(i)} \in [p_{\min}, p_{\max}]$, we obtain the bound
\begin{align}
r_k \le 
\mathbb{E}\left[\frac{\sum_{i \notin \mathcal{C}_k} (S_i + p_{\min})^{-1}}{\sum_{i \in \mathcal{C}_k} S_i^{-1} + \sum_{i \notin \mathcal{C}_k} (S_i + p_{\min})^{-1}}\right].
\end{align}

For all $i \notin \mathcal{C}_k$, we have $(S_i + p_{\min})^{-1} \le p_{\min}^{-1}$. Therefore,
\[
\sum_{i \notin \mathcal{C}_k} (S_i + p_{\min})^{-1} \le \frac{n - n_k}{p_{\min}} \le \frac{n}{p_{\min}}.
\]

We now lower bound $\sum_{i \in \mathcal{C}_k} S_i^{-1}$. Since $S_i \sim \operatorname{Gamma}(K-1, \lambda)$, we have $\mathbb{E}[S_i^{-1}] = \lambda / (K-2)$ for $K \ge 3$. Thus,
\[
\mathbb{E}\left[\sum_{i \in \mathcal{C}_k} S_i^{-1} \right] = \frac{n_k \lambda}{K-2}.
\]

\begin{lemma}[Concentration of inverse-gamma sums]
\label{lem:inv-gamma-concentration}
Let $S_1, \dots, S_{n_k}$ be i.i.d.\ random variables with $S_i \sim \mathrm{Gamma}(\alpha, \lambda)$ for some $\alpha > 2$ and $\lambda > 0$. Then, with probability at least $1 - \exp(-c n_k)$ for some constant $c > 0$ depending only on $\alpha$, we have
\[
\sum_{i=1}^{n_k} \frac{1}{S_i} \ge \frac{1}{2} \cdot \frac{n_k \lambda}{\alpha - 1}.
\]
\end{lemma}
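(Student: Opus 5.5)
The plan is to apply a one-sided (lower-tail) Chernoff bound to the nonnegative variables $Y_i := 1/S_i$. This needs only a finite second moment, which is exactly what the hypothesis $\alpha > 2$ provides. Writing $\mathrm{Gamma}(\alpha,\lambda)$ in the rate parametrization, the first step is to compute the two moments of the inverse-gamma variable:
\[
\mu := \mathbb{E}[Y_i] = \frac{\lambda}{\alpha-1}, \qquad v := \mathbb{E}[Y_i^2] = \frac{\lambda^2}{(\alpha-1)(\alpha-2)}.
\]
Both follow from $\mathbb{E}[S^{-r}] = \lambda^r \Gamma(\alpha-r)/\Gamma(\alpha)$ for $r<\alpha$. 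The target event $\sum_i Y_i \ge \tfrac12 n_k \mu$ is then a deviation of $\tfrac12 n_k\mu$ below the mean.

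Next I would establish the lower-tail moment generating function bound for nonnegative variables. For $y \ge 0$ and $\theta \ge 0$ one has the elementary inequality $e^{-\theta y} \le 1 - \theta y + \tfrac12 \theta^2 y^2$. Taking expectations and using $1+x\le e^x$ gives
\[
\mathbb{E}\bigl[e^{-\theta Y_i}\bigr] \le \exp\bigl(-\theta\mu + \tfrac12\theta^2 v\bigr).
\]
By independence and Markov's inequality applied to $e^{-\theta\sum_i Y_i}$, for any $s>0$,
\[
\Pr\Bigl(\sum_{i=1}^{n_k} Y_i \le n_k\mu - s\Bigr) \le \exp\bigl(-\theta s + \tfrac12 n_k\theta^2 v\bigr).
\]
Choosing $\theta = s/(n_k v)$ yields the bound $\exp\bigl(-s^2/(2 n_k v)\bigr)$.

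Finally, substitute $s = \tfrac12 n_k \mu$. The exponent becomes
\[
\frac{n_k \mu^2}{8 v} = n_k\cdot\frac{\alpha-2}{8(\alpha-1)}.
\]
The factors of $\lambda$ cancel, so $c = (\alpha-2)/(8(\alpha-1)) > 0$ depends only on $\alpha$, as claimed. The only delicate point is that $Y_i$ is heavy-tailed above, so a two-sided sub-Gaussian or sub-exponential bound is unavailable. The argument above sidesteps this because only the lower tail is needed, and nonnegativity makes the second moment sufficient. Checking the elementary inequality (for instance via Taylor's theorem with remainder, since the third derivative of $e^{-x}$ is nonpositive) is routine.
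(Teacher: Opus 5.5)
Your proof is correct, and it is more careful than the paper's. The paper's entire justification is one sentence. It records $\mathbb{E}[S_i^{-1}] = \lambda/(\alpha-1)$ and $\operatorname{Var}(S_i^{-1}) = \lambda^2/((\alpha-1)^2(\alpha-2))$, then invokes ``standard concentration for sums of i.i.d.\ sub-exponential variables.'' As you observe, that premise fails. The Gamma density behaves like $s^{\alpha-1}$ near $0$, so $\Pr(1/S_i > y) \asymp y^{-\alpha}$ as $y \to \infty$. Hence $\mathbb{E}[e^{\theta/S_i}] = \infty$ for every $\theta>0$, and no sub-exponential (two-sided Bernstein) bound is available. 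Your one-sided Chernoff argument rests on $e^{-x}\le 1-x+x^2/2$ for $x\ge 0$. It needs only nonnegativity and the finite raw second moment $\mathbb{E}[S_i^{-2}] = \lambda^2/((\alpha-1)(\alpha-2))$, which is exactly where $\alpha>2$ is used. It also produces the explicit, $\lambda$-free constant $c = (\alpha-2)/(8(\alpha-1))$. The paper's route buys only brevity, and making it rigorous would require a patch. For instance, one could apply Hoeffding's or Bernstein's inequality to the bounded truncations $\min(1/S_i, M)$, with $M$ a suitable $\alpha$-dependent multiple of $\lambda$, and note that their sum lower-bounds $\sum_i 1/S_i$. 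Your argument supplies that rigor directly and is the cleaner way to establish the lemma.
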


\noindent
This uses the facts that $\mathbb{E}[S_i^{-1}] = \frac{\lambda}{\alpha - 1}$ and $\operatorname{Var}(S_i^{-1}) = \frac{\lambda^2}{(\alpha - 1)^2 (\alpha - 2)}$, and applies standard concentration for sums of i.i.d.\ sub-exponential variables.
From the concentration bound above,
\[
\sum_{i \in \mathcal{C}_k} S_i^{-1} \ge \frac{1}{2} \cdot \frac{n_k \lambda}{K-2}.
\]
  
On the high-probability event that the denominator is at least $\frac{n_k \lambda}{2(K-2)}$, we get:
\[
r_k \le \frac{\frac{n}{p_{\min}}}{\frac{n_k \lambda}{2(K-2)}} + \frac{\delta}{2} = \frac{2n (K-2)}{p_{\min} \lambda n_k} + \frac{\delta}{2}.
\]

To ensure $r_k \le \delta$, it suffices to require
\[
\frac{2n (K-2)}{p_{\min} \lambda n_k} \le \frac{\delta}{2} \quad \Rightarrow \quad p_{\min} \lambda \ge \frac{4n (K-2)}{\delta n_k},
\]
which is true due to Assumption~\ref{assump:low-noise}

Now fix $v = C\log K$. Then,
\[
\Pr(\mathcal{E}_k) \le \delta^{\log K} = K^{\log \delta} = K^{-\gamma}, \quad \text{where } \gamma := -C\log \delta > 0.
\]

Choosing $C$ such that $-C\log{\delta} > 1$.
By the union bound,
\[
\Pr(\mathcal{E}) \le K \cdot \max_k \Pr(\mathcal{E}_k) \le K \cdot K^{-\gamma} = K^{1 - \gamma}.
\]
Since $\gamma > 1$, we have $\Pr(\mathcal{E}) \to O(K^{1- \gamma})$.

In the presence of noise, each block of size $K$ no longer guarantees exactly one sample from each cluster. 
However we showed that with high probability that $C \log K$ such blocks suffice to cover all $K$ clusters.

Once a cluster is selected in a block, the sample from that cluster is uniformly drawn from its $n_k$ members. 
Thus, conditioned on a cluster being selected, the sampling of rows from that cluster proceeds as uniform-with-replacement from $[n_k]$.

We now repeat the coupon collector argument within each cluster. 
Let $m_k = C \log K$ denote the number of blocks in which cluster $\mathcal{C}_k$ is selected (so $m_k$ is random, but with high probability $m_k \ge C \log K$ for all $k$). 

To obtain $t$ distinct rows from each cluster, it suffices to ensure that at least $t$ distinct elements are drawn uniformly from $[n_k]$ over $m_k$ such trials. This is the same setting as Lemma~\ref{lem:partial-coupon}.
By that result, it suffices to have
\[
m_k \ge (1 + \gamma) n_k (H_{n_k} - H_{n_k - t}) \approx (1 + \gamma) t
\]
with failure probability at most $\exp(-\gamma t / 2)$. Since $m_k = C \log K$ with high probability and $t$ is fixed, this condition holds for sufficiently large $K$.

Putting this together, we conclude that
\[
m = K \cdot C (1+\gamma) t \log K 
\]
total samples suffice to obtain $t$ distinct rows from each cluster with high probability, even in the presence of noise. 
This mirrors the noiseless case, except that we now require $\log K$ blocks (instead of $K$ samples directly) to ensure cluster coverage.

\begin{remark}
    In contrast, under uniform i.i.d.\ sampling without any adaptive structure, the probability of selecting a sample from a specific cluster $\mathcal{C}_k$ in a single draw is proportional to its size, i.e., $n_k / n$.  

For small clusters, this probability can be vanishingly small, making it unlikely to obtain even a single sample from such clusters within a block of fixed size.  
As a result, ensuring coverage across all clusters requires significantly more samples, leading to much weaker and less scalable sample complexity guarantees.
\end{remark}
\begin{remark}
It is easy to show that, with high probability, each block yields a unique sampled index. 
This justifies the assumption that accumulated coverage has independent noise.
\end{remark}
\end{proof}
\section{Proof of Theorem~\ref{thm:spectral}}

\makeatletter
\setcounter{theorem}{0} 
\renewcommand{\thetheorem}{2}  
\begin{theorem}[Spectral Approximation of $A_0$]
Let $\widehat{A} =  \widetilde{A}_S^\top \widetilde{A}_{S,S}^+ \widetilde{A}_S$ denote the estimator formed by low-rank reconstruction from  $m (\geq C(\delta) t K \log K)$ adaptively sampled rows of $\widetilde{A}$. Then,
\begin{align}
\| \widehat{A} - A_0 \|_{\rm op} \leq \frac{D n}{\sqrt{t} \cdot \min_k p_k},
\end{align}
with high probability, where $p_k$ is the cluster affinity for cluster $\mathcal{C}_k$, and $D := D(\lambda, K, \vec{p}) > 0$ depends on the parameter $\lambda$ of the noise matrix, the number of clusters $K$, and the affinities vector $\vec{p}$.
\end{theorem}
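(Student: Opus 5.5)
The proof splits $\widehat{A}-A_0$ into a noiseless Nyström part and a perturbation part. We work on the event of Theorem~\ref{thm:coverage}: with $m \ge C(\delta) tK\log K$, every cluster $\mathcal{C}_k$ contributes at least $t$ sampled rows, and by the accompanying remark at most $O(t)$ rows, with probability at least $1-\delta$.

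\emph{Step 1: exact recovery in the noiseless case.} Set $A_S := (A_0)_S$ and $A_{S,S} := (A_0)_{S,S}$. Since $S$ meets every block, the column space of $A_S^\top$ contains each indicator $\mathbf{1}_{\mathcal{C}_k}$. Hence $\operatorname{rank}(A_{S,S})=K=\operatorname{rank}(A_0)$, and the standard Nyström identity gives $A_S^\top A_{S,S}^+ A_S = A_0$ exactly. This can also be checked directly: $A_{S,S}$ is block diagonal with blocks $p_k\mathbf{1}_{T_k\times T_k}$, whose pseudoinverse is $(p_k T_k^2)^{-1}\mathbf{1}$.

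\emph{Step 2: perturbation.} Write $\widetilde{A}_S = A_S + E_S$ and $\widetilde{A}_{S,S}=A_{S,S}+E_{S,S}$, and expand
\[
\widehat{A}-A_0 = E_S^\top \widetilde{A}_{S,S}^+\widetilde{A}_S + A_S^\top \widetilde{A}_{S,S}^+ E_S + A_S^\top\bigl(\widetilde{A}_{S,S}^+ - A_{S,S}^+\bigr)A_S ,
\]
dropping second-order terms or absorbing them into these three. The needed norms are as follows.
\begin{itemize}
\item $\|A_S\|_{\rm op} \le \max_k p_k\sqrt{T_k n_k} \lesssim p_{\max}\sqrt{tn}$.
\item $\|A_{S,S}^+\|_{\rm op} = \max_k (p_k T_k)^{-1} \le (p_{\min} t)^{-1}$.
\item $\|E_S\|_{\rm op}$ and $\|E_{S,S}\|_{\rm op}$ are controlled by splitting $E = \lambda^{-1}\mathbf{1}\mathbf{1}^\top + (E - \mathbb{E}E)$. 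The centered part has operator norm $O(\sqrt{n}/\lambda)$ w.h.p.\ by standard bounds for symmetric matrices with independent sub-exponential entries. The mean part contributes $\lambda^{-1}\sqrt{mn}$ on $E_S$.
\end{itemize}
For the pseudoinverse, use Wedin's bound $\|\widetilde{A}_{S,S}^+ - A_{S,S}^+\|_{\rm op} \lesssim \|A_{S,S}^+\|_{\rm op}\,\|\widetilde{A}_{S,S}^+\|_{\rm op}\,\|E_{S,S}\|_{\rm op}$. This requires rank control, so we first truncate $\widetilde{A}_{S,S}$ to its top $K$ singular directions, or argue that the small noise singular values do not blow up. Assumption~\ref{assump:low-noise} ($p_{\min}\lambda \gtrsim n$) makes $\|E_{S,S}\|_{\rm op}\lesssim m/\lambda$ small compared with $\sigma_K(A_{S,S}) \ge p_{\min}t$. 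This is what gives $\|\widetilde{A}_{S,S}^+\|_{\rm op}\le 2/(p_{\min} t)$.

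\emph{Step 3: collect the bounds.} Each term is a product of the form $\|A_S\|\cdot\|A_{S,S}^+\|\cdot\|E\|$ or $\|A_S\|^2\,\|A_{S,S}^+\|^2\,\|E_{S,S}\|$. Substituting the estimates above gives bounds such as $p_{\max}\sqrt{tn}\cdot (p_{\min} t)^{-1}\cdot \sqrt{n}\,(\lambda^{-1}+\ldots)$, which is of order $n/(\sqrt{t}\,p_{\min})$ times a factor depending on $\lambda$, $K$ and $p_{\max}$. All such factors, including the $K\log K$ dependence entering through $m$ and the mean-shift term, are absorbed into $D(\lambda,K,\vec p)$. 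A union bound over the coverage event and the random-matrix concentration events gives the high-probability statement.

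\emph{Main obstacle.} The hard part is the pseudoinverse step. $\widetilde{A}_{S,S}$ is generically full rank $m>K$, so its smallest singular values come from noise, and $\widetilde{A}_{S,S}^+$ could be huge; Wedin's bound applies only under equal rank. My first attempt would be to interpret $\widetilde{A}_{S,S}^+$ as the rank-$K$ truncated pseudoinverse, which is standard in Nyström analyses. If the statement must hold for the plain pseudoinverse, I would instead use the exponential-entry structure to lower-bound $\sigma_{\min}(\widetilde{A}_{S,S})$ by $\Omega(1/\lambda)$, or bound the diagonal-dominance contribution of $E_{ii}$. I expect this to cost additional factors, which would go into $D$.
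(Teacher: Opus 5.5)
Your skeleton matches the paper's: exact noiseless Nyström recovery from the rank condition, an exact expansion of $\widehat A-A_0$ into a pseudoinverse-perturbation term plus cross terms, $\|A_S\|_{\rm op}\lesssim p_{\max}\sqrt{tn}$, $\|A_{S,S}^+\|_{\rm op}\le 1/(p_{\min}t)$, and sub-exponential concentration for the noise. But it has a genuine gap exactly where you suspect, and neither fallback closes it.

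Every term in your Step 3 relies on $\|\widetilde A_{S,S}^+\|_{\rm op}\le 2/(p_{\min}t)$. For the plain pseudoinverse this is false. Once $|S|>K$, Weyl gives $\sigma_{K+1}(\widetilde A_{S,S})\le\sigma_{K+1}(A_{S,S})+\|E_{S,S}\|_{\rm op}=\|E_{S,S}\|_{\rm op}$. Since $\widetilde A_{S,S}$ is a.s.\ invertible, this yields $\|\widetilde A_{S,S}^+\|_{\rm op}\ge 1/\|E_{S,S}\|_{\rm op}$, and the low-noise assumption you invoke makes this larger, not smaller.

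The lower bound $\sigma_{\min}(\widetilde A_{S,S})=\Omega(1/\lambda)$ is not available either. On the complement of the indicators $\mathbf 1_{S\cap\mathcal C_k}$, the mean of $\widetilde A_{S,S}$ is only $\lambda^{-1}I$, while the centered noise block has spectral radius $\asymp\sqrt{|S|}/\lambda$. So eigenvalues of size $O(1/(\lambda\sqrt{|S|}))$ occur with non-vanishing probability, and any small-ball bound has constants depending on $|S|$, hence on $t$. Even granting that bound, consider a unit eigenvector $u$ of $\widetilde A_{S,S}$, nearly orthogonal to those indicators, with eigenvalue $\mu$. It contributes $\mu^{-1}(E_{\bar S,S}u)(E_{\bar S,S}u)^\top$ to the unsampled block of $\widehat A$, with norm $\approx n/(\lambda^2|\mu|)$. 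With $|\mu|\asymp 1/\lambda$ this is $\asymp n/\lambda$, with no $1/\sqrt t$ factor.

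Reading $\widetilde A_{S,S}^+$ as the rank-$K$ truncated pseudoinverse does make your Weyl/Wedin step valid, since then $\sigma_K(\widetilde A_{S,S})\ge p_{\min}t/2$ and the ranks are equal. But then you are proving a statement about a different estimator.

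Second, Step 3 mishandles the noise mean, which you were right to split off. Since $m\ge C(\delta)tK\log K$, the factor $\sqrt m$ carries a $\sqrt t$. With your own bounds, the mean part of the cross term is $p_{\max}\sqrt{tn}\cdot\frac{2}{p_{\min}t}\cdot\frac{\sqrt{mn}}{\lambda}=\frac{2p_{\max}}{p_{\min}}\cdot\frac{n}{\lambda}\sqrt{m/t}$. This does not decay in $t$ and cannot be absorbed into a $t$-independent $D(\lambda,K,\vec p)$.

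This is not slack in the estimate. Write $U=[\mathbf 1_{\mathcal C_1},\dots,\mathbf 1_{\mathcal C_K}]$. Then $A_0+\lambda^{-1}\mathbf 1\mathbf 1^\top=U(\mathrm{diag}(\vec p)+\lambda^{-1}\mathbf 1_K\mathbf 1_K^\top)U^\top$ has rank $K$, so Nyström reproduces it, and an error of order $n/\lambda$ survives for every $t$.

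For comparison, the paper uses the same decomposition but handles the pseudoinverse with a Neumann series around $A_{S,S}^{-1}$, taking $\|A_{S,S}^{-1}\|=1/(tp_{\min})$. That inverse does not exist, because $A_{S,S}$ is $|S|\times|S|$ of rank $K$. The paper also applies matrix Bernstein as though the $\mathrm{Exp}(\lambda)$ entries were centered. So the paper passes over the same two points rather than supplying the missing idea.

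What your Weyl/Wedin argument does support is the stated bound for the rank-$K$ truncated pseudoinverse with centered noise. This treats $S$ as independent of $E$, as both you and the paper implicitly do. For $\widetilde A_{S,S}^+$ with $\mathrm{Exp}(\lambda)$ noise as stated, the $1/\sqrt t$ rate does not follow from this route.
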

\addtocounter{theorem}{-1}  
\renewcommand{\thetheorem}{\arabic{section}.\arabic{theorem}} 
\makeatother

\begin{proof}
    To build intuition we first consider the case where $E =0$. 
    In this case
    \[
    \widehat A = A_{0}
    \]

    When there are $t$ samples available per row, under our block structure assumption we can say that 
    $\operatorname{rank}(A_S) =K$, the number of unique clusters.
    The following lemma allows us to then perfectly reconstruct $A_0$.
\begin{lemma}[\citet{owen2009bi}]
Let $\mathbf{M} \in \mathbb{R}^{n \times n}$ be a matrix, and suppose $\mathbf{M}_1 \in \mathbb{R}^{s \times n}$ consists of a subset of $s$ rows of $\mathbf{M}$ such that $\operatorname{rank}(\mathbf{M}_1) = \operatorname{rank}(\mathbf{M})$. Let $\mathbf{M}_{11} \in \mathbb{R}^{s \times s}$ denote the corresponding submatrix formed from $\mathbf{M}_1$. Then,
\[
\mathbf{M} = \mathbf{M}_1^\top \mathbf{M}_{11}^{+} \mathbf{M}_1.
\]
\end{lemma}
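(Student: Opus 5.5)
The plan is to prove the identity by pure linear algebra. I will use only the rank hypothesis and the symmetry of $\mathbf{M}$; symmetry is the setting in which the lemma is applied, since $A_0$ in Assumption~\ref{assump:block} is symmetric. Symmetry is also needed for the formula to make sense: $\mathbf{M}_1^\top$ must play the role of the column block $\mathbf{M}_{:,S}$. Write $S$ for the sampled index set, so that $\mathbf{M}_1=\mathbf{M}_{S,:}$ and $\mathbf{M}_{11}=\mathbf{M}_{S,S}$. The proof has three steps: express $\mathbf{M}$ through its sampled rows, transfer that expression to the sampled columns using symmetry, and then remove the pseudoinverse using the defining property $\mathbf{M}_{11}\mathbf{M}_{11}^+\mathbf{M}_{11}=\mathbf{M}_{11}$.

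\emph{Step 1.} The rows of $\mathbf{M}_1$ are rows of $\mathbf{M}$, so the row space of $\mathbf{M}_1$ is contained in that of $\mathbf{M}$. Since $\operatorname{rank}(\mathbf{M}_1)=\operatorname{rank}(\mathbf{M})$, the two row spaces coincide. Hence every row of $\mathbf{M}$ is a linear combination of rows of $\mathbf{M}_1$, and there is a matrix $\mathbf{X}\in\mathbb{R}^{n\times s}$ with $\mathbf{M}=\mathbf{X}\mathbf{M}_1$.

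\emph{Step 2.} Transposing and using $\mathbf{M}=\mathbf{M}^\top$ gives $\mathbf{M}=\mathbf{M}_1^\top\mathbf{X}^\top$. Now restrict this identity to the rows indexed by $S$. On the left we get $\mathbf{M}_{S,:}=\mathbf{M}_1$. On the right we get $(\mathbf{M}_1^\top)_{S,:}\mathbf{X}^\top$, and $(\mathbf{M}_1^\top)_{S,:}=\mathbf{M}_{S,S}^\top=\mathbf{M}_{11}$. Therefore
\[
\mathbf{M}_1=\mathbf{M}_{11}\mathbf{X}^\top,
\qquad\text{and by transposition}\qquad
\mathbf{M}_1^\top=\mathbf{X}\mathbf{M}_{11}.
\]
This step is the crux, and it is where I expect the main subtlety. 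One has to see that the column space of $\mathbf{M}_1$ lies inside the column space of the small block $\mathbf{M}_{11}$. That is exactly what makes $\mathbf{M}_{11}^+$ act as a genuine inverse on the relevant subspace. For non-symmetric matrices this fails without the stronger hypothesis $\operatorname{rank}(\mathbf{M}_{11})=\operatorname{rank}(\mathbf{M})$. Here symmetry supplies it: the factorization $\mathbf{M}_1=\mathbf{M}_{11}\mathbf{X}^\top$ forces $\operatorname{rank}(\mathbf{M}_{11})\ge\operatorname{rank}(\mathbf{M}_1)=\operatorname{rank}(\mathbf{M})$.

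\emph{Step 3.} Combine the two factorizations:
\[
\mathbf{M}_1^\top\mathbf{M}_{11}^+\mathbf{M}_1
=\mathbf{X}\,\mathbf{M}_{11}\mathbf{M}_{11}^+\mathbf{M}_{11}\,\mathbf{X}^\top
=\mathbf{X}\mathbf{M}_{11}\mathbf{X}^\top
=\mathbf{X}\mathbf{M}_1
=\mathbf{M}.
\]
The first equality substitutes both identities from Step 2, the second uses $\mathbf{M}_{11}\mathbf{M}_{11}^+\mathbf{M}_{11}=\mathbf{M}_{11}$, and the last two undo Step 2 and Step 1.

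In the application, $\mathbf{M}=A_0$. Once each of the $K$ clusters contributes at least one sampled row (guaranteed with high probability by Theorem~\ref{thm:coverage}), the sampled rows contain one representative $p_k\mathbf{1}^\top$ per block. Hence $\operatorname{rank}(A_{0,S})=K=\operatorname{rank}(A_0)$, and the lemma yields exact recovery in the noise-free case.
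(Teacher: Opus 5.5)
Your argument is correct, but it is not the route the paper takes. The paper gives no proof of its own and simply invokes the bi-cross-validation lemma of Owen and Perry. In its original form, that result concerns a general, possibly non-symmetric, block matrix $\begin{pmatrix}\mathbf{M}_{11}&\mathbf{M}_{12}\\ \mathbf{M}_{21}&\mathbf{M}_{22}\end{pmatrix}$. It gives $\mathbf{M}_{22}=\mathbf{M}_{21}\mathbf{M}_{11}^{+}\mathbf{M}_{12}$ under the hypothesis that the corner block $\mathbf{M}_{11}$ has the same rank as the whole matrix. The standard argument goes through a rank factorization whose sampled pieces have full rank. Consistently with this, when the paper applies the lemma it checks $\operatorname{rank}(A_{S,S})=K$, not the row-block condition written in the statement.

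You instead work from the hypothesis as stated, on the row block $\mathbf{M}_1$, and use symmetry twice. First, it identifies $\mathbf{M}_1^\top$ with the column block $\mathbf{M}_{:,S}$. Second, it pushes the row-space identity $\mathbf{M}=\mathbf{X}\mathbf{M}_1$ down to $\mathbf{M}_1=\mathbf{M}_{11}\mathbf{X}^\top$, which is exactly what upgrades the row-block rank condition to the corner-block one. The cited version buys generality: non-symmetric matrices, with the true column block in place of $\mathbf{M}_1^\top$. Yours is elementary and self-contained, and it proves the lemma exactly as it is used, under the weaker-looking hypothesis.

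You are also right that symmetry cannot be dropped. It is worth saying so explicitly, since the printed statement omits it. Take $\mathbf{M}=\begin{pmatrix}1&2\\0&0\end{pmatrix}$ and $S=\{1\}$. Then $\operatorname{rank}(\mathbf{M}_1)=1=\operatorname{rank}(\mathbf{M})$ and $\mathbf{M}_{11}=1$, yet $\mathbf{M}_1^\top\mathbf{M}_{11}^{+}\mathbf{M}_1=\begin{pmatrix}1&2\\2&4\end{pmatrix}\neq\mathbf{M}$. So the statement as printed is false for general $\mathbf{M}$. Since $A_0$ is symmetric, nothing is lost in the application.
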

Take $M_1 = A_S$ and $M_{11} = A_{S,S}$. 
With high probability Theorem~\ref{thm:coverage} implies that $\operatorname{rank} (A_{S,S}) = \operatorname{rank}(A) = K.$
This
gives us the result that
$A_0 = \widetilde{A}_S^\top \widetilde{A}_{S,S}^+ \widetilde{A}_S$,
with high probability.

Now we move to the noisy case, when $E \neq 0$. 
In this case let us define the noiseless version of the $\widetilde{A}_S$ and $\widetilde A_{S,S}$ as $A_{S,S}$ and $A_{S}$

Define:
\begin{align*}
A_S &= A_0[S,:], \quad &
A_{S,S} &= A_0[S,S], \\
E_S &= \widetilde A_S - A_S, \quad &
E_{S,S} &= \widetilde A_{S,S} - A_{S,S}.
\end{align*}

Then $\widehat A = \widetilde A_S^\top \widetilde A_{S,S}^+ \widetilde A_S = (A_S + E_S)^\top (A_{S,S} + E_{S,S})^+ (A_S + E_S)$.

Let $M := A_{S,S} + E_{S,S}$. Expanding and regrouping,
\begin{align*}
\widehat A - A_0
&= A_S^\top (M^+ - A_{S,S}^+) A_S 
+ A_S^\top M^+ E_S 
+ E_S^\top M^+ A_S 
+ E_S^\top M^+ E_S \\
&= T_1 + T_2 + T_3 + T_4.
\end{align*}

Taking operator norms,
\begin{align*}
\| \widehat A - A_0 \| 
&\leq \|A_S\|^2 \cdot \|M^+ - A_{S,S}^+\| 
+ 2 \|A_S\| \cdot \|M^+\| \cdot \|E_S\| 
+ \|E_S\|^2 \cdot \|M^+\|.
\end{align*}

We now bound each term.
We first analyze the structure of the matrices $A_S$ and $A_{S,S}$ under the block-constant model.
Since $A_0$ is block-diagonal with $K$ clusters, and each block $A^{(k)} = p_k \cdot \mathbf{1}_{n_k \times n_k}$, any row $i$ of $A_S$ corresponding to cluster $\mathcal{C}_k$ has the form
\[
(A_S)_{i,j} =
\begin{cases}
p_k & \text{if } j \in \mathcal{C}_k, \\
0 & \text{otherwise}.
\end{cases}
\]
Hence, $\|(A_S)_{i,:}\|_2 = p_k \cdot \sqrt{n_k} \le p_{\max} \sqrt{n}$, since $n_k \le n$ and $p_k \le p_{\max}$, where $\|\|_{2}$ is the euclidean norm.

There are $t$ such rows per cluster for $K$ clusters, so $A_S \in \mathbb{R}^{tK \times n}$. 
Remember that the upper bound is tight (See Remark after proof of Theorem~\ref{thm:spectral}). 
Using the bound
\[
\|A_S\| \le \max_i \| (A_S)_{i,:} \|_2 \cdot \sqrt{tK} \le p_{\max} \sqrt{n} \cdot \sqrt{tK} = p_{\max} \sqrt{t n}.
\]

Next, consider $A_{S,S}$, the $tK \times tK$ submatrix of $A_0$ on rows $S$. 
$A_{S,S}$ consists of $K$ disjoint blocks, each of size $t \times t$ and equal to $p_k \cdot \mathbf{1}_{t \times t}$. 
Each such block has rank 1 and top eigenvalue $t p_k$.

Therefore,
\[
\sigma_{\min}(A_{S,S}) = t p_{\min}, \quad \sigma_{\max}(A_{S,S}) = t p_{\max}, \quad \|A_{S,S}^+\| = \frac{1}{t p_{\min}}.
\]

\textbf{Noise bounds.}  
We use the matrix Bernstein inequality \cite{bernstein2009matrix} for rectangular matrices with independent mean-zero sub-exponential entries.

\begin{lemma}[Matrix Bernstein Inequality, Rectangular Case {\cite{bernstein2009matrix}}]
Let $Z \in \mathbb{R}^{m \times n}$ be a random matrix with independent, mean-zero entries: $\mathbb{E}[Z_{ij}] = 0$ for all $i,j$. Assume each entry is sub-exponential, with
\[
\|Z_{ij}\|_{\psi_1} \le v \quad \text{for all } i,j,
\]
where $\|\cdot\|_{\psi_1}$ denotes the Orlicz norm associated with the sub-exponential function $\psi_1(x) = \exp(x) - 1$:
\[
\|X\|_{\psi_1} := \inf\left\{ t > 0 : \mathbb{E}[\exp(|X|/t) - 1] \le 1 \right\}.
\]
Define the row/column variance proxy
\[
\sigma^2 := \max \left\{ \max_i \sum_{j=1}^n \mathbb{E}[Z_{ij}^2], \;\; \max_j \sum_{i=1}^m \mathbb{E}[Z_{ij}^2] \right\}.
\]
Then there exists an absolute constant $c > 0$ such that for all $t > 0$, the operator norm of $Z$ satisfies
\[
\mathbb{P}\left( \|Z\| > t \right) \le (m + n) \exp\left( -c \min\left\{ \frac{t^2}{\sigma^2}, \frac{t}{v} \right\} \right).
\]
\end{lemma}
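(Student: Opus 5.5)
The plan is to reduce the statement to the sub-exponential matrix Bernstein inequality for sums of independent self-adjoint matrices, in the form of \citet{bernstein2009matrix} (Tropp's moment version). I will then verify the moment condition entry by entry.

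\textbf{Reduction to a sum.} Write $Z=\sum_{i,j} Z_{ij}\, e_i f_j^\top$, where $e_i\in\mathbb{R}^m$ and $f_j\in\mathbb{R}^n$ are standard basis vectors. Pass to the Hermitian dilation
\[
\mathcal{D}(Z)=\begin{pmatrix}0 & Z\\ Z^\top & 0\end{pmatrix}=\sum_{i,j} Z_{ij}\, G_{ij},
\qquad
G_{ij}:=\begin{pmatrix}0 & e_i f_j^\top\\ f_j e_i^\top & 0\end{pmatrix}.
\]
This dilation is a sum of independent, mean-zero, self-adjoint $(m+n)\times(m+n)$ matrices, and it satisfies $\|\mathcal{D}(Z)\|=\|Z\|$. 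The dimensional factor $m+n$ in the tail bound comes from exactly this dilation.

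\textbf{Moment condition.} The key algebraic fact is $G_{ij}^2=P_{ij}:=\mathrm{diag}(e_ie_i^\top, f_jf_j^\top)$, an orthogonal projection. Consequently $G_{ij}^p=P_{ij}$ for even $p$ and $G_{ij}^p=G_{ij}$ for odd $p$. Since $-P_{ij}\preceq G_{ij}\preceq P_{ij}$, for every $p\ge2$ we get
\[
\mathbb{E}\big[(Z_{ij}G_{ij})^p\big]\preceq \mathbb{E}|Z_{ij}|^p\, P_{ij}.
\]
From $\|Z_{ij}\|_{\psi_1}\le v$ we have the standard moment bound $\mathbb{E}|Z_{ij}|^p\le p!\,v^p$. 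The aim is to put this in the form $\mathbb{E}|Z_{ij}|^p\le \tfrac{p!}{2}R^{p-2}a_{ij}^2$ with $R=Cv$ and with $a_{ij}^2$ comparable to $\mathbb{E}Z_{ij}^2$.

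Given that, Tropp's sub-exponential Bernstein bound gives
\[
\Pr(\|\mathcal{D}(Z)\|\ge s)\le (m+n)\exp\!\Big(-\frac{s^2/2}{\sigma_A^2+Rs}\Big),
\qquad
\sigma_A^2=\Big\|\sum_{i,j} a_{ij}^2 P_{ij}\Big\|.
\]
The matrix $\sum_{i,j} a_{ij}^2 P_{ij}$ is diagonal, with entries equal to the row sums and the column sums of $(a_{ij}^2)$. Hence $\sigma_A^2$ is exactly the maximum row/column sum, which is the $\sigma^2$ of the statement when $a_{ij}^2\asymp \mathbb{E}Z_{ij}^2$. Finally, splitting according to which of $\sigma_A^2$ and $Rs$ dominates the denominator gives $\exp(-c\min\{s^2/\sigma^2, s/v\})$.

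\textbf{Main obstacle.} The hard step is matching the moment condition to the \emph{true} variance proxy. The bound $p!\,v^p$ only controls $\mathbb{E}|Z_{ij}|^p$ by $v^2$, not by $\mathbb{E}Z_{ij}^2$. For sparse, spiky entries one can have $\mathbb{E}Z_{ij}^2\ll v^2$.

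\emph{First attempt: interpolation.} I would first try the Cauchy--Schwarz interpolation
\[
\mathbb{E}|Z|^p\le (\mathbb{E}Z^2)^{1/2}(\mathbb{E}|Z|^{2p-2})^{1/2}.
\]
This yields a moment bound of the required shape with $a_{ij}^2\approx v\,(\mathbb{E}Z_{ij}^2)^{1/2}$, which is still weaker than $\mathbb{E}Z_{ij}^2$.

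\emph{Second attempt: truncation.} If that fails, I would truncate each entry at level $v\log(mn)$. The bounded part is handled by the standard bounded matrix Bernstein inequality, with variance $\le\sigma^2$ and uniform bound $v\log(mn)$. The tails are handled by a union bound over $\Pr(|Z_{ij}|>u)\le 2e^{-u/v}$. The logarithmic factor appearing here is the gap to the stated form, and it must be absorbed into $c$.

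\emph{Where the argument stands.} In the setting where the lemma is actually used, $E_S$ is a centered exponential matrix, so $\|Z_{ij}\|_{\psi_1}\asymp(\mathbb{E}Z_{ij}^2)^{1/2}$. There the choice $a_{ij}^2=2v^2$ gives $\sigma_A^2\asymp\sigma^2$ directly, and the argument closes. Extending this to arbitrary mean-zero entries with $\|Z_{ij}\|_{\psi_1}\le v$ is the step I expect to resist, and it is the step I would scrutinize most carefully.
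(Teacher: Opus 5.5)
Your proposal does not prove the lemma as stated, and the gap is exactly the step you flagged. It cannot be closed: with the true variance proxy $\sigma^2$, the $\psi_1$ parameter $v$, and an absolute constant $c$, the lemma is false. (The paper gives no proof of it; it only cites it.)

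\textbf{Counterexample.} Take $m=n$ and i.i.d.\ entries $Z_{ij}=a\,\varepsilon_{ij}B_{ij}$, with $\varepsilon_{ij}$ Rademacher and $B_{ij}\sim\mathrm{Bernoulli}(q)$, where $q=1/(Mn)$.
\begin{itemize}
\item The parameters are $\|Z_{ij}\|_{\psi_1}=a/\log(1+1/q)=:v$ and $\sigma^2=nqa^2=a^2/M$.
\item Set $t=a\sqrt{k}/2$, and let $n$ be the smallest integer with $\log(Mn)\ge \sqrt{k}M/2$. Then $t^2/\sigma^2=kM/4\le t/v$.
\item The right-hand side is therefore at most $2n\,e^{-ckM/4}\le e^{-ckM/8}$ once $\sqrt{k}\ge 4/c$ and $M$ is moderately large.
\item If row $1$ has $k$ nonzero entries, then $\|Z\|\ge a\sqrt{k}>t$. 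This event has probability at least $\binom{n}{k}q^k(1-q)^n\gtrsim M^{-k}/k!\ge e^{-k\log(kM)-O(1)}$.
\item Fix $k$ and let $M\to\infty$. The left side then exceeds the right side, for every $c>0$.
\end{itemize}

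\textbf{Why your attempts fail.} The mechanism is the one you suspected. For sparse entries, $v$ is smaller than the entry size $a$ by a factor $\log(1+1/q)$. Any moment condition $\mathbb{E}|Z_{ij}|^p\le\frac{p!}{2}R^{p-2}a_{ij}^2$ with $a_{ij}^2\le C\,\mathbb{E}Z_{ij}^2$ forces $R\ge a/(3C)$ already at $p=3$, since $\mathbb{E}|Z_{ij}|^3=qa^3$.
\begin{itemize}
\item No interpolation trick can fix this.
\item The $\log(mn)$ from your truncation depends on the dimensions, so it cannot be absorbed into an absolute $c$.
\item As written, truncating at $v\log(mn)$ makes the union bound $2mn\,e^{-\log(mn)}=2$, which is not small.
\end{itemize}

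\textbf{What your argument does prove.} It correctly establishes the $\psi_1$-proxy version. The dilation facts $G_{ij}^2=P_{ij}$ and $\pm G_{ij}\preceq P_{ij}$, together with $\mathbb{E}|Z_{ij}|^p\le p!\,v^p=\frac{p!}{2}v^{p-2}(2v^2)$, give the subexponential matrix Bernstein hypothesis with $R=v$ and $A_{ij}^2=2v^2P_{ij}$. Hence
\[
\Pr\bigl(\|Z\|\ge t\bigr)\le (m+n)\exp\Bigl(-\tfrac14\min\Bigl\{\tfrac{t^2}{2v^2\max(m,n)},\,\tfrac{t}{v}\Bigr\}\Bigr).
\]
This is the lemma with $\sigma^2$ replaced by $2v^2\max(m,n)$, which is at least $\sigma^2$ because $\mathbb{E}Z_{ij}^2\le 2v^2$.

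This weaker form is all the paper uses. For centered $\mathrm{Exp}(\lambda)$ entries, $v\asymp 1/\lambda$ and $\mathbb{E}Z_{ij}^2=1/\lambda^2$, so the two proxies agree up to constants. You should state and prove that version. If you insist on the true $\sigma^2$, you must accept a loss in the $t/v$ term; in the example above that loss is a factor of order $\log(1/q)$.
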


The entries of $E_S$ and $E_{S,S}$ are centered exponentials with $\psi_1$ norm $O(1/\lambda)$ and variance $O(1/\lambda^2)$. Therefore, with high probability,
\[
\|E_S\| \le \frac{C \sqrt{n}}{\lambda}, \quad
\|E_{S,S}\| \le \frac{C \sqrt{s}}{\lambda} = \frac{C \sqrt{t K}}{\lambda}.
\]

\textbf{Neumann series expansion.}  
If $\|E_{S,S}\| < \sigma_{\min}(A_{S,S})$, then $M = A_{S,S} + E_{S,S}$ is invertible and we may expand:
\[
M^{-1} = A_{S,S}^{-1} \sum_{k=0}^{\infty} \left( - E_{S,S} A_{S,S}^{-1} \right)^k,
\]
which converges when $\|E_{S,S} A_{S,S}^{-1}\| < 1$.

Since $\|A_{S,S}^{-1}\| = 1 / (t p_{\min})$, it suffices that
\[
\|E_{S,S}\| < \frac{1}{2} \cdot \frac{1}{\|A_{S,S}^{-1}\|} = \frac{t p_{\min}}{2},
\]
which holds if $t \ge C K / (\lambda^2 p_{\min}^2)$.

Hence,
\[
\|M^+\| \le \frac{2}{t p_{\min}}, \quad
\|M^+ - A_{S,S}^+\| \le \|A_{S,S}^+\|^2 \cdot \|E_{S,S}\| \cdot \sum_{k=0}^\infty \|E_{S,S} A_{S,S}^+\|^k
\le \frac{2 \|E_{S,S}\|}{(t p_{\min})^2}.
\]

Substituting the noise bound gives:
\[
\|M^+ - A_{S,S}^+\| \le \frac{2 C \sqrt{K}}{t^{3/2} p_{\min}^2 \lambda}.
\]

\textbf{Putting it all together.}
\begin{align*}
\|T_1\| &\le \|A_S\|^2 \cdot \|M^+ - A_{S,S}^+\|
\le p_{\max}^2 t n \cdot \frac{2 C \sqrt{K}}{t^{3/2} p_{\min}^2 \lambda}
= \frac{2 C p_{\max}^2 n \sqrt{K}}{p_{\min}^2 \lambda \sqrt{t}}, \\
\|T_2\| &= \|T_3\| \le \|A_S\| \cdot \|M^+\| \cdot \|E_S\| 
\le p_{\max} \sqrt{t n} \cdot \frac{2}{t p_{\min}} \cdot \frac{C \sqrt{n}}{\lambda}
= \frac{2 C p_{\max} n}{p_{\min} \lambda \sqrt{t}}, \\
\|T_4\| &\le \|E_S\|^2 \cdot \|M^+\| 
\le \left( \frac{C \sqrt{n}}{\lambda} \right)^2 \cdot \frac{2}{t p_{\min}}
= \frac{2 C^2 n}{\lambda^2 p_{\min} t}.
\end{align*}

The last term is lower order, so we absorb it into the constant.

Define:
\[
D(\lambda, K, \vec{p}) := \frac{2 C p_{\max}^2 \sqrt{K}}{\lambda} + \frac{4 C p_{\max}}{\lambda} + \frac{2 C^2 \sqrt{K}}{\lambda^2}.
\]

Then with high probability,
\[
\|\widehat A - A_0\| \le \frac{D(\lambda, K, \vec{p}) \cdot n}{\sqrt{t} \cdot p_{\min}}.
\]
\end{proof}

%% file: appendix_algo.tex
Algorithm~\ref{alg:balances_app} outlines an expanded BALANS procedure for batch correction. 
The algorithm takes as input a data matrix $X \in \mathbb{R}^{n \times d}$, associated batch labels $\vec b$, and three hyperparameters: 
the local neighborhood size $k$, 
a stopping threshold $\tau$ controlling convergence, 
and the block length $J$ for adaptive sampling.

At a high level, BALANS iteratively constructs a sparse affinity matrix by selectively sampling informative rows.
Sampling is performed in blocks of size $J$, using a temporary coverage vector $c_J$ to favor underrepresented regions of the data.
For each sampled index, BALANS computes an affinity row using a batch-conditional local scale kernel (see Algorithm~\ref{alg:bandwidth}). 
The sampling halts once $\tau$ consecutive blocks fail to introduce any new affinities.

After sufficient coverage is achieved, a low-rank affinity completion procedure (Algorithm~\ref{alg:completion}) reconstructs the full affinity matrix, which is then used to obtain the corrected representation $\widehat{X}$.

\begin{algorithm}[H]
\caption{BALANS Batch Correction}
\label{alg:balances_app}
\begin{algorithmic}[1]
\State \textbf{Input:} Data $X \in \mathbb{R}^{n \times d}$, batch labels $\vec b \in \{1,\dots,B\}^n$, nearest neighbor parameter $k$, stopping criteria $\tau$, reset interval $K$
\State \textbf{Output:} Corrected data matrix $\widehat{X} \in \mathbb{R}^{n \times d}$
\vspace{2mm}
\State Compute PCA on $X$ (optional)
\State Initialize cumulative coverage $c \gets \mathbf{0}$, sampled index set $S \gets \emptyset$
\State Initialize temporary coverage $c_J \gets \mathbf{0}$
\State Initialize $\texttt{no\_change\_count} \gets 0$
\While{$\texttt{no\_change\_count} < \tau$}
    \For{$j = 1$ to $J$}
        \State Sample index $i_t$ using Adaptive Sampling (Alg.~\ref{alg:adaptive-sampling})
        \State Compute sparse affinity row $A_{i_t,:}$ using Local Scale Kernel (Alg.~\ref{alg:bandwidth})
        \State $A_{\text{S}} \gets$ append $A_{i_t,:}$ as new row
        \State Let $\Delta \gets \text{number of indices where } A_{i_t,:} > 0 \text{ and } c_J = 0$
        \If{$\Delta = 0$}
            \State $\texttt{no\_change\_count} \gets \texttt{no\_change\_count} + 1$
        \Else
            \State $\texttt{no\_change\_count} \gets 0$
        \EndIf
        \State Update cumulative coverage: $c \gets c + A_{i_t,:}$
        \State Update temporary coverage: $c_J \gets c_J + A_{i_t,:}$
        \State $S \gets S \cup \{i_t\}$
    \EndFor
    \State Reset temporary coverage: $c_J \gets \mathbf{0}$
\EndWhile
\State Use Low-Rank Affinity Completion (Alg.~\ref{alg:completion}) to obtain corrected $\widehat{X} \in \mathbb{R}^{n \times d}$
\State \Return $\widehat{X}$
\end{algorithmic}
\end{algorithm}

\subsection{Batch-Dependent Local Scale computation}
Algorithm~\ref{alg:bandwidth} details the computation of a single sparse affinity row centered at point $x_i$, incorporating batch-dependent local scales. 
For each batch, a separate kernel bandwidth $\sigma_{i,b^\prime}$ is estimated using the distance to the $k$-th nearest neighbor in that batch. 
These batch-dependent scales are then assigned to each data point based on its batch label and used to compute affinities. 
The resulting affinity row is sparsified using elbow thresholding, where we detect the cutoff point beyond which affinities drop sharply. 
Specifically, we use the ruptures library with a sliding window change point detection algorithm: we sort the affinity values in decreasing order and apply a sliding window over this sequence. 
At each step, we compute the mean squared error of a two-segment piecewise constant fit (before and after the candidate change point). 
 At each step, we compute the total within-segment variance before and after the candidate change point and look for a drop that exceeds a quintile-based threshold (the 80th percentile of all such drops encountered during the scan)
This batch-aware design preserves local geometry while reducing inter-batch bias in affinity computation.

\begin{algorithm}[H]
\caption{Batch-Conditional Local Bandwidth Affinity for a Single Row}
\label{alg:bandwidth}
\begin{algorithmic}[1]
\State \textbf{Input:} Sampled row $x_i \in \mathbb{R}^d$, data matrix $X \in \mathbb{R}^{n \times d}$, batch labels $b$, bandwidth parameter $k$
\State \textbf{Output:} Sparse affinity row $A_{i,:} \in \mathbb{R}^n$
\vspace{2mm}
\State Compute distance vector: $d_j \gets \|x_i - x_j\|^2$ for all $j = 1$ to $n$
\State For each batch $b'$, compute $\sigma_{i,b'} \gets \|x_i - x_{b'}^{(k)}\|$ \Comment{$k$-th nearest neighbor in batch $b'$}
\State Broadcast $\sigma_{i,j} \gets \sigma_{i,b_j}$ for all $j$
\State Compute $A_{i,j} \gets \exp\left(-\frac{d_j}{\sigma_{i,j}^2}\right)$ for all $j$
\State Sparsify $A_{i,:}$ by elbow thresholding.
\State \Return $A_{i,:}$
\end{algorithmic}
\end{algorithm}

\subsection{Low Rank Smoothening Transform}
Algorithm~\ref{alg:completion} describes the low-rank completion step used to recover the full corrected data matrix $\widehat{X}$ from the sparsely sampled affinity matrix $A_S$. 
The method avoids constructing the full $n \times n$ affinity matrix, relying instead on efficient projections using the matrix $A_r$. 
The procedure first maps the data to a low-dimensional subspace via $A_r \cdot X$, then back-projects using $A_r^\top$ to obtain a smoothed reconstruction. 
A final renormalization ensures that the effective propagation matrix $P = A_r^\top A_r$ is row-stochastic. 

\begin{algorithm}[H]
\caption{Low-Rank Smoothening}
\label{alg:completion}
\begin{algorithmic}[1]
\State \textbf{Input:} Sparse affinity matrix $A_S \in \mathbb{R}^{s \times n}$, data matrix $X \in \mathbb{R}^{n \times d}$
\State \textbf{Output:} Completed data matrix $\widehat{X} \in \mathbb{R}^{n \times d}$
\vspace{1mm}

\State $r \gets$ row-sum vector of $A_S$ \Comment{$r_i = \sum_j A_S[i, j]$}
\State $D_r \gets \mathrm{diag}(1 ./ r)$ \Comment{Row normalization scaling}
\State $A_r \gets D_r \cdot A_S$ \Comment{Each row of $A_r$ sums to 1}

\State $Y \gets A_r \cdot X$ \Comment{Project data to low-rank subspace}

\State $c \gets A_r^\top \cdot \mathbf{1}_s$ \Comment{Column weights for renormalization}
\State $Z \gets A_r^\top \cdot Y$ \Comment{Back-project to full dimension}

\State $\widehat{X} \gets \mathrm{diag}(1 ./ c) \cdot Z$ \Comment{Normalize rows to make $P = A_r^\top A_r$ row-stochastic}

\State \Return $\widehat{X}$
\end{algorithmic}
\end{algorithm}

\subsubsection{Coverage-Based Adaptive Sampling}

Algorithm~\ref{alg:adaptive-sampling} selects  informative rows in each sampling step based on a coverage vector $c$. 
When uncovered points exist (i.e., entries with $c_j = 0$), the algorithm samples uniformly among them to ensure coverage expansion. 
Otherwise, it assigns sampling weights inversely proportional to coverage, favoring points that have been sampled less frequently.

\begin{algorithm}[H]
\caption{Coverage-Based Adaptive Sampling}
\label{alg:adaptive-sampling}
\begin{algorithmic}[1]
\State \textbf{Input:} Coverage vector $c \in \mathbb{R}^n$
\State \textbf{Output:} Index $i_t$ to sample at step $t$
\vspace{2mm}
\State $Z \gets \{j \in [n] : c_j = 0\}$ \Comment{Indices with zero coverage}
\If{$|Z| > 0$}
    \State Set $p_j = \frac{1}{|Z|}$ if $j \in Z$, else $p_j = 0$
\Else
    \For{each $j \in \{1, \dots, n\}$}
        \State Compute weight $w_j = \frac{1}{c_j}$
    \EndFor
    \State Normalize weights: $p_j = \frac{w_j}{\sum_k w_k}$
\EndIf
\State Sample $i_t \sim \text{Categorical}(p_1, \dots, p_n)$
\State \Return $i_t$
\end{algorithmic}
\end{algorithm}

%% file: run_time.tex
\makeatletter
\setcounter{theorem}{0} 
\renewcommand{\thetheorem}{3}  
\begin{theorem}[Computational Complexity of \textsc{BALANS}]
Let $n$ be the number of data points, $d$ the feature dimension, and $m = |S|$ the number of adaptively sampled rows after BALANS converges. 
Then the total computational complexity of \textsc{BALANS} is
\(
\mathcal{O}(n m (d + \log n)),
\)
where the first term accounts for sparse affinity construction and matrix multiplications, and the second term covers nearest neighbor search. 
When $m \ll n$, this is approximately linear in $n$.
\end{theorem}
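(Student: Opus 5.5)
The plan is a direct cost accounting over the phases of Algorithm~\ref{alg:balances_app}. Each of the $m$ sampled indices triggers one call to the adaptive sampler (Algorithm~\ref{alg:adaptive-sampling}) and one call to the local scale kernel (Algorithm~\ref{alg:bandwidth}). After the loop, a single call to the low-rank smoothing step (Algorithm~\ref{alg:completion}) is made. I will bound the per-iteration cost by $\mathcal{O}(n(d+\log n))$ and the final step by $\mathcal{O}(nmd)$. Summing these bounds gives the claim.

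\textbf{Per-row affinity construction.} For a sampled index $i_t$, computing all squared distances $d_j=\|x_i-x_j\|^2$ costs $\mathcal{O}(nd)$. The batch-dependent scale $\sigma_{i,b'}$ needs the $k$-th smallest distance within each batch $b'$. Sorting the distance vector once, or sorting each batch's sub-vector, costs $\mathcal{O}(n\log n)$ in total. Linear-time selection would give $\mathcal{O}(n)$, but I will quote the $n\log n$ bound since it is what the statement charges to nearest-neighbour search. The same sort is reused to order the affinities in decreasing order for the elbow sparsification. Broadcasting $\sigma_{i,j}$ and evaluating the Gaussian kernel costs $\mathcal{O}(n)$. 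For the sliding-window change-point detection I will argue that a window of constant width, with prefix sums for the two-segment mean-squared errors, costs $\mathcal{O}(n)$ per scan. The 80th-percentile threshold is found by selection in $\mathcal{O}(n)$. Hence each row costs $\mathcal{O}(n(d+\log n))$.

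\textbf{Bookkeeping and sampling.} Each iteration of the while-loop does the following:
\begin{itemize}
\item computes $\Delta$;
\item updates the vectors $c$ and $c_J$;
\item resets $c_J$ at most once per block of $J$ steps;
\item builds and samples from the categorical distribution in Algorithm~\ref{alg:adaptive-sampling}.
\end{itemize}
Each of these costs $\mathcal{O}(n)$, so summed over $m$ iterations they add $\mathcal{O}(nm)$. The stopping rule with threshold $\tau$ only fixes the value of $m$, which is a parameter of the theorem. The affinity phase therefore totals $\mathcal{O}(nm(d+\log n))$.

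\textbf{Smoothing step.} $A_S$ has $m$ rows and at most $n$ nonzeros per row, so $\mathrm{nnz}(A_S)\le nm$. The steps of Algorithm~\ref{alg:completion} cost as follows:
\begin{itemize}
\item row sums and normalization: $\mathcal{O}(nm)$;
\item $Y=A_rX$: $\mathcal{O}(\mathrm{nnz}(A_S)\,d)=\mathcal{O}(nmd)$;
\item $c=A_r^\top\mathbf{1}$: $\mathcal{O}(nm)$;
\item $Z=A_r^\top Y$: $\mathcal{O}(nmd)$;
\item the final diagonal rescaling: $\mathcal{O}(nd)$.
\end{itemize}
The dense $n\times n$ matrix and the pseudoinverse are never formed, as the text explains. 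The optional PCA is treated as a preprocessing cost; it is bounded by $\mathcal{O}(nd^2)$, or $\mathcal{O}(ndr)$ for a randomized rank-$r$ PCA. I would either exclude it as optional or note that it is dominated when $d\lesssim m$.

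Adding the three phases gives $\mathcal{O}(nm(d+\log n))$, and for $m\ll n$ this is near-linear in $n$. The main obstacle is justifying that the elbow detection is linear (or at most $n\log n$) per row, because the library routine is described only informally. I would handle this by fixing the window width as a constant and using prefix sums, so that each candidate split is evaluated in $\mathcal{O}(1)$. If a cleaner statement is wanted, I would present the bound conditional on this implementation choice.
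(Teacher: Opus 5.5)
Your proposal is correct and uses essentially the same cost accounting as the paper's proof. Each sampled row costs $\mathcal{O}(n(d+\log n))$: distances, per-batch sorting for the $k$-th neighbour, kernel evaluation, $\mathcal{O}(n)$ worst-case elbow detection and sampling; the sparse products in the smoothing step add a worst-case $\mathcal{O}(nmd)$, and PCA is treated as optional. One small slip leaves the bound unchanged: a single global sort by distance does not give the affinity order, because $\sigma_{i,j}$ depends on the batch of $j$, so for the elbow step you need either a merge of the per-batch sorted lists or a fresh $\mathcal{O}(n\log n)$ sort of the affinity row.
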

\addtocounter{theorem}{-1}  
\renewcommand{\thetheorem}{\arabic{section}.\arabic{theorem}} 
\makeatother

\begin{proof}
Let $m = |S|$ be the number of indices selected before the stopping criterion is met. 
Let $k$ denote the local neighborhood size used in affinity computation (assumed constant in the complexity analysis). 
We examine each major step of Algorithms~\ref{alg:balances_app}--\ref{alg:completion}:

In each of the $m$ sampling steps, the algorithm performs the following operations:

\begin{itemize}
    \item \textit{Sampling an index:} Drawing from the adaptive distribution over $n$ elements can be done in $O(n)$ time.
    \item \textit{Computing distances:} Computing $\|x_i - x_j\|^2$ for all $j \in [n]$ requires $O(n d)$ time.
    \item \textit{Local bandwidth estimation:} Sorting distances within each batch to find the $k$-th neighbor takes $\mathcal{O}(n \log n)$ total (shared across batches).
    \item \textit{Affinity computation:} Applying the kernel and normalizing takes $\mathcal{O}(n)$ time, subsumed by sorting.
    \item \textit{Elbow sparsification:} Change point detection via a sliding window over the sorted affinities uses variance-based segmentation and early stopping, costing $\mathcal{O}(K)$ per row in practice and at worst $\mathcal{O}(n)$.
\end{itemize}

Thus, each iteration costs $\mathcal{O}(n(d + \log n))$, and $m$ such iterations cost
\[
\mathcal{O}(m n (d + \log n)).
\]

Given the sparse matrix $A_S \in \mathbb{R}^{m \times n}$, the following matrix operations are performed:

\begin{itemize}
\item \textit{Row normalization:} Computing the row sums and normalizing $A_S$ costs $\mathcal{O}(m n)$ in the worst case. However, since each row of $A_S$ contains at most $K$ nonzeros after sparsification, the practical cost is $\mathcal{O}(m K)$.
\item \textit{Projection:} Computing $Y = A_r X$ has worst-case cost $\mathcal{O}(m n d)$, but in practice only $\mathcal{O}(m K d)$ due to sparsity.
\item \textit{Back-projection:} Computing $Z = A_r^\top Y$ similarly costs $\mathcal{O}(m K d)$ in practice.
\item \textit{Final normalization:} Scaling $Z$ by $\mathrm{diag}(1./c)$ requires touching all entries in $Z$ and costs $\mathcal{O}(n d)$.
\end{itemize}

Thus, while the worst-case complexity is $\mathcal{O}(m n d)$, the practical runtime is typically much lower---approximately $\mathcal{O}(m K d)$---due to the controlled sparsity of the affinity rows.

\paragraph{3. PCA (Optional).}
If PCA is applied once to $X$ at the beginning to reduce dimensionality to $d_{\text{PCA}}$, it costs $\mathcal{O}(n d d_{\text{PCA}})$ via randomized SVD. 

\paragraph{4. Total Complexity.}
Combining all dominant terms:
\[
\mathcal{O}(m n (d + \log n)) + \mathcal{O}(m n d) = \mathcal{O}(m n (d + \log n)).
\]

Since $m \ll n$ due to the convergence criterion based on coverage, the overall cost is approximately linear in $n$.
\end{proof}

%% file: ablations.tex
We evaluate the sensitivity of \textsc{BALANS} to its three primary hyperparameters: (a) the nearest neighbor parameter $k$ used to compute local bandwidths, (b) the stopping criterion $\tau$, defined as the number of consecutive adaptive sampling rounds with no change in the sampled set, and (c) the block length $K$, which determines the number of samples drawn in each adaptive sampling block.

To assess robustness, we conduct experiments on both real-world datasets \textsc{JUMP 1} and \textsc{DEEP 1} and on synthetic datasets. 
Details of the real-world benchmarks are provided in Appendix~\ref{app:realworld}. 
For the synthetic setting, we use a structured Gaussian mixture model described in Appendix~\ref{app:synthetic}, consisting of 50{,}000 data points, 100 compounds, and 5 batches.

For each setting, we systematically vary one hyperparameter while holding the others fixed and report performance scores based on overall score. 

We find that \textsc{BALANS} achieves consistently strong performance across a wide range of hyperparameter settings. 
This robustness is observed in both real and synthetic scenarios, indicating that the method is robust to  parameter tuning within a wide range accross all hyperparameters.
This is summarised in Figure~\ref{fig:ablations}.

Based on these results, we recommend using the following default ranges in practice:
\[
k \in [5, 15], \qquad \tau \in [20, 100], \qquad K \in [20, 50].
\]

\textbf{Remark.} Unless otherwise stated, we use $k = 5$, $\tau = 50$, and $K = 50$ for all our experiments.

\begin{figure}[t]
    \centering
    \includegraphics[width=\linewidth]{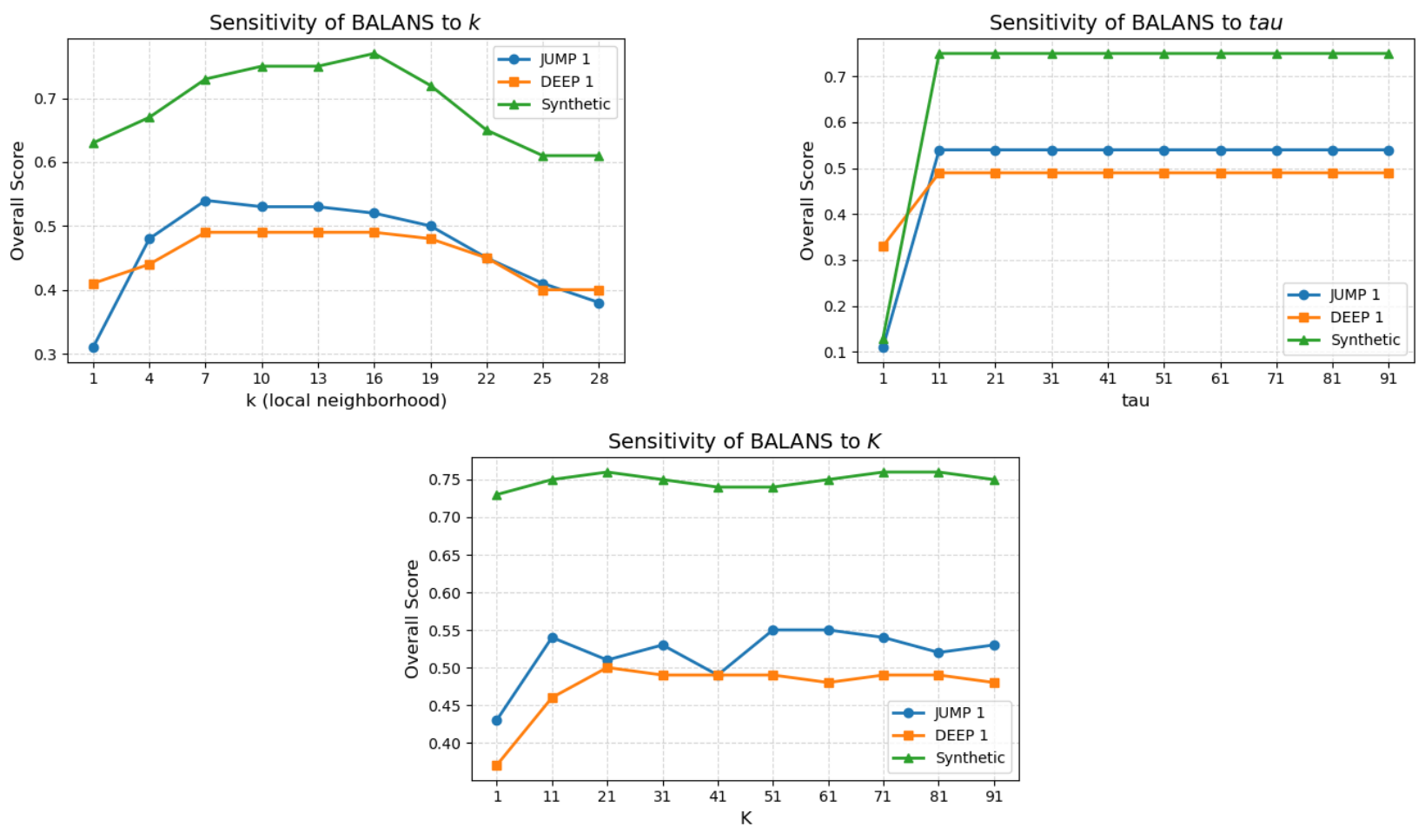}
    \caption{Sensitivity of \textsc{BALANS} to the hyperparameter $k$ across three datasets: JUMP 1, DEEP 1, and Synthetic. Each curve shows the overall score for varying values of $k$, demonstrating robustness across a wide range.}
    \label{fig:ablations}
\end{figure}


%% file: dataset.tex
All experiments were run using 4 NVIDIA GPUs (A100 or V100), with a maximum wall-clock time cutoff of 12 hours per method. 
For large-scale datasets, some baseline methods (CCA, RPCA) were unable to complete within this time limit, and are reported as timeouts where applicable.

We evaluate performance across a variety of real-world and synthetic datasets. 
Refer to Appendix~\ref{app:realworld} and Appendix~\ref{app:synthetic} for full dataset descriptions, including details on CellProfiler and DeepProfiler features, compound and batch parameters, and data dimensionality. 
All necessary preprocessing and postprocessing steps are described in the corresponding appendices.

\textsc{BALANS} was implemented in Python using efficient NumPy and SciPy routines. 
All affinity computations and matrix operations were performed using sparse matrix algebra with CSR (Compressed Sparse Row) format provided by `scipy.sparse`. 
Adaptive sampling was implemented with vectorized NumPy operations, and affinity sparsification leveraged nearest neighbor distances computed via `pd` . 
Unless otherwise stated, we use $k = 5$ (number of neighbors for local bandwidth estimation), $\tau = 50$ (early stopping threshold based on convergence of sampled rows), and $K = 50$ (block size for adaptive sampling). 
All experiments were run using a Snakemake pipeline for job orchestration and reproducibility, including automatic logging, checkpointing, and memory management.
We do not use PCA for our experiments, though we provide functionality in code to use it.
We did not see the use of PCA (when all dimensions are kept) change the results.

We compare \textsc{BALANS} to a wide range of competitive batch correction methods. 
Appendix~\ref{app:methods} provides complete implementation details, including hyperparameter configurations, early stopping criteria, and evaluation protocols used for each baseline.
For methods, we used either author-recommended default settings mostly from \cite{arevalo2024evaluating}.

All evaluation metrics are defined in Appendix~\ref{app:metrics}. 
These include metrics for biological signal preservation and batch correction.

%% file: evaluation_method.tex
\begin{table}[h]
\centering
\caption{Comparison of baseline and competing batch correction methods.}
\label{tab:method_comparison}
\begin{tabular}{l l l l}
\toprule
\textbf{Method} & \textbf{Category} & \textbf{Requires Batch Labels} & \textbf{Recomputation Needed} \\
\midrule
\textbf{BALANS} & Kernel + Sparse Sampling & Yes   & No  \\
Combat \cite{zhang2020combat}          & Statistical        & Yes   & No  \\
Sphering \cite{kessy2018optimal}       & Statistical        & No    & No  \\
CCA \cite{satija2015spatial}             & Linear Projection  & Yes   & No  \\
RPCA    \cite{satija2015spatial}         & Linear Projection  & No    & No  \\
FastMNN    \cite{haghverdi2018batch}     & Graph-based / KNN  & Yes   & Yes \\
Scanorama   \cite{hie2019efficient}    & Graph-based / KNN  & Yes   & Yes \\
Harmony \cite{korsunsky2019fast}        & Graph-based / KNN  & Yes   & Yes \\
\bottomrule
\end{tabular}
\end{table}

We conducted a comprehensive review of batch correction methods with a primary focus on those developed for single-cell RNA-seq (scRNA-seq), owing to the maturity and breadth of the literature in that domain. 
Based on a recent benchmarking analysis \citep{luecken2022benchmarking}, we selected a representative subset of top-performing methods that met the following criteria: 
(i) availability in Python or R, 
(ii) popularity according to citations, and (iii) coverage of diverse methodological approaches. 
These include linear models, mixture models, and nearest neighbor-based techniques. 
Table~\ref{tab:method_comparison} summarizes these methods.

\paragraph{Linear Methods}
\begin{itemize}
    \item \textbf{Combat}~\citep{zhang2020combat}: Models batch effects as additive and multiplicative noise, using a Bayesian framework to fit and correct for these effects via linear modeling.
    There are no hyperparameters that were tuned.
    \item \textbf{Sphering}~\citep{kessy2018optimal}: Applies a whitening transformation derived from negative control samples to the full dataset. 
    This method requires each batch to contain negative controls.
    A grid search was performed over the regularization parameter, and the best-performing value based on overall score was selected.

\end{itemize}

\paragraph{Mixture Model-Based Method}
\begin{itemize}
\item \textbf{Harmony}~\citep{korsunsky2019fast}: Alternates between clustering data into batch-diverse groups and correcting within clusters via an expectation-maximization (EM) framework. 
Key hyperparameters include the number of clusters and the diversity penalty parameter $\lambda$. 
We followed the hyperparameter choices used in \cite{arevalo2024evaluating}, which recommends default values.
\end{itemize}

\paragraph{Nearest Neighbor-Based Methods}
\begin{itemize}
    \item \textbf{fastMNN}~\citep{haghverdi2018batch}: A faster variant of MNN that performs PCA before nearest neighbor computation to scale better on large datasets.
    Key hyperparameters include the number of mutual nearest neighbors ($k$) and the number of principal components ($d$) retained during dimensionality reduction. 
    We follow the settings used in \cite{arevalo2024evaluating}.

    \item \textbf{Scanorama}~\citep{hie2019efficient}: Uses approximate nearest neighbor search in a low-dimensional space across all datasets, relaxing assumptions of shared subpopulations between batch pairs.
    Key hyperparameters include the number of neighbors ($k$) and the target dimensionality for integration. 
    Default values are provided by  \citet{arevalo2024evaluating}.

    \item \textbf{Seurat-CCA}~\citep{satija2015spatial}: Employs canonical correlation analysis to find a joint low-dimensional space, assuming substantial overlap in cell types across batches.
    Hyperparameters include the number of canonical correlation vectors ($\texttt{num.cc}$) and the number of integration anchors ($\texttt{k.anchor}$). 
    We used the values suggested in \cite{arevalo2024evaluating}.

    \item \textbf{Seurat-RPCA}~\citep{satija2015spatial}: Uses reciprocal PCA, allowing for greater heterogeneity across batches and offering better runtime on large datasets.
    The main hyperparameters are the number of principal components ($\texttt{n.pcs}$) and the number of integration anchors ($\texttt{k.anchor}$), both set to default values recommended by the authors.
\end{itemize}

These methods vary widely in their assumptions, computational requirements, and input dependencies. 
Most require explicit batch labels, except \textbf{Sphering}, which relies on negative controls. Four methods (\textbf{fastMNN}, \textbf{MNN}, \textbf{Harmony}, \textbf{Scanorama}) require recomputation of batch correction across the full dataset upon addition of new samples.
Table~\ref{tab:method_comparison}
 summarizes these methods.

%% file: metrics.tex
\begin{table}[h]
\centering
\caption{Summary of evaluation metrics used in our study.}
\label{tab:metrics_summary}
\begin{tabular}{l l l}
\toprule
\textbf{Metric} & \textbf{Description} & \textbf{Measures} \\
\midrule
Graph Connectivity \citep{luecken2022benchmarking}
    & Connectivity of same-batch cells post-correction & Batch \\
kBET \citep{buttner2019test}                  & Batch mixing in local neighborhoods (statistical test) & Batch \\
LISI (Batch) \citep{simpson1949measurement}           & Local batch entropy based on neighborhood diversity & Batch \\
Silhouette (Batch) \citep{rousseeuw1987silhouettes}    & Separation between batches in embedding space & Batch \\
LISI (Label)    \citep{simpson1949measurement}       & Local label entropy to assess biological clustering & Bio \\
Silhouette (Label) \citep{rousseeuw1987silhouettes}     & Separation of biological classes or labels & Bio \\
Leiden ARI   \citep{traag2019louvain}          & Agreement with ground-truth biological labels & Bio \\
Leiden NMI    \citep{traag2019louvain}         & Normalized mutual information with true labels & Bio \\
\bottomrule
\end{tabular}
\end{table}

To evaluate the performance of batch correction methods in image-based profiling, we adopt a suite of ten metrics, grouped into two categories: 
those measuring \textit{batch removal efficiency} and those assessing the \textit{preservation of biological variation}. 
These metrics are adapted from prior work in the single-cell RNA-seq domain \citep{luecken2022benchmarking}, and implemented using the \texttt{scib} package \citep{luecken2022benchmarking}.
Each metric is normalized to the $[0, 1]$ range, where 1 denotes ideal performance.
Table~\ref{tab:metrics_summary} summarizes these metrics.

\paragraph{Batch Removal Metrics}

\begin{itemize}

    \item \textbf{Silhouette Batch (ASW-Batch)} \citep{rousseeuw1987silhouettes}: 
    Computes the average silhouette width using batch labels as cluster assignments. 
    For each point $i$, let $a(i)$ be the average distance to other points in the same batch, and $b(i)$ the minimum average distance to points in a different batch. 
    Then the silhouette score $s(i)$ is defined as:
    \[
    s(i) = \frac{b(i) - a(i)}{\max\{a(i), b(i)\}}.
    \]
    The ASW-Batch score is the mean of $s(i)$ across all points. 
    Lower values indicate better batch mixing.
    But for our studies note that all scores are renormalised between $[0,1]$ such that higher values always indicate better performance

    \item \textbf{LISI Batch (bLISI)}\citep{simpson1949measurement}:
    The Local Inverse Simpson’s Index (LISI) quantifies the effective number of batches within a local neighborhood. 
    For each point $i$, let $\{p_{i,b}\}_{b=1}^B$ denote the proportion of neighbors belonging to batch $b$:
    \[
    \text{bLISI}(i) = \left( \sum_{b=1}^B p_{i,b}^2 \right)^{-1}.
    \]
    Higher bLISI values indicate more uniform batch mixing within neighborhoods.

    \item \textbf{kBET (k-nearest neighbor Batch Effect Test)} \citep{buttner2019test}:
    For each point, the null hypothesis is that batch labels in its $k$-nearest neighborhood are drawn from the global batch distribution. 
    A $\chi^2$ test is performed at each point and the fraction of tests that do not reject the null (i.e., accept) defines the kBET acceptance rate. Denote this as:
    \[
    \text{kBET} = \frac{1}{n} \sum_{i=1}^n \mathbf{1}\{\text{local distribution} \approx \text{global}\}.
    \]
    Higher acceptance indicates reduced batch effect.

    \item \textbf{Graph Connectivity} \citep{luecken2022benchmarking}:
    Constructs a $k$-nearest neighbor graph and evaluates the average connectivity within each compound. For compound $c$ with points $\mathcal{I}_c$, let $G$ be the adjacency matrix:
    \[
    \text{Conn}(c) = \frac{1}{|\mathcal{I}_c|^2} \sum_{i,j \in \mathcal{I}_c} G_{ij}, \quad \text{GC} = \frac{1}{C} \sum_{c=1}^C \text{Conn}(c).
    \]
    Higher values indicate that biologically similar cells remain connected post-correction.
    
    If all of them are close it means that they occur from diverse batches.

\end{itemize}

\paragraph{Biological Conservation Metrics}

\begin{itemize}

    \item \textbf{Silhouette Label (ASW-Label)} \cite{rousseeuw1987silhouettes}: 
    Same as ASW-Batch, but uses compound labels (e.g., perturbation identity) as the cluster assignment. High ASW-Label scores indicate well-separated biological clusters:
    \[
    \text{ASW-Label} = \frac{1}{n} \sum_{i=1}^n s(i), \quad s(i) = \frac{b(i) - a(i)}{\max\{a(i), b(i)\}}.
    \]

    \item \textbf{LISI Label (lLISI)}\citep{simpson1949measurement}: 
    Computes LISI using compound labels. 
    For each point $i$, let $\{p_{i,c}\}_{c=1}^C$ be the proportions of compound $c$ in its neighborhood:
    \[
    \text{lLISI}(i) = \left( \sum_{c=1}^C p_{i,c}^2 \right)^{-1}.
    \]
    Higher lLISI values indicate more concentrated biological neighborhoods.

    \item \textbf{Leiden ARI} \citep{traag2019louvain}:
    Applies the Leiden clustering algorithm to the corrected data to obtain cluster assignments $\hat{z}_i$, which are compared to ground truth compound labels $z_i$ using the Adjusted Rand Index:
    \[
    \text{ARI} = \text{AdjustedRandIndex}(\{\hat{z}_i\}, \{z_i\}).
    \]
    Higher ARI reflects better alignment with biological structure.
    We found that ARI is usually too small over all methods.

    \item \textbf{Leiden NMI} \citep{traag2019louvain}:
    Also uses Leiden clustering and computes the 
    Normalized Mutual Information (NMI) between predicted clusters and ground truth compound labels:
    \[
    \text{NMI} = \frac{2 I(\hat{Z}; Z)}{H(\hat{Z}) + H(Z)},
    \]
    where $I$ is mutual information and $H$ is entropy. 
    High NMI indicates strong biological preservation.

\end{itemize}

%% file: realworld.tex
We evaluate methods on eight major real-world datasets derived from high-throughput morphological profiling: the JUMP Cell Painting dataset \citep{chandrasekaran2023jump} and the Broad Bioimage Benchmark Collection (BBBC) \cite{ljosa2012annotated}. 

The \textbf{JUMP consortium} is a large-scale, industry-academic collaboration that spans multiple laboratories and experimental conditions, including diverse compound perturbations, batches, and imaging platforms. 
It comprises over 100,000 wells across 13 sources, making it a challenging testbed for robust batch correction. 
It has a cumulative $4$ datasets
CellProfiler \cite{carpenter2006cellprofiler} was used to extract profiles from the JUMP consortium

The \textbf{BBBC datasets} provides additional benchmarks under more controlled settings, including standard Cell Painting assays with annotated perturbation labels. 
It has a cumulative $3$ datasets.
These datasets are smaller in scale but offer cleaner ground truth and less inter-site variability, making them useful for evaluating biological signal preservation in less noisy regimes. 
DeepProfiler \citep{moshkov2024learning} was used to extract profiles from the BBBC datasets.

We split the full set of datasets into eight structured evaluation scenarios, combining data across multiple batches, sources, and imaging modalities. 
Each scenario is designed to test batch correction methods under different forms of heterogeneity, ranging from partner-specific confounding to microscope variability and feature extraction diversity. 
The scenarios include five settings derived from the JUMP dataset (JUMP 1–5) and three settings using deep learning-based feature extraction (DEEP 1–3). 
\textsc{BALANS} and baseline methods are evaluated on each of these using the full suite of biological conservation and batch correction metrics. 
Table~\ref{tab:scenarios} summarizes the composition of each scenario and whether it is reported in the main paper or supplement.

\subsection{CellProfiler}
\label{subsec:cellprofiler}

CellProfiler~\citep{carpenter2006cellprofiler} is an open-source software tool developed for the extraction of interpretable features from microscopy images. 
It is widely used in high-content screening and morphological profiling pipelines due to its modular design and ability to generate biologically meaningful measurements at the single-cell and well level.

In our experiments, we use CellProfiler to process Cell Painting assays across multiple datasets. 
The extracted feature set consists of approximately 4,762 features per well, aggregated across cells. These features span multiple biological and morphological categories, including:

\begin{itemize}
    \item \textbf{Shape}: Area, perimeter, eccentricity, and compactness of cellular components.
    \item \textbf{Intensity}: Mean, median, max, and standard deviation of pixel intensities across channels and compartments (nuclei, cytoplasm, and whole cell).
    \item \textbf{Texture}: Haralick features computed on grayscale co-occurrence matrices, describing textural patterns.
    \item \textbf{Granularity}: Quantifies the size and abundance of granule-like structures within cells, often linked to organelles or substructures.
    \item \textbf{Radial Distribution}: Measures the spatial distribution of intensity from the center of a cell outward.
    \item \textbf{Neighbors and Adjacency}: Statistics describing the spatial relationships between neighboring cells.
\end{itemize}

Feature extraction is performed channel-wise across five fluorescent stains: nuclei (Hoechst), nucleoli/cytoplasmic RNA, endoplasmic reticulum, Golgi/plasma membrane, and mitochondria. F
eatures are extracted from each stain and compartment (nucleus, cytoplasm, whole cell), then aggregated per well using the median across cells. 

The result is a biologically rich, high-dimensional vector representation for each well, capturing diverse morphological responses to perturbations.

\subsection{DeepProfiler}
\label{subsec:deepprofiler}

DeepProfiler~\citep{moshkov2024learning} is a deep learning framework for generating compact feature representations from high-content cellular microscopy images. 
Unlike CellProfiler, which relies on manually engineered features, DeepProfiler employs the EfficientNet architecture to learn feature embeddings directly from raw images, allowing it to capture complex, non-linear morphological patterns.

In this work, we use DeepProfiler as described in~\citep{moshkov2024learning}, which trains a deep network using weak supervision derived from experimental metadata (e.g., compound and concentration labels). 
The architecture used is a variant of ResNet-50, pretrained on ImageNet and fine-tuned using a classification head on perturbation labels.

\textbf{Input Processing.} Each five-channel Cell Painting image is processed as a stack, where each channel corresponds to one of the cellular compartments or stains. DeepProfiler either:
\begin{itemize}
    \item processes each channel independently (per-channel ResNet encoding),
    \item or combines them into a 5-channel input tensor for a unified CNN backbone.
\end{itemize}

\textbf{Training Objective.} The network is trained to distinguish between different treatments using a softmax cross-entropy loss, where each unique compound-concentration pair is treated as a separate class. 
Embeddings are taken from the penultimate layer of the network (before the classification head) and used as feature representations.

\textbf{Feature Output.} The output is a 384-dimensional feature vector.
Embeddings are aggregated across multiple fields of view and cells using mean pooling, resulting in a compact vector per well.

In our analysis, we evaluate DeepProfiler representations directly.

\subsection{DeepProfiler Architecture: Cell Painting CNN}
\label{subsec:deepprofiler_architecture}

The Cell Painting CNN model, integrated within the DeepProfiler framework, is designed to extract meaningful morphological features from high-content cellular images. 
This model employs the EfficientNet-B0 architecture, known for its balance between performance and computational efficiency.
EfficientNet-B0 comprises an initial convolutional layer followed by seven inverted residual blocks, incorporating MobileNetV2 structures with squeeze-and-excitation optimization, culminating in a final classification layer . 
It has approximately $2.4$ million parameters.

\textbf{Input Processing:} The model processes five-channel Cell Painting images, each channel corresponding to specific cellular components: DNA (nucleus), RNA (nucleoli and cytoplasmic RNA), endoplasmic reticulum (ER), Golgi apparatus and plasma membrane (AGP), and mitochondria. Each single-cell image is cropped to a size of $128 \times 128$ pixels, centered on the nucleus, preserving surrounding context without resizing. The channels are ordered as follows: DNA, RNA, ER, AGP, and Mitochondria .

\textbf{Training Strategy:} The model is trained using a weakly supervised learning approach, where each unique treatment (compound-concentration pair) serves as a distinct class. The training objective is to minimize the categorical cross-entropy loss, encouraging the model to learn features that distinguish between treatments. 
To address class imbalance, the training process involves sampling an equal number of cells per treatment per epoch, with oversampling for underrepresented treatments. 
Data augmentation techniques, including random cropping, horizontal flipping, rotation, and brightness/contrast adjustments, are applied to enhance generalization and prevent overfitting .

\textbf{Feature Extraction:} Post-training, features are extracted from the \texttt{block6a\_activation} layer of the EfficientNet-B0 model. 
This layer provides a 384-dimensional feature vector for each single-cell image, capturing nuanced morphological variations induced by different treatments. 
These features can be aggregated (e.g., via mean pooling) across multiple cells or fields of view to generate well-level profiles suitable for downstream analyses .

\textbf{Implementation Details:} The model is implemented using TensorFlow 2.5 and requires Python 3.6 or higher. 
GPU acceleration is recommended for efficient processing. 
Pretrained weights for the Cell Painting CNN are available and can be utilized for profiling new datasets without the need for retraining .

\subsection{JUMP Consortium Data}

The JUMP Cell Painting Consortium has assembled a large collection of morphological profiles generated using the Cell Painting assay. In this work, we primarily focus on datasets from the JUMP Cell Painting (JUMP-CP) project, particularly the \texttt{cpg0016-jump} dataset (abbreviated as \texttt{cpg0016}), which was produced during the core data generation phase. This dataset spans 13 distinct data-producing sites (or sources) and includes a diverse set of biological perturbations:
\begin{itemize}
    \item \textbf{Small-molecule compounds:} 116{,}753 perturbations.
    \item \textbf{ORF gene overexpression:} 15{,}142 perturbations.
    \item \textbf{CRISPR gene knockouts:} 7{,}977 perturbations.
\end{itemize}

To facilitate batch alignment and longitudinal consistency, a positive control plate termed \textbf{JUMP-Target-2-Compound} (referred to as \emph{Target2} plates) was included in every batch. This plate consists of 302 diverse compounds and enables alignment both within the dataset and with future datasets generated outside the consortium. The remainder of the plates, which contain the full set of 116{,}753 chemical perturbations, are referred to as \textbf{Production plates}.

Each Production plate includes a set of negative and positive controls to identify or correct for experimental artifacts. Specifically, wells treated with \textbf{Dimethyl Sulfoxide (DMSO)} serve as negative controls. These controls support detection and correction of plate-to-plate variation and serve as a morphological baseline for identifying perturbations with detectable effects.

\subsubsection{Preprocessing for JUMP}
\label{subsec:preprocessing}

The preprocessing pipeline largely follows \citet{arevalo2024evaluating}.
Each well in the JUMP Cell Painting dataset is represented by a 4,762-dimensional feature vector extracted via CellProfiler, capturing shape, intensity, texture, and pixel-level statistics. 

\paragraph{(1) Variation Filtering.}  
To remove uninformative features, we compute a robust plate-wise variation score based on control wells. For each feature $X$, let $X_e = \mathrm{median}(X)$ and $\sigma_e = \mathrm{median}(|X_i - X_e|)$. Define the coefficient of variation as:
\[
\mathrm{Cvar}(X) = \frac{\sigma_e}{X_e}.
\]
Features with $\mathrm{Cvar} < 10^{-3}$ on any plate are discarded.

\paragraph{(2) Median Absolute Deviation (MAD) Normalization.}  
We standardize feature values on each plate using control wells. Specifically, for each feature $X$ and well $i$, we apply the transformation:
\[
\bar{X}_i = \frac{X_i - X_e}{\sigma_e},
\]
where $X_e$ and $\sigma_e$ are defined as above and computed using only control wells.

\paragraph{(3) Rank-Based Inverse Normal Transformation (INT).}  
We apply a non-parametric normalization to each feature using the rank-based INT:
\[
Y_i = \Phi^{-1} \left( \frac{r_i - c}{N - 2c + 1} \right),
\]
where $r_i$ is the rank of sample $i$, $N$ is the number of samples, and $c = 3/8$ .
$\Phi^{-1}$ denotes the inverse cumulative distribution function of the standard normal distribution.

\paragraph{(4) Feature Selection.}  
To remove redundancy, we apply the \texttt{correlation\_threshold} function from \texttt{pycytominer} \cite{serrano2025reproducible}. Pairs of features with Pearson correlation above 0.9 are identified, and within each group, the feature with the highest total correlation to all others is excluded.

This pipeline ensures robust normalization, eliminates low-information and redundant features, and facilitates downstream analysis across heterogeneous batches and sources.

\subsection{BBBC Data}
\label{subsec:bbbc_data}

The Broad Bioimage Benchmark Collection (BBBC) provides publicly available microscopy datasets designed for evaluating image analysis algorithms. In the study "Learning representations for image-based profiling of perturbations"~\citep{moshkov2024learning}, three BBBC datasets were utilized to assess the effectiveness of the proposed DeepProfiler framework:

\begin{itemize}
    \item \textbf{BBBC037}: This dataset comprises images of U2OS cells subjected to overexpression of various genes. Specifically, it includes 302 unique genetic perturbations, each with multiple replicates. The dataset is instrumental in evaluating the model's ability to capture phenotypic variations induced by genetic modifications.

    \item \textbf{BBBC022}: This dataset contains images of U2OS cells treated with a diverse set of small molecules, encompassing over 1,600 chemical perturbations. It serves as a benchmark for assessing the model's capacity to discern morphological changes resulting from chemical treatments.

    \item \textbf{BBBC036}: This dataset features images of A549 cells exposed to a range of chemical compounds, totaling approximately 1,200 perturbations. It provides a platform to evaluate the generalizability of the learned representations across different cell lines and treatment types.
\end{itemize}

Each dataset includes five-channel Cell Painting images, capturing various cellular components such as the nucleus, endoplasmic reticulum, and mitochondria. The diversity in perturbation types and cell lines across these datasets enables a comprehensive evaluation of the DeepProfiler's representation learning capabilities.

\subsection{Scenarios}
\label{subsec:scenarios}

We define eight evaluation scenarios designed to capture a range of technical and biological variability across real-world and benchmark datasets. These include five scenarios derived from the JUMP Cell Painting Consortium (JUMP 1–5) and three from publicly available BBBC datasets (DEEP 1–3). Each JUMP scenario corresponds to a specific setup of labs, microscopes, and perturbation scales, while DEEP scenarios mirror the BBBC datasets used in the DeepProfiler study~\citep{moshkov2024learning}.

For clarity and comparability, we explicitly map these to Scenarios 1–5 from the benchmark study on batch correction methods~\citep{arevalo2024evaluating}, as follows:

\begin{itemize}
    \item \textbf{JUMP 3 $\leftrightarrow$ Scenario 1}: Data from a single lab using a single microscope, with 302 compound perturbations. Serves as a minimal-variability baseline.

    \item \textbf{JUMP 4 $\leftrightarrow$ Scenario 2}: Single lab using multiple microscopes. Contains 302 compounds. Introduces moderate technical variation due to imaging hardware.

    \item \textbf{JUMP 5 $\leftrightarrow$ Scenario 3}: Data from multiple labs, each using its own microscope. Contains over 80,000 chemical perturbations, reflecting large-scale cross-partner variability.

    \item \textbf{JUMP 1 $\leftrightarrow$ Scenario 4}: Three labs using a shared microscope model, with 302 compounds. Captures inter-lab variation under standardized equipment.

    \item \textbf{JUMP 2 $\leftrightarrow$ Scenario 5}: Five labs using the same microscope type, now scaling to over 80,000 compounds. Represents the most extensive batch diversity within controlled hardware.
    \item \textbf{DEEP 1 (BBBC036)}: Chemical perturbations in A549 cells, spanning approximately 1,200 compounds. Allows evaluation across a different cell line and biological context.
    
    \item \textbf{DEEP 2 (BBBC037)}: Genetic perturbation profiling in U2OS cells. Consists of 302 gene overexpression treatments, used to evaluate biological signal preservation under weak supervision.

    \item \textbf{DEEP 3 (BBBC022)}: Small molecule perturbations in U2OS cells. Includes over 1,600 chemical compounds with known annotations.

\end{itemize}

This structured set of scenarios spans single-lab to multi-lab conditions, microscope variability, and both engineered and learned feature spaces.
It provides a comprehensive framework for assessing batch correction performance under realistic and challenging conditions.

\begin{table}[t]
\centering
\caption{Summary of JUMP and DEEP evaluation scenarios. Each combines various sources, labs, and microscopes, with a range of perturbations and profiling methods.}
\label{tab:scenarios}
{\scriptsize
\begin{tabular}{lccccccc}
\toprule
\textbf{Scenario} & \textbf{\# Sources} & \textbf{\# Labs} & \textbf{\# Microscopes} & \textbf{\# Perturbations} & \textbf{\# Datapoints} & \textbf{Profiling Method} & \textbf{Main/Supp.} \\
\midrule
JUMP 1 & 3  & 3 & 3 & 302       & 30{,}000   & CellProfiler & Main \\
JUMP 2 & 5  & 5 & 3 & 80,000+   & 400{,}000  & CellProfiler & Main \\
JUMP 3 & 1  & 1 & 1 & 302       & 5{,}000    & CellProfiler & Supp. \\
JUMP 4 & 3  & 1 & 1 & 302       & 150{,}000  & CellProfiler & Supp. \\
JUMP 5 & 3  & 3 & 1 & 80,000+   & 300{,}000  & CellProfiler & Supp. \\
DEEP 1 & 1  & 1 & 1 & 1,000+    & 20{,}000   & DeepProfiler & Main \\
DEEP 2 & 1  & 1 & 1 & 300+      & 5{,}000    & DeepProfiler & Supp. \\
DEEP 3 & 1  & 1 & 1 & 1,500+      & 10{,}000    & DeepProfiler & Supp. \\
\bottomrule
\end{tabular}
}
\end{table}

\subsection{Full Results}

Here we provide tables summarizing the performance metrics for all experimental scenarios not included in the main paper text, specifically \textbf{JUMP 3–5} and \textbf{DEEP 2–3}.

\textsc{BALANS} consistently achieves strong performance across all settings. 
It performs best in high batch-diversity scenarios such as \textbf{JUMP 3-5}, where batch effects are more pronounced and challenging. 
This makes it particularly well-suited for applications like Cell Painting, where batches can vary significantly across perturbation types, plates, and imaging conditions.
 \textsc{BALANS} also maintains competitive accuracy in lower-diversity settings such as \textbf{DEEP 2–3}, demonstrating its robustness across different regimes of batch structure.

\textsc{BALANS} is broadly effective across real-world imaging datasets.

\begin{table}[ht]
\centering
\caption{
Evaluation scores across the real-world batch correction benchmark (\textbf{JUMP 3}).
\textsc{BALANS} performs best on batch metrics and achieves strong overall performance while preserving biological structure.
}
{\scriptsize
\setlength{\tabcolsep}{5pt}
\begin{tabular}{lccccccccccc}
\toprule
Method        & Conn. & KBET & LISI-batch & Silh-batch & LISI-label & ARI  & NMI  & Silh-label & Avg-batch & Avg-label & Avg-all \\
\midrule
BALANS        & 0.76 & \textbf{1.00} & \textbf{0.48} & 0.81 & \textbf{0.98} & \textbf{0.09} & \textbf{0.63} & \textbf{0.59} & \textbf{0.76} & \textbf{0.57} & \textbf{0.67} \\
Seurat CCA    & 0.80 & \textbf{1.00} & 0.46 & 0.77 & \textbf{0.98} & 0.07 & 0.60 & 0.52 & 0.76 & 0.54 & 0.65 \\
Combat        & 0.78 & 0.99 & 0.41 & \textbf{0.84} & \textbf{0.98} & 0.06 & 0.56 & 0.53 & 0.76 & 0.53 & 0.65 \\
Seurat RPCA   & \textbf{0.81} & 0.99 & 0.43 & 0.78 & \textbf{0.98} & 0.06 & 0.60 & 0.53 & 0.75 & 0.54 & 0.64 \\
fastMNN       & \textbf{0.81} & \textbf{1.00} & 0.44 & 0.76 & \textbf{0.98} & 0.05 & 0.59 & 0.55 & 0.75 & 0.54 & 0.64 \\
Baseline      & 0.77 & 0.99 & 0.39 & 0.85 & \textbf{0.98} & 0.06 & 0.55 & 0.52 & 0.75 & 0.53 & 0.64 \\
Harmony       & 0.79 & 0.99 & 0.40 & 0.74 & \textbf{0.98} & 0.06 & 0.60 & 0.55 & 0.73 & 0.55 & 0.64 \\
Sphering      & 0.74 & 0.99 & 0.37 & \textbf{0.87} & \textbf{0.98} & 0.06 & 0.54 & 0.52 & 0.74 & 0.53 & 0.64 \\
Scanorama     & 0.45 & 0.89 & 0.50 & 0.62 & \textbf{0.98} & \textbf{0.09} & 0.52 & 0.44 & 0.61 & 0.51 & 0.56 \\
\bottomrule
\end{tabular}
\label{tab:jump3}
}
\end{table}

\begin{table}[ht]
\centering
\caption{
Evaluation scores for the large, diverse batch setting \textbf{JUMP 4} using DeepProfiler features. 
\textsc{BALANS} performs competitively on batch correction and label structure while maintaining strong overall performance.
}

{\scriptsize
\setlength{\tabcolsep}{5pt}
\begin{tabular}{lccccccccccc}
\toprule
Method        & Conn. & KBET & LISI-batch & Silh-batch & LISI-label & ARI  & NMI  & Silh-label & Avg-batch & Avg-label & Avg-all \\
\midrule
Seurat CCA    & 0.64 & 0.66 & 0.55 & \textbf{0.86} & \textbf{0.98} & 0.04 & 0.45 & \textbf{0.49} & \textbf{0.68} & 0.49 & \textbf{0.59} \\
BALANS        & 0.59 & 0.54 & 0.54 & \textbf{0.86} & \textbf{0.98} & \textbf{0.05} & \textbf{0.50} & \textbf{0.54} & 0.63 & \textbf{0.52} & 0.57 \\
Seurat RPCA   & \textbf{0.65} & 0.57 & 0.46 & 0.84 & \textbf{0.98} & \textbf{0.05} & 0.45 & \textbf{0.49} & 0.63 & 0.49 & 0.56 \\
Harmony       & 0.64 & 0.62 & 0.55 & \textbf{0.86} & \textbf{0.98} & 0.04 & 0.42 & 0.48 & 0.67 & 0.48 & 0.58 \\
Scanorama     & 0.52 & \textbf{0.68} & \textbf{0.68} & 0.80 & \textbf{0.98} & 0.04 & 0.42 & 0.47 & 0.67 & 0.48 & 0.58 \\
fastMNN       & 0.56 & \textbf{0.70} & 0.52 & 0.80 & 0.97 & 0.04 & 0.39 & 0.45 & 0.64 & 0.46 & 0.55 \\
Combat        & 0.62 & 0.24 & 0.16 & 0.76 & \textbf{0.98} & 0.03 & 0.42 & 0.48 & 0.45 & 0.48 & 0.46 \\
Baseline      & 0.60 & 0.28 & 0.17 & 0.75 & \textbf{0.98} & 0.04 & 0.42 & 0.48 & 0.45 & 0.48 & 0.46 \\
Sphering      & 0.58 & 0.22 & 0.13 & 0.76 & \textbf{0.98} & 0.03 & 0.39 & 0.48 & 0.42 & 0.47 & 0.45 \\
\bottomrule
\end{tabular}
\label{tab:jump4}
}
\end{table}

\begin{table}[t]
\centering
\caption{
Evaluation scores for the large, diverse batch setting (\textbf{JUMP 5}). 
\textsc{BALANS} performs best across all batch and biological structure metrics in this challenging setting.
CCA/RPCA could not be run due the $12$ hr cut-off we enforce
}
{\scriptsize
\setlength{\tabcolsep}{5pt}
\begin{tabular}{lccccccccccc}
\toprule
Method    & Conn. & LISI-batch & Silh-batch & LISI-label & ARI  & NMI  & Silh-label & Avg-batch & Avg-label & Avg-all \\
\midrule
BALANS     & 0.33 & \textbf{0.48} & \textbf{0.91} & \textbf{1.00} & 0.01 & \textbf{0.46} & \textbf{0.32} & \textbf{0.57} & \textbf{0.33} & \textbf{0.45} \\
Scanorama  & \textbf{0.34} & 0.41 & 0.83 & \textbf{1.00} & 0.01 & 0.30 & 0.27 & 0.53 & 0.31 & 0.42 \\
fastMNN    & \textbf{0.34} & 0.45 & 0.79 & \textbf{1.00} & \textbf{0.02} & 0.26 & 0.22 & 0.53 & 0.30 & 0.41 \\
Harmony    & \textbf{0.34} & 0.25 & 0.84 & \textbf{1.00} & 0.01 & 0.27 & 0.32 & 0.48 & 0.32 & 0.40 \\
Sphering   & \textbf{0.34} & 0.00 & 0.84 & \textbf{1.00} & 0.00 & 0.22 & \textbf{0.38} & 0.39 & 0.31 & 0.35 \\
Combat     & \textbf{0.34} & 0.02 & 0.81 & \textbf{1.00} & 0.00 & 0.24 & 0.31 & 0.39 & 0.31 & 0.35 \\
Baseline   & \textbf{0.34} & 0.01 & 0.80 & \textbf{1.00} & 0.00 & 0.23 & 0.31 & 0.38 & 0.30 & 0.34 \\
\bottomrule
\end{tabular}
\label{tab:jump2}
}
\end{table}

\begin{table}[t]
\centering
\caption{
Evaluation scores for the low diversity batch setting (\textbf{DEEP 2}). 
\textsc{BALANS} performs best across all batch and biological structure metrics in this challenging setting.
CCA/RPCA could not be run due the $12$ hr cut-off we enforce
}
{\scriptsize
\setlength{\tabcolsep}{5.7pt}
\begin{tabular}{lcccccccccc}
\toprule
Method    & Conn. & LISI-batch & Silh-batch & LISI-label & ARI  & NMI  & Silh-label & Avg-batch & Avg-label & Avg-all \\
\midrule
BALANS     & 0.24          & \textbf{0.63} & 0.87          & \textbf{1.00} & \textbf{0.03} & 0.35        & 0.41          & \textbf{0.58} & \textbf{0.45}      & \textbf{0.51} \\
Sphering    & \textbf{0.29} & 0.44          & \textbf{0.89} & 0.98          & \textbf{0.03} & 0.31        & \textbf{0.46} & 0.54      & 0.44      & 0.49       \\
Seurat RPCA & 0.27          & 0.61          & 0.79          & 0.97          & 0.02 & 0.28        & 0.38          & 0.56      & 0.41      & 0.48       \\
Harmony     & 0.26          & 0.60          & 0.80          & 0.97          & 0.02 & 0.26        & 0.39          & 0.55      & 0.41      & 0.48       \\
Seurat CCA  & 0.27          & 0.58          & 0.78          & 0.96          & 0.02 & 0.29        & 0.38          & 0.54      & 0.41      & 0.48       \\
Baseline    & 0.26          & 0.56          & 0.80          & 0.97          & 0.02 & 0.27        & 0.39          & 0.54      & 0.41      & 0.47       \\
Combat      & 0.26          & 0.57          & 0.81          & 0.96          & 0.02 & 0.28        & 0.39          & 0.55      & 0.41      & 0.48       \\
fastMNN     & 0.25          & 0.59          & 0.77          & 0.97          & 0.02 & 0.26        & 0.37          & 0.54      & 0.40      & 0.47       \\
Scanorama   & 0.22          & 0.53          & 0.81          & 0.99          & 0.02 & \textbf{0.36} & 0.34          & 0.52      & 0.43 & 0.47       \\
\bottomrule
\end{tabular}
\label{tab:deep2}
}
\end{table}
\begin{table}[t]
\centering
\caption{
Evaluation scores for the low diversity batch setting (\textbf{DEEP 3}). 
\textsc{BALANS} performs best across all batch and biological structure metrics in this challenging setting.
CCA/RPCA could not be run due the $12$ hr cut-off we enforce
}
{\scriptsize
\setlength{\tabcolsep}{5.7pt}
\begin{tabular}{lcccccccccc}
\toprule
Method    & Conn. & LISI-batch & Silh-batch & LISI-label & ARI  & NMI  & Silh-label & Avg-batch & Avg-label & Avg-all \\
\midrule
BALANS     & 0.28          & \textbf{0.65} & 0.85          & \textbf{1.00} & \textbf{0.04} & 0.36        & 0.42          & \textbf{0.59} & \textbf{0.46}      & \textbf{0.53} \\
Sphering    & \textbf{0.30} & 0.40          & \textbf{0.88} & 0.98          & 0.03 & 0.30        & \textbf{0.46} & 0.53      & 0.44      & 0.49       \\
Seurat RPCA & 0.27          & 0.63          & 0.78          & 0.97          & 0.03 & 0.29        & 0.39          & 0.56      & 0.42      & 0.49       \\
Harmony     & 0.27          & 0.62          & 0.79          & 0.97          & 0.03 & 0.27        & 0.40          & 0.56      & 0.42      & 0.49       \\
Seurat CCA  & 0.28          & 0.60          & 0.78          & 0.97          & 0.03 & 0.30        & 0.39          & 0.55      & 0.42      & 0.49       \\
Baseline    & 0.27          & 0.58          & 0.79          & 0.97          & 0.03 & 0.28        & 0.40          & 0.55      & 0.42      & 0.48       \\
Combat      & 0.27          & 0.59          & 0.80          & 0.96          & 0.03 & 0.29        & 0.40          & 0.55      & 0.42      & 0.48       \\
fastMNN     & 0.26          & 0.61          & 0.76          & 0.97          & 0.03 & 0.27        & 0.38          & 0.54      & 0.41      & 0.48       \\
Scanorama   & 0.23          & 0.54          & 0.79          & \textbf{1.00} & 0.03 & \textbf{0.37} & 0.34          & 0.52      & 0.44 & 0.48       \\
\bottomrule
\end{tabular}
\label{tab:deep3}
}
\end{table}

\subsection{UMAP plots}
To qualitatively assess batch mixing and biological structure preservation, we present 2D UMAP embeddings for selected scenarios. 
As shown in the plots, \textsc{BALANS} effectively aligns batches while maintaining clear separation between biological labels.
\begin{figure}[ht]
    \centering
    \includegraphics[width=\textwidth]{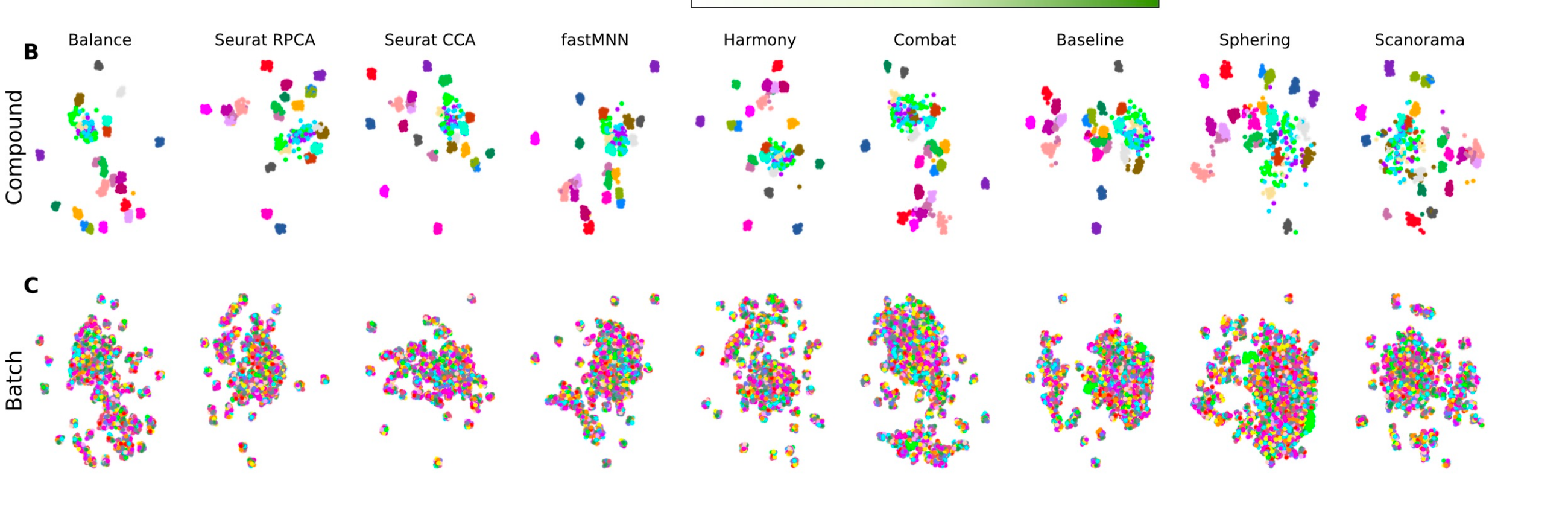}
    \caption{
        UMAP visualizations for JUMP 3. 
        (a) Colored by biological cluster, illustrating separation across a few representative compounds. 
        (b) Colored by batch. 
    }
    \label{fig:umap1}
\end{figure}
\begin{figure}[t]
    \centering
    \includegraphics[width=\textwidth]{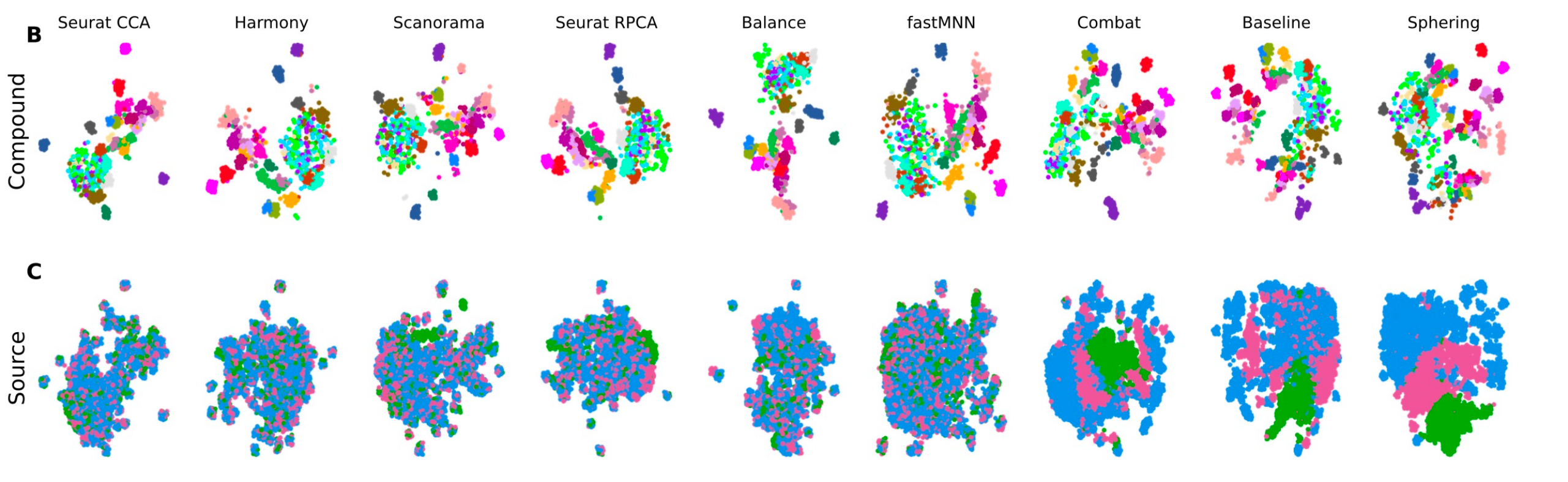}
    \caption{
        UMAP visualizations for JUMP 4. 
        (a) Colored by biological cluster, illustrating separation across a few representative compounds. 
        (b) Colored by source lab. .
    }
    \label{fig:umap2}
\end{figure}
\begin{figure}[t]
\vspace{-3mm}
    \centering
    \includegraphics[width=\textwidth]{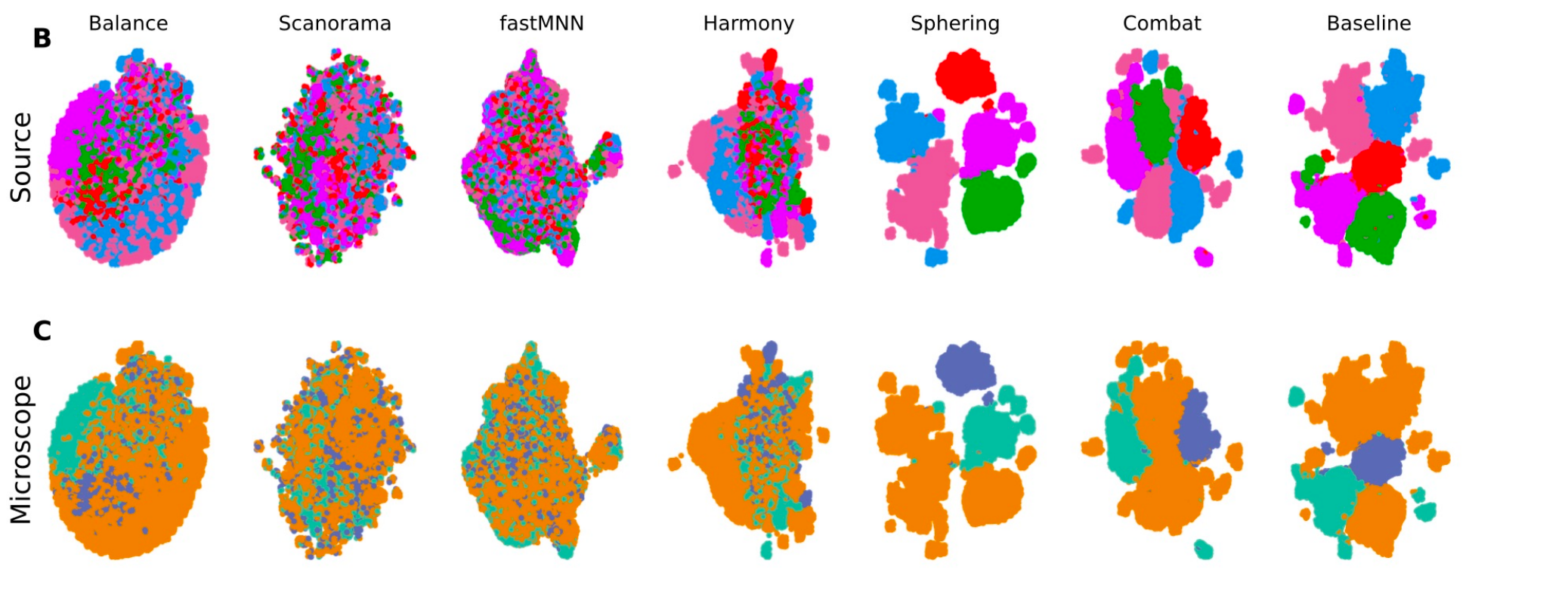}
    \caption{
        UMAP visualizations for JUMP 5. 
        (a) Colored by source lab. 
        (b) Colored by microscope. 
        There were too many compounds to plot UMAP colored by compound
    }
    \label{fig:umap3}
\end{figure}

\subsection{Comparison to BBKNN}
{

BBKNN is a widely used preprocessing tool that constructs a batch-aware $k$-nearest neighbor graph, making it useful for visualization and embedding methods such as UMAP. However, its functionality and objectives differ fundamentally from those of BALANS. Here we summarize the key distinctions and provide a fair comparison on the limited ground where the two methods overlap.

Although both BALANS and BBKNN rely on KNN-based constructions, their goals diverge sharply. BALANS is designed to produce corrected profiles by constructing and applying a batch-aware affinity operator to the data. BBKNN, in contrast, outputs only a graph and does not generate corrected embeddings or feature profiles. One could in principle attempt to multiply BBKNN's affinity matrix by the data to obtain corrected profiles, but the points below explain why this is not practically feasible.

BBKNN forms a full batch-balanced affinity matrix by merging per-batch KNN graphs, but this matrix cannot be efficiently applied to the data because it requires explicit operations with an $N \times N$ matrix. BALANS avoids this bottleneck. Instead of forming the full matrix, BALANS computes expressions of the form
\[
\hat{X} \approx (A_S^\top A_S) X,
\]
as described in lines~328--332, using a low-rank sampled operator $A_S$ that enables the affinity to be applied to the data without ever constructing or multiplying by the full matrix. The key difference is therefore not whether an affinity matrix can be defined (BBKNN clearly defines one), but whether it can be \emph{used} for scalable batch correction. BALANS is explicitly engineered for this; BBKNN is not.

Similarly, BBKNN's reported near-linear runtime applies only to its approximate KNN construction via Annoy. Since BBKNN never applies its affinity matrix to the data, this complexity does not extend to batch correction. Producing corrected profiles cannot be done in linear time, because even operating with a sparse affinity matrix requires more than $O(N)$ work. BALANS achieves linear-time affinity construction (see Supplementary~D) and then uses a low-rank approximation to make affinity--data multiplication scalable. In our experiments, using BBKNN's full affinity matrix to correct profiles is prohibitively slow on large datasets such as JUMP~2.

A further distinction lies in BALANS's use of batch-conditional local bandwidths, which support a principled elbow-detection thresholding mechanism (Supplementary~C.1). BBKNN does not compute batch-conditional scales; it identifies approximate neighbors per batch and retains a fixed number of them. This fixed, non-adaptive rule can perform poorly for unbalanced populations or heterogeneous densities. BALANS's batch-conditional scaling is therefore a unique and principled component that BBKNN does not provide.

Although BBKNN cannot produce corrected profiles, we nevertheless evaluate it on metrics computable from its KNN graph, such as batch-mixing scores, and, where feasible, attempt a surrogate affinity--data multiplication. BALANS consistently outperforms BBKNN on these graph-based metrics, and attempts to extend BBKNN to produce corrected profiles are computationally infeasible on large datasets.

\begin{table}[h]
\centering
\caption{Evaluation metrics for BBKNN and BALANS on JUMP-1 and JUMP-2. Numbers in parentheses indicate rankings.}
\renewcommand{\arraystretch}{1.2}
\begin{tabular}{lccccc}
\hline
\textbf{Dataset / Method} & \textbf{Graph conn.} & \textbf{LISI batch} & \textbf{LISI label} & \textbf{Leiden ARI} & \textbf{Leiden NMI} \\
\hline
\multicolumn{6}{c}{\textbf{JUMP-1 (Scenario 4)}} \\
\hline
BBKNN & 0.25 & 0.63  & 0.97 & 0.03  & 0.12 \\
BALANS & 0.54 & 0.44 & \textbf{0.98} & 0.05 & 0.46 \\
\hline
\multicolumn{6}{c}{\textbf{JUMP-2 (Scenario 5)}} \\
\hline
BBKNN & 0.33 & 0.006 & 1.000  & 0.000  & 0.229  \\
BALANS & 0.33 & \textbf{0.48} & \textbf{1.00} & 0.01 & \textbf{0.46} \\
\hline
\end{tabular}
\end{table} 

%% file: synthetic.tex
We generate synthetic datasets using a hierarchical Gaussian Mixture Model (GMM) that reflects label and batch structure in a controlled setting. The generative process proceeds as follows:

\begin{itemize}
    \item \textbf{Label-level generation:} For each semantic label $\ell \in \{1, \dots, L\}$, we sample a label-level mean $\mu_\ell \sim \mathcal{N}(0, \sigma^2_\text{label} \cdot I_d)$ in $d$ dimensions.
    \item \textbf{Batch-level generation:} For each batch $b \in \{1, \dots, B\}$ assigned to label $\ell$, we sample a batch-specific mean $\mu_{\ell,b} \sim \mathcal{N}(\mu_\ell, \sigma^2_\text{batch} \cdot I_d)$.
    \item \textbf{Point-level generation:} Each data point $x$ in batch $b$ with label $\ell$ is drawn as $x \sim \mathcal{N}(\mu_{\ell,b}, \sigma^2_\text{noise} \cdot I_d)$.
\end{itemize}

This setup allows us to control intra-label variability via $\sigma^2_\text{label}$, batch-induced perturbations via $\sigma^2_\text{batch}$, and overall noise via $\sigma^2_\text{noise}$.

\vspace{2mm}
We evaluate performance across the following many synthetic regimes, summarised in the figures.
BALANS is consitently faster compared to other methods.

\begin{figure}[ht]
    \centering
    \includegraphics[width=\textwidth]{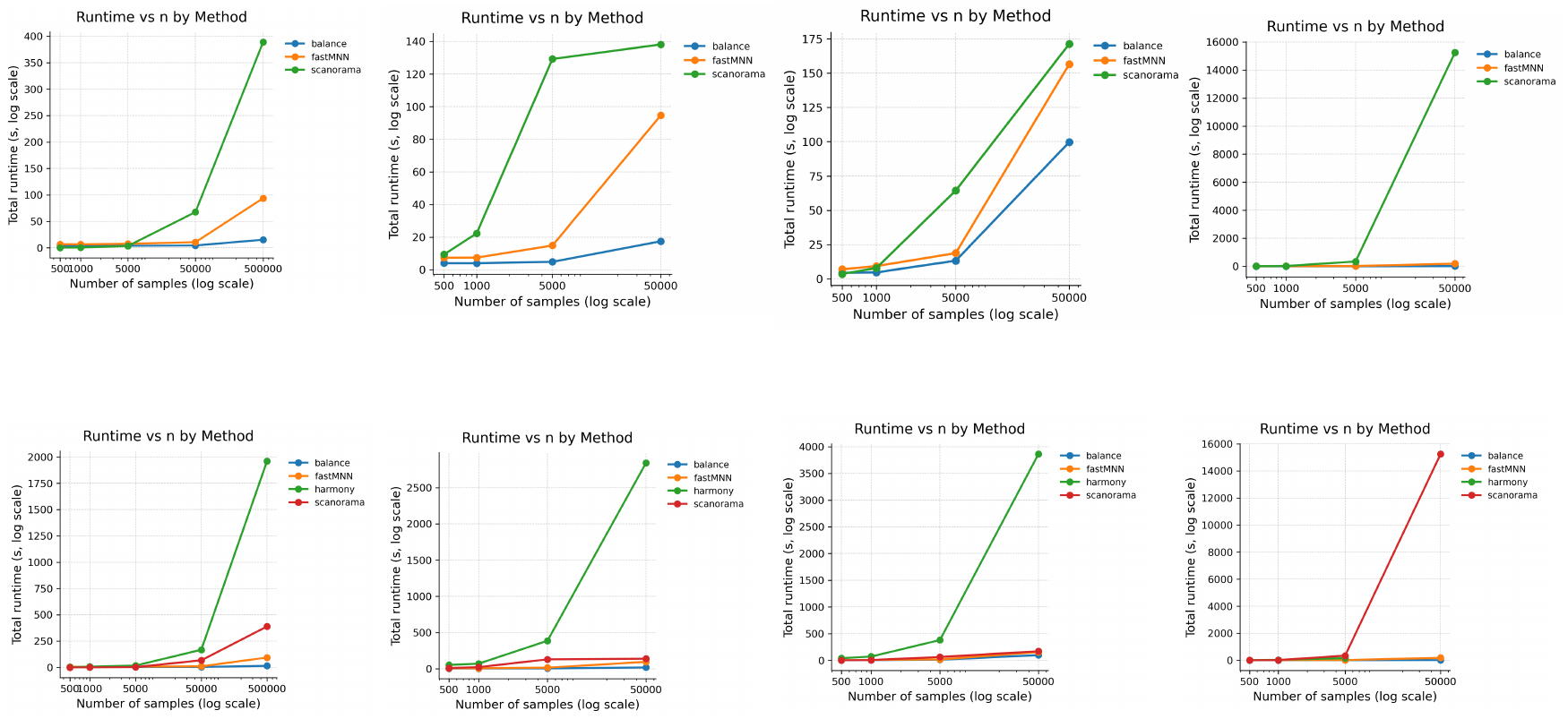}
    \caption{
    \textbf{Runtime comparisons across different synthetic scenarios.} 
    Each column presents two runtime bars: with and without Harmony, as Harmony is often significantly slower than other methods.
    From left to right, we consider:
    \textbf{(1)} A small-scale setup with 10 compounds, 10 batches, and 10 feature dimensions.
    \textbf{(2)} A medium-scale setup with 100 compounds, 5 batches, and 10 dimensions.
    \textbf{(3)} A robustness test varying batch-specific noise level $\sigma_{\text{batch}}$ under fixed structure.
    \textbf{(4)} A high-dimensional setting with 1000 features, 10 compounds, and 5 batches.
    Note that in high dimensions, Scanorama becomes notably slow compared to other methods.
    }
    \label{fig:runtime-synthetic}
\end{figure}

\begin{figure}[ht]
    \centering
    \includegraphics[width=0.8\textwidth]{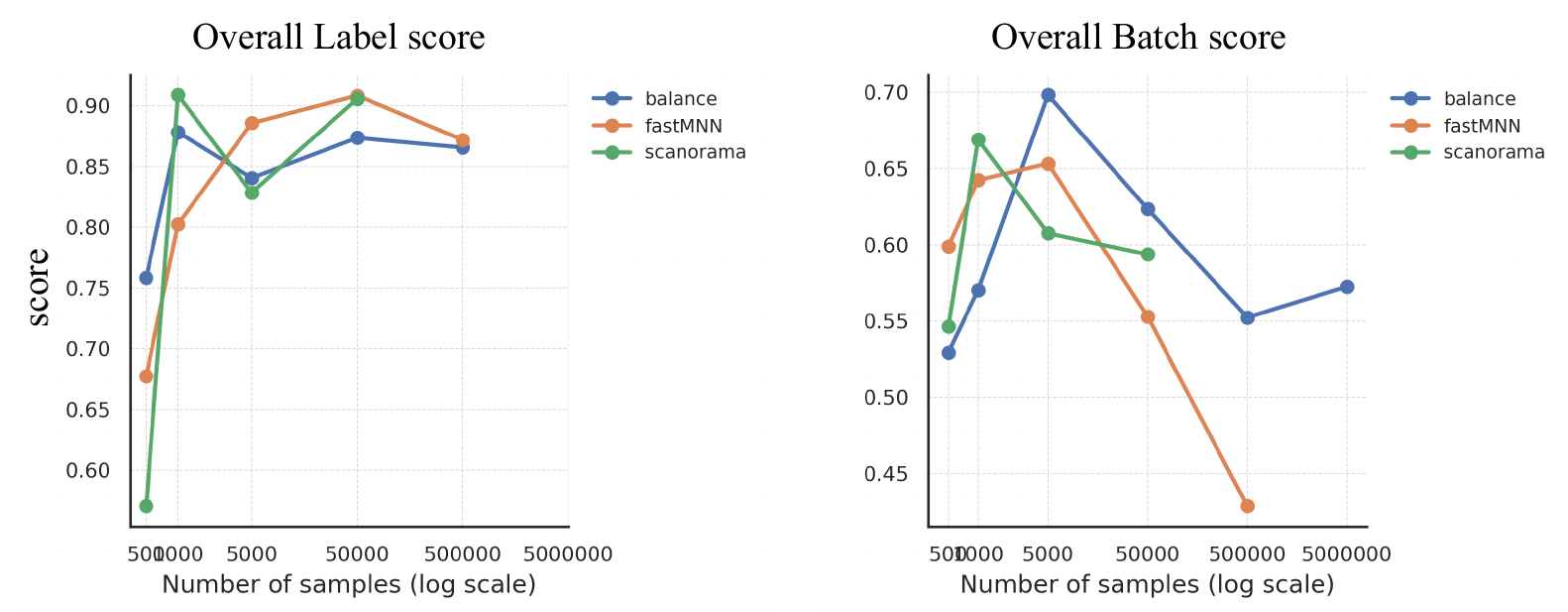}
    \caption{
    Accuracy scores for all methods in the runtime evaluation from the main paper. 
    The scores are fairly consistent across the fast-performing methods, suggesting stable performance despite computational constraints.
    \textsc{Harmony} score analysis did not complete because of an error, but we are working on fixing it.
    }
    \label{fig:runtime-accuracy}
\end{figure}

%% file: works.tex
\subsection{Deep Learning Representations for Cell Painting}

Recent years have seen an explosion of work applying deep learning to extract rich representations from Cell Painting images, moving beyond hand-engineered features. 
Efforts include \textsc{DeepProfiler} \citep{moshkov2024learning}, which introduced CNN-based feature extraction trained on perturbation classification. 
\citet{moshkov2024learning} leveraged transfer learning from pretrained networks such as ResNet, showing improved morphological profiling performance. 
More recently, \citet{wong2023deep} introduced contrastive pretraining strategies tailored to biological replicates, while \citet{doron2023unbiased} proposed a masked autoencoding approach that learns from noisy, unannotated images. 
\citet{kraus2024masked} further refined self-supervised learning pipelines by integrating structure-preserving loss terms and batch normalization during pretraining. 

\subsection{Batch Correction in Biological Data}

Batch correction is an extensively studied problem across domains including gene expression, proteomics, and image-based profiling. Early linear methods such as ComBat \citep{zhang2020combat} use empirical Bayes models to remove additive and multiplicative effects.
\citet{ando2017improving} propose advanced matrix factorization models for denoising high-throughput assays. \citet{vcuklina2021diagnostics} introduced adversarial learning to remove batch identity while preserving biological signal, and 
\citet{tran2020benchmark} provided a rigorous comparison of over 30 methods for scRNA-seq integration, emphasizing robustness to complex batch structures.
\citet{yu2023correcting} address the specific challenge of high-dimensional microscopy by integrating spatial and metadata information. 
Collectively, these methods span statistical, probabilistic, and representation learning approaches to batch correction.

\subsection{Local Bandwidth Estimation}

Local scale selection is foundational for adaptive affinity computation in manifold learning. Zelnik-Manor and Perona’s self-tuning kernel method \citep{zelnik2004self} adapts Gaussian bandwidths based on local $k$-nearest-neighbor distances, a widely used baseline. 
\citet{herrmann1997local} introduced scale-adaptive kernels for adaptive smoothing. 
\citet{knutsson1994local} proposed its use for quadrature wavelets.

\subsection{Low-Rank Approximation}

Low-rank approximations have long been a pillar of scalable learning. 
The Nyström method \citep{williams2000using} is a seminal technique that approximates large kernel matrices via landmark selection. 
\citet{drineas2005nystrom} extended this with randomized sampling strategies to improve accuracy. 
\citet{kumar2012sampling} proposed leverage-score-based sampling to reduce approximation error. 
\citet{musco2017recursive} introduced recursive Nyström approximations with provable spectral bounds, while 
\citet{gittens2016revisiting} offered a refined view to adapt this sampling for large scale datasets. PHATE is a visualization tool for nonlinear visualization of high-dimensional data, following the methodology introduced \citet{Moon2019PHATE}.

\subsection{Connection Between Theoretical Guarantees and Empirical Performance}
\label{app:theory_practice_connection}

In this section, we clarify the connection between the theoretical results developed in the main text and the empirical performance of BALANS. The algorithmic components used in practice follow directly from the theoretical guarantees, and the assumptions in our analysis are themselves grounded in empirical observations from biological data.

\paragraph{Adaptive sampling is derived directly from the theory.}
The theoretical results in Section~2.3 (Theorem~1) establish that the adaptive sampling scheme provides an efficient and near-optimal coverage of the underlying affinity matrix. This guarantee motivates the design of the sampling procedure used in BALANS: the empirical algorithm in Algorithm~1 (Section~3) implements precisely the adaptive strategy justified by the theory. Additional implementation details are provided in Supplementary Section~C.2.1. Thus, the sampling mechanism used in practice is not heuristic, but rather a direct instantiation of the theoretically optimal procedure.

\section{Theoretical guarantees justify the empirical approximation of the batch-free affinity matrix.}
The theory further shows that, under Assumption~2, the affinity matrix recovered by the BALANS procedure approximates the true batch-free affinity matrix with high probability. This result motivates the use of the recovered matrix for empirical batch correction: the corrected embeddings produced by BALANS are theoretically grounded approximations of the ideal, batch-free structure. The empirical evaluations demonstrate that this approximation leads to improved batch mixing while preserving biological signal.

\paragraph{The assumptions are motivated and validated by empirical observations.}
Importantly, Assumption~2 is not arbitrary but is directly informed by empirical observations on positive controls. As described in lines~187--191 of the main text, across multiple perturbations and plates we observe that within-cluster distances remain stable across batches, whereas batch effects manifest primarily as smooth extrinsic distortions. This empirical behavior motivates the modelling assumption used in the theory and supports its relevance for biological single-cell and image-based datasets.

\paragraph{Summary.}
The theoretical analysis informs the design of the BALANS algorithm, the algorithm implements the theoretically optimal components, and the assumptions underlying the theory are motivated and validated by empirical observations. Together, these connections ensure that the theoretical and empirical components of BALANS are tightly aligned.

\section{On the removal of the computation of an inverse}

In our framework, removing the Moore–Penrose pseudoinverse yields an approximation that is theoretically justified under certain idealized conditions—specifically when noise is negligible and when clusters exhibit relatively uniform sizes and affinity structure. 
In such settings, the affine operator is well-conditioned and the pseudoinverse contributes little beyond numerical refinement. We have added a short discussion in the Appendix outlining these conditions explicitly, as well as scenarios where the approximation may weaken, such as in the presence of strongly imbalanced clusters or substantial heterogeneity in affinities. This clarification makes explicit the regimes in which the simplified operator remains faithful to the original formulation.